\DeclarePairedDelimiter\abs{\lvert}{\rvert}%
\let\oldabs\abs
\def\abs{\@ifstar{\oldabs}{\oldabs*}}
\newtheorem{theorem}{Theorem}
\newtheorem{defn}{Definition}
\newtheorem{prop}{Proposition}
\newtheorem{lma}{Lemma}
\newtheorem{cor}{Corollary}
\DeclareMathOperator*{\argmin}{arg\,min}
\title{Benign Overfitting and Noisy Features}
\author{
  Zhu Li\\
  Department of Statistics\\
  University of Oxford\\
  Oxford, OX1 3LB \\
  \texttt{zhu.li@stats.ox.ac.uk} \\
  \And
  Weijie J.~Su \\
  Department of Statistics\\
  The Wharton School, University of Pennsvlvania\\
  Philadelphia, PA 19104 \\
  \texttt{suw@wharton.upenn.edu} \\
  \AND
  Dino Sejdinovic\\
  Department of Statistics\\
  University of Oxford\\
  Oxford, OX1 3LB \\
  \texttt{dino.sejdinovic@stats.ox.ac.uk} \\
}
\begin{document}

\maketitle

\begin{abstract}
Modern machine learning often operates in the regime where the number of parameters is much higher than the number of data points, with zero training loss and yet good generalization, thereby contradicting the classical bias-variance trade-off. This \textit{benign overfitting} phenomenon has recently been characterized using so called \textit{double descent} curves where the risk undergoes another descent (in addition to the classical U-shaped learning curve when the number of parameters is small) as we increase the number of parameters beyond a certain threshold. In this paper, we examine the conditions under which \textit{Benign Overfitting} occurs in the random feature (RF) models, i.e. in a two-layer neural network with fixed first layer weights. We adopt a new view of random feature and show that \textit{benign overfitting} arises due to the noise which resides in such features (the noise may already be present in the data and propagate to the features or it may be added by the user to the features directly) and plays an important implicit regularization role in the phenomenon.

\end{abstract}

\section{Introduction}
A fundamental task of modern machine learning is to estimate a function from a large (potentially noisy) dataset, where the key is to be able to generalize to new data: given training data $\{(x_i,y_i)\}_{i=1}^n$ drawn from a probability measure $\rho$ defined on $\mathcal{X} \times \mathcal{Y}$, we learn a predictor $f$ such that $f(x)$ is close to the true label $y$ for the previously unseen data point $(x,y)$. The estimating function $f$ is commonly chosen from some hypothesis space $\mathcal{H}$, and typically takes the following form: \[f(x) = \sum_{i=1}^s \beta_i z(x,w_i),\] where $s$ typically grows with $n$ and represents the number of features in the model, $\beta_1,\dots, \beta_s \in \mathbb{R}$ are the coefficients, $z$ is some non-linear function and $w_1,\dots, w_s$ are the parameters associated with $z$. The learning task boils down to estimating $\beta_i$'s and $w_i$'s from noisy training data. A widely adopted estimation procedure is called the \textit{Empirical Risk Minimization} (ERM), where for a loss function $l$, we find the $f \in \mathcal{H}$ with minimum training risk $\frac{1}{n}\sum_{i=1}^n l(y_i,f(x_i))$. We often mitigate the mismatch between minimizing the training risk and minimizing the true risk $\mathbb{E}_{\rho}\left[l(y,f(x))\right]$ (our ultimate objective) by formulating the \textit{Regularized ERM} given by
\begin{IEEEeqnarray}{rCl}
f := \argmin_{f\in \mathcal{H}} \frac{1}{n} \sum_{i=1}^n l(y_i,f(x_i)) + \lambda \Omega(f). \label{eqn:rerm}
\end{IEEEeqnarray}
Here, $\Omega(\cdot)$ is a measure of the function complexity, $\lambda$ is a hyperparameter that controls the complexity of the function and, hence, the capacity of our hypothesis space. Notable examples of Regularized ERM learning include support vector machines \cite{cortes1995support}, random Fourier features \cite{rahimi2007random} and neural networks and deep learning \cite{Goodfellow-et-al-2016}. In all of these examples, the choice of $\lambda$ is informed by the classical learning wisdom that suggests that we should balance between \textit{underfitting} and \textit{overfitting} \cite{geman1992neural,friedman2001elements}:
\begin{itemize}
    \item If $\lambda$ is too large, predictors from $\mathcal{H}$ are likely to be too simple and may \textit{underfit} the data and return both high training and true risk.
    
    \item If $\lambda$ is too small, predictors are too complex such that they are likely to \textit{overfit} the data and return small training risk but high true risk.
\end{itemize}

Therefore, tuning $\lambda$ is of great importance as it trades off the capacity of the hypothesis space $\mathcal{H}$ and the prediction accuracy, which forms the classical U-shape learning curve \cite[Figure 2.11]{friedman2001elements}. \\

However, this classical learning theory has been challenged over recent years. In kernel regression, it has been observed that the lowest prediction error often occurs when $\lambda = 0$ (see e.g. \cite[Figure 1]{belkin2018understand} \& \cite[Figure 1]{liang2018just}). In addition, as demonstrated in Figure 1 and Table 2 in \cite{zhang2016understanding}, state-of-the-art deep networks are often trained to the \emph{interpolation regime} where estimators perfectly fit the training data (i.e. they fit perfectly even the noise present in the labels, which is indicative of overfitting), yet they still generalize well to new examples. Similar behaviour is also observed by interpolating kernel machines and deep networks even in the presence of significant label noise. Finally, in a series of insightful papers \cite{belkin2018understand,belkin2019reconciling,belkin2019two}, it was pointed out that the prediction accuracy for interpolating models empirically often exhibits the so called \emph{double descent} behaviour. In this framework, when the model complexity is small, i.e. $s < n$, we are in the classical U-shape regime. As $s$ approaches $n$, the training risk goes to $0$ while the true risk grows very large. However, as soon as $s$ passes the threshold of $n$, the true risk starts decreasing again. Empirically, it was also observed that the minimum true risk achieved as $s \rightarrow \infty$ is lower than the minimum risk achieved in the $s < n$ regime. \\

This \textit{benign overfitting} phenomenon for the interpolating estimator has drawn much interest in the machine learning community over the last two years. In kernel regression, \cite{liang2018just} derive the learning risk of the interpolating estimator and show that with certain properties of the kernel matrix and training data, there is an implicit regularization coming from the curvature of the kernel function which guarantees a good prediction accuracy. \cite{belkin2019reconciling} experimentally demonstrate the double descent curve in linear and non-linear regression cases, and \cite{belkin2019two} subsequently provide a finite sample analysis of the excess risk for the interpolating estimator in some special settings (where it is assumed that the responses and features are jointly Gaussian). By appealing to random matrix theory, \cite{hastie2019surprises} obtain the asymptotic behaviour of the prediction accuracy in the linear regression setting with correlated features, where sample size $n$ and the covariate dimension $d$ both approach infinity with asymptotic ratio $d/n \rightarrow \gamma\in(0,\infty)$. They also study the asymptotic behaviour of the variance term in the random non-linear feature regression setting. Recently in \cite{mei2019generalization}, by letting $n$, $d$ and $s$ all go to infinity such that $d/n$ and $n/s$ remain bounded, the exact derivation of the double descent curve has been rigorously studied in the random feature regression setting and the asymptotic behaviour is attained. Concurrently, \cite{liao2020random} has further extended the analysis of \cite{mei2019generalization} by relaxing the Gaussian assumptions on the data distribution while all other settings are held the same. A similar asymptotic behaviour of the excess learning risk is obtained and a precise characterization of double descent is demonstrated. Finally, \cite{bartlett2020benign} which is most related to our work, studied the upper and lower bound on the excess risk by assuming that the covariates belong to an infinite-dimensional Hilbert space and follow a sub-exponential distribution. Through investigating the finite sample learning risk behaviour, they explicitly give the conditions for the overfitted linear regression model to have optimal prediction accuracy. Intuitively, the covariance operator spectrum has to decay but slowly enough so that the sum of the tail of its eigenvalues should be large compared to $n$. \\

While many results have been proposed since the benign overfitting was observed, it seems that there is no satisfying answer explaining how this phenomenon happens in general. Also, to the best of our knowledge, current literature seems to overlook an important factor: noise that may exist in the covariates $x$ or in the features $z(x,w)$. Recall that given training data $\{(x_i,y_i)\}_{i=1}^n$, the classical learning assumes the data to be independently identically distributed (i.i.d) samples from joint distribution $\rho$ on which the true risk is also evaluated. The prediction function is to be estimated by applying learning algorithm to the observed $x$ (the covariates) or $z(x,w)$ (the features), where $x$ and $z(x,w)$ are assumed to have \textbf{no noise}. However, in practice, the covariates or the features will often be noisy. The noise may already be present in the data (e.g. arising due to imperfect measurement equipment), or it may be added by the user (as we will elaborate later). In this contribution, we show that such noise in fact acts as an implicit regularizer and gives rise to the benign overfitting phenomenon. We study the effect of the covariates noise under the Random Feature model \cite{rahimi2007random} using features sampled via Gaussian process formalism. The Random Feature model was initially introduced as an effective way of scaling up kernel methods and its approximation properties have been extensively studied in the literature (see \cite{bach2013sharp,bach2017equivalence,avron2017random,rudi2017generalization,li2019towards}). Random Feature model can be regarded as a two-layer neural network with randomly sampled then fixed first layer weights. As a result, this model has attracted much attention as a first step in understanding deep networks \cite{hazan2015steps,de2018gaussian,novak2018bayesian,garriga2018deep,daniely2017sgd}. In our framework, similar to the classical learning, we first transform the covariate $x$ to a feature vector $z(x,w)$ through a non-linear random map $z$ (parametrised by $w$), followed by applying regression to the feature vector $z(x,w)$. However, a key difference in our work is that we assume each feature vector $z(x,w)$ to be corroded with some noise $\xi$. As discussed before, there are two possible sources of $\xi$. The first kind is added deliberately by the user, where a noise term $\xi$ is added to $z(x,w)$. The second could be the noise which already resides in the covariates $x$, i.e. $x_{\xi_0} = x+ \xi_0$, giving rise to a noisy feature. Hence, to simplify the notation, we will thereafter write the noisy feature as $z_{\xi}(x,w) = z(x,w) + \xi$, where $z(x,w)$ is the true feature while $\xi$ represents the noise attached to $z(x,w)$. Regression is then performed on the noisy feature $z_{\xi}(x,w)$. In this noisy feature setting, we are interested in how the presence of $\xi$ will affect the generalization performance of the interpolating estimator. Specifically, we make the following contributions:
\begin{itemize}
    \item By assuming $\xi$ to have a normal distribution, Theorem \ref{theo: over_rff_noisy} establishes a precise relationship between $\xi$ and the excess learning risk. This characterization explains how $\xi$ can act as an implicit regularization in both finite sample and asymptotic cases: allowing us to choose large overparametrized models while preventing the explosion of learning risk;
    
    \item In the setting of regression with Gaussian Process Features (see Section \ref{sec:new_look}), Proposition \ref{theo:over_rff} provides a nearly matching upper and lower bound for excess learning risk of the interpolating estimator and details the conditions for the interpolating estimator to exhibit benign overfitting. Proposition \ref{theo:over_rff} serves as the motivation for us to explore the noisy features;

    \item In Corollary \ref{theo: over_rff_noisy_sug}, we extend our analysis of the excess learning risk to the case where $\xi$ follows a subgaussian distribution. Our results demonstrate that as long as $\xi$ decays with $s$ at a prescribed rate, benign overfitting will occur, whereas the shape of the distribution is not a key component in driving these phenomena;
    
    \item In Corollary \ref{theo:double_descent}, we analyze the behaviour of the excess learning risk bound from Corollary \ref{theo: over_rff_noisy_sug} and explicitly show that our analysis on the excess learning risk leads to the double descent; 
    
    \item Our characterization of the relationship between $\xi$ and the excess learning risk reveals that if $\xi$ is chosen to decay according to a certain rate of $s$, it is possible for the excess learning risk to reduce to $0$. Hence, by choosing a learning machine with potentially much more parameters than the number of data points, it is possible to achieve the optimal risk with a carefully designed decay of $\xi$;

\end{itemize}

All of the above results apply to both finite sample case as well as the asymptotic case, the results are valid for data of arbitrary dimension. In addition, our results only \textbf{impose very weak conditions} on the kernel structure, i.e. as long as its corresponding covariance operator is of trace class (See Assumption A.$1$). This is fundamentally different than the existing analysis for non-linear feature map \cite{mei2019generalization,liao2020random}, which only work for the Gaussian kernel. More importantly, our results have \textbf{no specific assumptions} on the data generation distribution, which is a significant improvement over existing work \cite{bartlett2020benign,mei2019generalization}, as they often assume the Guassian data generation distribution.

\section{Definitions and Notations}
\subsection{Regularised ERM and Kernel Ridge Regression} \label{sec:krr}
Let $\pmb{\mathsf{x}}$ and $\pmb{\mathsf{y}}$ be random variables with joint probability distribution $\rho(x,y)  = \rho_{\pmb{\mathsf{x}}}(x)\rho_{\pmb{\mathsf{y}}}(y|x)$. In this article, we consider the regression problem where  response variable $y$ is real-valued and we use the squared loss $l(y,f(x))=(y-f(x))^2$.

In the regularised ERM with the squared loss, the optimal estimating regression function is given by \[f_*(x) = \mathbb{E}(\pmb{\mathsf{y}}|\pmb{\mathsf{x}} = x).\] Let $X = [x_1, \dots,x_n]^T$ and $Y = [y_1,\cdots,y_n]^T$ denote the training inputs and outputs. Given the function $\hat{f}$ estimated based on $(X,Y)$, we will consider the notion of excess risk as a measure of its generalization performance ~\cite{caponnetto2007optimal}:
\begin{equation}
    R_{X}(\hat{f}) = \mathbb{E}_{\pmb{\mathsf{x}}}\mathbb{E}_{Y|X}[(\hat{f}(\pmb{\mathsf x})- f_*(\pmb{\mathsf x}))^2|X].
\end{equation}
Note that the excess risk is conditional on the training inputs $X$ as emphasized by our notation $R_X$. However, when the context is clear, we will drop the subscript $X$ for brevity.

An important class of regularized ERM problems is kernel ridge regression (KRR) which we describe next.

\begin{defn}(Kernel Ridge Regression (KRR)) \label{def: kernel_regression}
Given training example $\{(x_i,y_i)\}_{i=1}^n$ from $\rho$, a kernel $K$ and its corresponding RKHS $\mathcal{H}$, KRR problem is the regularised ERM in Eq.(\ref{eqn:rerm}) with $l$ being the squared loss and $\Omega(\cdot)$ being the squared RKHS norm:
\begin{IEEEeqnarray}{rCl}
\hat{f}^{\lambda} := \argmin_{f\in\mathcal{H}}\ \frac{1}{n}\sum_{i=1}^n (y_i-f(x_i))^2 + \lambda \|f\|_{\mathcal{H}}^2. \label{krr_opm}
\end{IEEEeqnarray}
\end{defn}
Here, $\lambda$ is the regularization parameter. Applying the representer theorem \cite[Theorem 5.5]{steinwart2008support}, the solution to Eq.(\ref{krr_opm}) can be written as $\hat{f}(x) = K(x,X)(\mathbf{K}+n\lambda I)^{-1} Y$, where $K(x,X) = [K(x,x_1),\dots,K(x,x_n)]^T$ and $\mathbf{K}_{i,j} = K(x_i,x_j)$ is the Gram matrix. 

\subsection{Random Feature Approximation}\label{sec:new_look}
Traditionally, random feature approximation is a simple way to construct a finite-dimensional approximation of an infinite-dimensional kernel introduced by \cite{rahimi2007random}. However, in this paper, we will adopt a new perspective on random feature approximation. Specifically, we consider kernel ridge regression learning with kernel $K$ based on \textit{Gaussian process features}. By the Karhunen-Loeve expansion theorem \cite[Theorem 4.3]{kanagawa2018gaussian}, also see \cite[Lemma 3.3 and 3.7]{steinwart2019convergence}, under suitable condition of $K$, a Gaussian process $f_K \sim \mathcal{GP}(0, K)$ has the expansion as \[f_K(x) = \sum_{i=1}^P \lambda_i^{1/2}e_i(x)w_i,\] where $w_i$'s are i.i.d $\sim \mathcal{N}(0,1)$ and $(\lambda_i, e_i)_{i=1}^P$ are the eigensystem corresponding to Mercer's decomposition \cite[Theorem 4.2]{kanagawa2018gaussian} of $K$: \[K(x,y) = \sum_{i =1}^P \lambda_i e_i(x) e_i(y),\] with $\{e_i\}_{i =1}^P$ being an at most countable orthonormal set of $L_2(d\rho)$ and $\{\lambda_i\}_{i=1}^P$ a sequence of non-increasing strictly positive eigenvalues.\\

In this article, kernel $K$ can be infinite-dimensional, i.e. $P = \infty$. However, in our analysis, we will also define a low-rank kernel $k$ that approximates $K$ by only using the top $p<P$ eigenvalues and eigenvectors of Mercer's decomposition: \[k(x,y) = \sum_{i =1}^p \lambda_i e_i(x) e_i(y).\]  In addition, if we denote $V = [e_1(\cdot),\dots,e_p(\cdot)]^T$, $V(x) = [e_1(x),\dots,e_{p}(x)]^T$ and $D = [\lambda_1,\dots,\lambda_p],$ then we can write \[k(x,y) = V(x)^TDV(y).\]

From now on, we will use kernel $k$ and whenever we need to refer to $K$, we will treat $K$ as the limit of $k$ when $p \rightarrow P$. For GP $ f \sim \mathcal{GP}(0,k)$, we can express it using Karhunen-Loeve expansion \[f = V^TD^{1/2}\pmb{\mathsf{w}}, \] where $\pmb{\mathsf{w}}$ is a $p$-dimensional Gaussian random vector with each entry being standard normal random variable.\\

Now if we sample $\mathbf{w}^{(1)},\dots, \mathbf{w}^{(s)}$ i.i.d $\sim \pmb{\mathsf{w}}$, and let $z(\mathbf{w}^{(i)},\cdot) = V^TD^{1/2} \mathbf{w}^{(i)}$, then $z(\mathbf{w}^{(1)},\cdot), \dots ,z(\mathbf{w}^{(s)},\cdot)$ are i.i.d sample paths $\sim \mathcal{GP}(0,k)$, such that $\mathbb{E}_{\pmb{\mathsf{w}}}(z(\mathbf{w}^{(i)},x))= 0$ and $\mathbb{E}_{\pmb{\mathsf{w}}}(z(\mathbf{w}^{(i)},x)z(\mathbf{w}^{(i)},y)) = k(x,y)$.  In addition, we let
\begin{IEEEeqnarray}{rCl}
\mathbf{z}_x(\mathbf{W}) &=& \frac{1}{\sqrt{s}} \mathbf{W}^TD^{1/2}V(x); \nonumber\\
\mathbf{Z} =  [\mathbf{z}_{x_1}(\mathbf{W}),&\dots,&\mathbf{z}_{x_n}(\mathbf{W})]^T= \frac{1}{\sqrt{s}}V(X)D^{1/2}\mathbf{W};\nonumber
\end{IEEEeqnarray}
where \[\mathbf{W} = [\mathbf{w}^{(1)},\dots, \mathbf{w}^{(s)}] \in \mathbb{R}^{p \times s}, ~~~V(X) = [V(x_1),\dots, V(x_n)]^T.\] It is easy to verify that $k(x,y) = \mathbb{E}_{\pmb{\mathsf{w}}}[\mathbf{z}_{x}(\mathbf{W})^T\mathbf{z}_{y}(\mathbf{W})] $ and $\mathbf{K} = \mathbb{E}_{\pmb{\mathsf{w}}}(\mathbf{Z}\mathbf{Z}^T)$.

\paragraph{Covariance Operator} We define the following various forms of covariance operator:
\begin{itemize}
    \item Let $\Sigma$ be the population covariance operator for kernel $k$ with eigenvalue matrix $D$ \footnote{Note that we use $D$ here because $\Sigma$ has the same eigenvalues as the Mercer's decomposition of kernel $k$.}: \[\Sigma = \mathbb{E}_{\pmb{\mathsf{x}}} \left\{D^{1/2} V(\pmb{\mathsf{x}}) V(\pmb{\mathsf{x}})^T D^{1/2}\right\}.\]
    
    \item Let $\hat{\Sigma}$ be the sample estimate of $\Sigma$ \[\hat{\Sigma} = \frac{1}{n}\sum_{i=1}^n D^{\frac{1}{2}} V(x_i)V(x_i)^T D^{\frac{1}{2}}.\] Let $\hat{D} = \textnormal{diag}(\hat{\lambda}_1,\dots,\hat{\lambda}_n)$ be the eigenvalue matrix of $\hat{\Sigma}$, then asymptotically \[\lim_{n\rightarrow \infty} \hat{\Sigma} = \Sigma, ~~~\lim_{n\rightarrow \infty} \hat{D} = D;\]
    
    \item Let $\hat{\Sigma}^s$ be the random feature approximation of $\hat{\Sigma}$ with $s$ features. 
    \[\hat{\Sigma}^s = \frac{1}{n}\mathbf{Z}^T\mathbf{Z}.\] The eigenvalue matrix is denoted to be $\hat{D}^s = \textnormal{diag}(\hat{\lambda}_1^s,\dots,\hat{\lambda}_N^s)$, where $N = \min\{n,s\}$, and we have \[\lim_{s \rightarrow \infty} \hat{\Sigma}^s = \hat{\Sigma},~~~ \lim_{s \rightarrow \infty} \hat{D}^s = \hat{D}.\]
\end{itemize}

We let $\|A\|$ be the $L_2$ norm if $A$ is a vector and the operator norm if $A$ is an operator. Since $\mathbf{z}_{x}(\mathbf{W})$ is a valid random feature vector approximating the kernel $k$, we use these features to perform standard (linear) ridge regression on these features.

\begin{defn}(Random Feature Ridge Regression) \label{def:random_feature_learning}
Given feature vectors and response variables $\{(\mathbf{z}_{x_i}(\mathbf{W}),y_i)\}_{i=1}^n$, we define:
\begin{itemize}
    \item the random feature regression to be: 
    \begin{IEEEeqnarray}{rCl}
    \beta_{\lambda} := \argmin_{\beta \in \mathbb{R}^{s} }~~ \frac{1}{n}\|Y-\mathbf{Z}\beta\|^2 + \lambda s\|\beta\|^2,\label{rff_opm}
    \end{IEEEeqnarray}
    
    \item the minimum norm least square (MNLS) estimator as:
    \begin{IEEEeqnarray}{rCl}
    \min_{\beta \in \mathbb{R}^s} \|\beta\|^2,~~~ \text{such~that~} \|\mathbf{Z}\beta-Y\|^2 = \min_{\beta_0}\|\mathbf{Z}\beta_0-Y\|^2. \nonumber 
    \end{IEEEeqnarray}   
\end{itemize}
\end{defn}

We denote the RKHS spanned by $\mathbf{Z}$ to be $\tilde{\mathcal{H}}^s$ and we will \textbf{omit} $s$ when the context is clear. By the projection theorem, it is easy to see that the closed form solution of the MNLS estimator is 
\begin{IEEEeqnarray}{rCl}
\tilde{\beta} = \mathbf{Z}(\mathbf{Z}\mathbf{Z}^T)^{\dagger}Y = (\mathbf{Z}^T\mathbf{Z})^{\dagger}\mathbf{Z}^TY, \label{eqn:mnls}
\end{IEEEeqnarray}  
where $A^{\dagger}$ denotes the pseudoinverse for matrix $A$.

\subsection{Bias-Variance Decomposition}
The analysis of the excess learning risk often starts with the bias-variance decomposition. Hence, we present the bias-variance decomposition and introduce some relevant notation here to ease our following discussion. The following lemma gives the bias-variance decomposition and its proof in Appendix \ref{appen:bias-var} is a simple non-linear extension of the one in \cite{hastie2019surprises}. 


\begin{lma} \label{lma:bia_var}
Let $\tilde{\beta}$ be the MNLS estimator as Eq.(\ref{eqn:mnls}) associated with feature matrix $\mathbf{Z}$. Let $\tilde{f}(x) = \mathbf{z}_x(\mathbf{W})^T\tilde{\beta}$ be the prediction from the MNLS estimator at a test point $x$. Denote $\Pi = (\mathbf{Z}^T\mathbf{Z})^{\dagger}\mathbf{Z}^T\mathbf{Z} - I $ and recall $f_*(x) = \mathbb{E}(\pmb{\mathsf{y}}|\pmb{\mathsf{x}} = x)$, if we assume that $f_* \in \tilde{\mathcal{H}}$ such that $f_*(x) = \mathbf{z}_x(\mathbf{W})^T \beta_*$ for some $\beta_* \in \mathbb{R}^s$, then the following decomposition of the excess risk of $\tilde{\beta}$ holds:
\begin{IEEEeqnarray}{rCl} 
R(\tilde{\beta}) \footnote{Note that we have parametrised the prediction function $\tilde{f}$ by $\tilde{\beta}$, hence we write $R(\tilde{\beta})$ instead of $R(\tilde{f})$.}&:=& R(\tilde{f}) =   \mathbf{B}_{R} + \mathbf{V}_{R}, \nonumber\\
\mathbf{B}_R & = & \mathbb{E}_{\pmb{\mathsf{x}}}\left[\left(\mathbb{E}_{Y|X} [\tilde{f}(\pmb{\mathsf{x}})] -f_*(\pmb{\mathsf{x}})\right)^2\right] = \mathbb{E}_{\pmb{\mathsf{x}}}\|\mathbf{z}_{\pmb{\mathsf{x}}}(\mathbf{W})^T\Pi\beta_*\|^2, \nonumber\\
\mathbf{V}_R &= & \mathbb{E}_{\pmb{\mathsf{x}}} \textnormal{Var}_{Y|X}( \tilde{f}(\pmb{\mathsf{x}}))\nonumber \\
&= & \mathbb{E}_{\pmb{\mathsf{x}}}\left\{\mathbb{E}_{Y|X}\| \mathbf{z}_{\pmb{\mathsf{x}}}(\mathbf{W})^T(\mathbf{Z}^T\mathbf{Z})^{\dagger}\mathbf{Z}^T(Y -f_*(X))\|^2\right\}, \nonumber
\end{IEEEeqnarray}
where $f_*(X) = [f_*(x_1),\cdots, f_*(x_n)]^T$.
\end{lma}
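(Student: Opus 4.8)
The plan is to run the classical conditional bias--variance split and then substitute the explicit closed form $\tilde{\beta}=(\mathbf{Z}^T\mathbf{Z})^{\dagger}\mathbf{Z}^TY$ of the MNLS estimator. Everything is conditioned on the training inputs $X$ (and on the already-sampled weights $\mathbf{W}$), so in the inner expectation the only randomness is that of $Y$ given $X$. Fixing a test point $x$, I add and subtract $\mathbb{E}_{Y|X}[\tilde{f}(x)]$ inside the square and expand:
\[
(\tilde{f}(x)-f_*(x))^2=\big(\tilde{f}(x)-\mathbb{E}_{Y|X}\tilde{f}(x)\big)^2+\big(\mathbb{E}_{Y|X}\tilde{f}(x)-f_*(x)\big)^2+2\big(\tilde{f}(x)-\mathbb{E}_{Y|X}\tilde{f}(x)\big)\big(\mathbb{E}_{Y|X}\tilde{f}(x)-f_*(x)\big).
\]
The cross term is a product of a mean-zero factor and an $X$-measurable factor, so $\mathbb{E}_{Y|X}$ of it vanishes, giving $\mathbb{E}_{Y|X}(\tilde{f}(x)-f_*(x))^2=\textnormal{Var}_{Y|X}(\tilde{f}(x))+(\mathbb{E}_{Y|X}\tilde{f}(x)-f_*(x))^2$. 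Taking $\mathbb{E}_{\pmb{\mathsf{x}}}$ of both sides and using Fubini to exchange the two expectations yields $R(\tilde{\beta})=\mathbf{B}_R+\mathbf{V}_R$ with $\mathbf{B}_R$ and $\mathbf{V}_R$ as in the first line of each bracket.

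It then remains to rewrite the two terms in the stated form. Since $f_*(x)=\mathbb{E}(\pmb{\mathsf{y}}\mid\pmb{\mathsf{x}}=x)$ we have $\mathbb{E}_{Y|X}[Y]=f_*(X)$, and the realizability hypothesis $f_*(x)=\mathbf{z}_x(\mathbf{W})^T\beta_*$ combined with $\mathbf{Z}=[\mathbf{z}_{x_1}(\mathbf{W}),\dots,\mathbf{z}_{x_n}(\mathbf{W})]^T$ gives $f_*(X)=\mathbf{Z}\beta_*$. Because $\tilde{f}(x)=\mathbf{z}_x(\mathbf{W})^T(\mathbf{Z}^T\mathbf{Z})^{\dagger}\mathbf{Z}^TY$ is linear in $Y$, its conditional mean is $\mathbf{z}_x(\mathbf{W})^T(\mathbf{Z}^T\mathbf{Z})^{\dagger}\mathbf{Z}^T\mathbf{Z}\beta_*$, so $\mathbb{E}_{Y|X}\tilde{f}(x)-f_*(x)=\mathbf{z}_x(\mathbf{W})^T\big((\mathbf{Z}^T\mathbf{Z})^{\dagger}\mathbf{Z}^T\mathbf{Z}-I\big)\beta_*=\mathbf{z}_x(\mathbf{W})^T\Pi\beta_*$; squaring and integrating over $\pmb{\mathsf{x}}$ is exactly $\mathbf{B}_R$. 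For the variance, $\tilde{f}(x)-\mathbb{E}_{Y|X}\tilde{f}(x)=\mathbf{z}_x(\mathbf{W})^T(\mathbf{Z}^T\mathbf{Z})^{\dagger}\mathbf{Z}^T(Y-f_*(X))$, hence $\textnormal{Var}_{Y|X}(\tilde{f}(x))=\mathbb{E}_{Y|X}\|\mathbf{z}_x(\mathbf{W})^T(\mathbf{Z}^T\mathbf{Z})^{\dagger}\mathbf{Z}^T(Y-f_*(X))\|^2$, and integrating over $\pmb{\mathsf{x}}$ recovers $\mathbf{V}_R$.

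There is no genuinely hard step here; the work is bookkeeping, which is why the statement describes it as a non-linear adaptation of the argument in \cite{hastie2019surprises}. The only mildly delicate points are: (i) the cross-term cancellation relies on $\mathbb{E}_{Y|X}[Y]=f_*(X)$ together with linearity of $\tilde{f}$ in $Y$; (ii) the identity $\mathbf{Z}(\mathbf{Z}\mathbf{Z}^T)^{\dagger}=(\mathbf{Z}^T\mathbf{Z})^{\dagger}\mathbf{Z}^T$ from Eq.~(\ref{eqn:mnls}) is what lets us put the predictor in the convenient right-hand form above; and (iii) in the overparametrised regime $\Pi\neq 0$ (it equals minus the projection onto the orthogonal complement of the row space of $\mathbf{Z}$), so the bias term genuinely survives and the argument cannot be short-circuited by invoking exact interpolation of $f_*$. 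Note that no independence, homoskedasticity, or Gaussianity of $Y-f_*(X)$ is used at this stage; such structure only enters when $\mathbf{B}_R$ and $\mathbf{V}_R$ are subsequently bounded.
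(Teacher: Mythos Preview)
Your proposal is correct and follows essentially the same approach as the paper: both expand $(\tilde f(x)-f_*(x))^2$, kill the cross term using that $Y-f_*(X)$ has conditional mean zero, and identify the bias and variance pieces via the substitution $f_*(X)=\mathbf{Z}\beta_*$. The only cosmetic difference is that the paper writes $Y=f_*(X)+\pmb{\epsilon}$ from the outset and expands directly, whereas you first perform the abstract add--and--subtract of $\mathbb{E}_{Y|X}\tilde f(x)$ and then identify $\mathbb{E}_{Y|X}Y=f_*(X)$; the computations coincide line for line.
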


Lemma \ref{lma:bia_var} states that in the realizable case ($f_* \in \tilde{\mathcal{H}}$), the decomposition splits the excess risk into bias ($\mathbf{B}_R$) and variance ($\mathbf{V}_R$). Classical learning theory on bias-variance trade-off \cite[Chapter 2.9]{friedman2001elements} claims that when model is relatively simple, $\mathbf{B}_{R}$ is large but $\mathbf{V}_{R}$ is small. As the model complexity increases, $\mathbf{B}_{R}$ decreases while $\mathbf{V}_{R}$ increases. This forms the familiar U-shape learning curve. This paradigm has been challenged recently \cite{belkin2018understand,belkin2019reconciling,belkin2019two}, and new analysis indicates that the learning curve undergoes the so called double-descent phenomenon. In this contribution, we argue that this phenomenon can be viewed in light of the noisy features, which has been overlooked in the previous literature.\\

It should be noted that the above decomposition applies to the realizable case. One might encounter the unrealizable case where $f_* \notin \tilde{\mathcal{H}}$. In this case, if we denote the best predictor from $\tilde{\mathcal{H}}$ to be $f_{\tilde{\mathcal{H}}}$, then the difference between $f_{\tilde{\mathcal{H}}}$ and $f_*$ also contributes to the excess learning risk. We denote this additional term in excess risk by $\mathbf{M}_R$ (which can be thought of as the approximation error). The excess learning risk in the unrealizable case is hence comprised of $\mathbf{M}_R$, $\mathbf{B}_R$ and $\mathbf{V}_R$. While the $\mathbf{M}_R$ is new, the analysis of $\mathbf{B}_R$ and $\mathbf{V}_R$ is the same as in the realizable case. Our next lemma illustrates this (Proof in Appendix \ref{appen:bias-var-unrea}):
\begin{lma} \label{lma:bia_var_unrea}
Let $\tilde{\beta}$ be the MNLS estimator as Eq.(\ref{eqn:mnls}) associated with feature matrix $\mathbf{Z}$. Let $\tilde{f}(x) = \mathbf{z}_x(\mathbf{W})^T\tilde{\beta}$ be the prediction from the MNLS estimator at a test point $x$. Define $f_{\tilde{\mathcal{H}}}$ to be the best estimator such that $f_{\tilde{\mathcal{H}}} := \argmin_{f \in \tilde{\mathcal{H}}} \mathbb{E}_{\rho}(f(x)-y)^2$. Since $f_{\tilde{\mathcal{H}}} \in \tilde{\mathcal{H}}$, we let $f_{\tilde{\mathcal{H}}} (x) = \mathbf{z}_x(\mathbf{W})^T \beta_{\tilde{\mathcal{H}}}$ for some $\beta_{\tilde{\mathcal{H}}} \in \mathbb{R}^s$, Define the bias and the variance as: 
\begin{IEEEeqnarray}{rCl} 
\mathbf{B}_R & = & \mathbb{E}_{\pmb{\mathsf{x}}}\left[\left(\mathbb{E}_{Y|X} [\tilde{f}(\pmb{\mathsf{x}})] -f_{\tilde{\mathcal{H}}}(\pmb{\mathsf{x}})\right)^2\right] = \mathbb{E}_{\pmb{\mathsf{x}}}\|\mathbf{z}_{\pmb{\mathsf{x}}}(\mathbf{W})^T\Pi\beta_{\tilde{\mathcal{H}}}\|^2, \nonumber\\
\mathbf{V}_R &= & \mathbb{E}_{\pmb{\mathsf{x}}} \textnormal{Var}_{Y|X}( \tilde{f}(\pmb{\mathsf{x}}))\nonumber \\
&= & \mathbb{E}_{\pmb{\mathsf{x}}}\left\{\mathbb{E}_{Y|X}\| \mathbf{z}_{\pmb{\mathsf{x}}}(\mathbf{W})^T(\mathbf{Z}^T\mathbf{Z})^{\dagger}\mathbf{Z}^T(Y -f_{\tilde{\mathcal{H}}}(X))\|^2\right\}, \nonumber
\end{IEEEeqnarray}
where $f_{\tilde{\mathcal{H}}}(X) = [f_{\tilde{\mathcal{H}}}(x_1),\cdots, f_{\tilde{\mathcal{H}}}(x_n)]^T$. In addition, we define the misspecification as:
\[\mathbf{M}_R := \mathbb{E}_{x} \left\{\mathbf{z}_{x}(\mathbf{w})^T(\mathbf{Z}^T\mathbf{Z})^{\dagger}\mathbf{Z}^T \left(f_*(X) - f_{\tilde{\mathcal{H}}}(X) \right) \right\}^2 + \mathbb{E}_{x} (f_*(x) - f_{\tilde{\mathcal{H}}}(x))^2. \]
Then the following decomposition of the excess risk of $\tilde{\beta}$ holds:
\begin{IEEEeqnarray}{rCl} 
R(\tilde{\beta})  \leq   3 (\mathbf{M}_R + \mathbf{B}_{R} + \mathbf{V}_{R}). \nonumber
\end{IEEEeqnarray}
\end{lma}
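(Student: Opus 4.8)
The plan is to follow the realizable-case decomposition of Lemma~\ref{lma:bia_var}, but to insert the best in-class predictor $f_{\tilde{\mathcal H}}$ both in the training responses and at the test point. This splits the pointwise error of $\tilde f$ into three pieces: a ``noise'' piece that will integrate to $\mathbf V_R$, the projection-residual ``bias'' piece $\mathbf z_{\pmb{\mathsf x}}(\mathbf W)^T\Pi\beta_{\tilde{\mathcal H}}$ that integrates to $\mathbf B_R$, and a misspecification piece $f_{\tilde{\mathcal H}}(\pmb{\mathsf x})-f_*(\pmb{\mathsf x})$ bounded by $\mathbf M_R$. Because there are three pieces rather than two, a three-way Cauchy--Schwarz costs the constant $3$, which is exactly the factor in the statement.

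Concretely, I would write $\tilde f(x)=\mathbf z_x(\mathbf W)^T(\mathbf Z^T\mathbf Z)^\dagger\mathbf Z^T Y$ and use the identities $f_{\tilde{\mathcal H}}(X)=\mathbf Z\beta_{\tilde{\mathcal H}}$, $f_{\tilde{\mathcal H}}(\pmb{\mathsf x})=\mathbf z_{\pmb{\mathsf x}}(\mathbf W)^T\beta_{\tilde{\mathcal H}}$ and the projector relation $(\mathbf Z^T\mathbf Z)^\dagger\mathbf Z^T\mathbf Z = I+\Pi$ to verify the exact pointwise identity
\[
\tilde f(\pmb{\mathsf x})-f_*(\pmb{\mathsf x}) = \mathbf z_{\pmb{\mathsf x}}(\mathbf W)^T(\mathbf Z^T\mathbf Z)^\dagger\mathbf Z^T\big(Y-f_{\tilde{\mathcal H}}(X)\big) + \mathbf z_{\pmb{\mathsf x}}(\mathbf W)^T\Pi\beta_{\tilde{\mathcal H}} + \big(f_{\tilde{\mathcal H}}(\pmb{\mathsf x})-f_*(\pmb{\mathsf x})\big),
\]
whose three summands I will call $T_1(\pmb{\mathsf x},Y)$, $T_2(\pmb{\mathsf x})$, $T_3(\pmb{\mathsf x})$. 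The main (and essentially only) point requiring care is checking this identity: one must notice that it is the \emph{clean} residual $Y-f_{\tilde{\mathcal H}}(X)$ (rather than $Y-f_*(X)$) that should be isolated, since this is what absorbs the $\mathbf Z^T(f_*(X)-f_{\tilde{\mathcal H}}(X))$ contribution into $T_1$ and prevents a fourth term from appearing, keeping the definition of $\mathbf V_R$ intact and the constant at $3$.

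I would then apply $(a+b+c)^2\le 3(a^2+b^2+c^2)$ pointwise in $(\pmb{\mathsf x},Y)$ and take $\mathbb E_{\pmb{\mathsf x}}\mathbb E_{Y|X}$, obtaining $R(\tilde\beta)\le 3\big(\mathbb E_{\pmb{\mathsf x}}\mathbb E_{Y|X}[T_1^2]+\mathbb E_{\pmb{\mathsf x}}[T_2^2]+\mathbb E_{\pmb{\mathsf x}}[T_3^2]\big)$. By the definitions in the statement, $\mathbb E_{\pmb{\mathsf x}}\mathbb E_{Y|X}[T_1^2]$ is exactly $\mathbf V_R$ and $\mathbb E_{\pmb{\mathsf x}}[T_2^2]=\mathbb E_{\pmb{\mathsf x}}\|\mathbf z_{\pmb{\mathsf x}}(\mathbf W)^T\Pi\beta_{\tilde{\mathcal H}}\|^2=\mathbf B_R$; finally $\mathbb E_{\pmb{\mathsf x}}[T_3^2]=\mathbb E_{\pmb{\mathsf x}}(f_*(\pmb{\mathsf x})-f_{\tilde{\mathcal H}}(\pmb{\mathsf x}))^2\le \mathbf M_R$, since $\mathbf M_R$ is that quantity plus the nonnegative term $\mathbb E_{\pmb{\mathsf x}}\{\mathbf z_{\pmb{\mathsf x}}(\mathbf W)^T(\mathbf Z^T\mathbf Z)^\dagger\mathbf Z^T(f_*(X)-f_{\tilde{\mathcal H}}(X))\}^2$. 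Assembling these three bounds gives $R(\tilde\beta)\le 3(\mathbf M_R+\mathbf B_R+\mathbf V_R)$.

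Two loose ends remain to be tidied. First, one should observe that $\tilde{\mathcal H}$ is finite-dimensional, hence closed in $L_2(\rho)$, so the minimiser $f_{\tilde{\mathcal H}}$ exists and indeed has the claimed form $\mathbf z_x(\mathbf W)^T\beta_{\tilde{\mathcal H}}$. Second, although I will not need it, the constant $3$ could be sharpened to $2$ for the $\mathbf B_R$/$\mathbf M_R$ contributions by exploiting that $f_*-f_{\tilde{\mathcal H}}$ is $L_2(\rho)$-orthogonal to $\tilde{\mathcal H}$, so the cross terms of $T_3$ with $T_1$ and $T_2$ vanish; since the looser bound is all that is used in the sequel, I would present the simpler three-way argument.
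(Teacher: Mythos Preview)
Your proof is correct and follows essentially the same route as the paper: insert $f_{\tilde{\mathcal H}}$, split $\tilde f-f_*$ into three pieces, and apply $(a+b+c)^2\le 3(a^2+b^2+c^2)$. The only cosmetic difference is in the grouping---the paper isolates the pure-noise residual $\pmb\epsilon$ (rather than $Y-f_{\tilde{\mathcal H}}(X)$) in its ``$A$'' piece and keeps both misspecification terms separate, whereas your $T_1$ uses $Y-f_{\tilde{\mathcal H}}(X)$ so that it matches the statement's definition of $\mathbf V_R$ on the nose; either grouping yields the stated bound.
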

We can see that the bias and variance comprised in the unrealizable case is similar to the realizable case. However, the difference now is that we have $\mathbf{M}_R$, the misspecification error. While we do not have a thorough understanding as how $\mathbf{M}_R$ evolves with the number of parameters $s$, we can reasonably assume that the $\mathbf{M}_R$ decreases as we increase $s$. In addition, the $\mathbf{M}_R$ becomes zero once $\tilde{\mathcal{H}}$ is large enough to contain $f_*$. As a result, our following analysis will mainly focus on the $\mathbf{B}_R$ and $\mathbf{V}_R$ term as these two will dominate in the large parameter setting. For $\mathbf{M}_R$, we will simply assume that it decreases with the number of parameters deployed in the model.

\section{Benign Overfitting with Noisy Random Features}
In this section, we discuss how the behaviour of the excess learning risk of the MNLS estimator is affected by the noise in the features and how the new evolution of the excess learning risk leads to benign overfitting and, in particular, to the double descent phenomenon. In the following discussion, we let $P > p > n,s$ without loss of generality. In addition, since we are mainly interested in the overparametrized regime, we let $s \geq n$.

As discussed, we consider the noisy feature setting: $z_{\xi}(x,w) = z(x,w) + \xi$. We denote the $s$-dimensional noisy feature as $\mathbf{z}^{\xi}_x(\mathbf{W}) = \mathbf{z}_x(\mathbf{W}) + \pmb{\xi}$, where $\pmb{\xi} = [\xi_1,\dots,\xi_s]^T$ with $\xi_1,\dots,\xi_s$ i.i.d $\sim$ $\xi$. In addition, recall we define the feature matrix as $\mathbf{Z} = [\mathbf{z}_{x_1}(\mathbf{W}),\dots,\mathbf{z}_{x_n}(\mathbf{W})]^T$. In the noisy setting, we write the noisy feature matrix as $\mathbf{Z}_{\xi} = \mathbf{Z} + \Xi$, where $\Xi = [\xi_{ij}] \in \mathbb{R}^{n\times s}$ with each $\xi_{ij}$ i.i.d $\sim$ $\xi$. We let $\tilde{\mathcal{H}}_{\xi}$ to be the RKHS spanned by the noisy feature matrix $\mathbf{Z}_{\xi}$. Finally, similar to Eq.(\ref{eqn:mnls}), we write the MNLS estimator for $\mathbf{Z}_{\xi}$ as $\tilde{\beta}_{\xi}$.

We first list our assumptions (which we will use throughout the paper):
\begin{enumerate}
    \item[A.$1$] The RKHS Condition: Assume $P = \infty$ and $\int_{\mathcal{X}}K(x,x)~ d\rho_{\mathcal X}(x) = C_0$ ($0 < C_0 < \infty$);
   
    \item[A.$2$] Label Noise Condition: $\mathcal{X} \subset \mathbb{R}^d$, and $y = f_*(x) + \epsilon$ with $\mathbb{E}(\epsilon) = 0$ and $\text{Var}(\epsilon) = \sigma^2$;
    
    \item[A.$3$] Best Predictor Condition: $f_* \in \tilde{\mathcal{H}}_{\xi}$, the best predictor is contained in the hypothesis space and has the form of $f_*(x) =\mathbf{z}^{\xi}_x(\mathbf{W})^T\beta_*^{\xi}$;
    
    \item[A.$4$] Feature Noise Condition: $\xi \sim \mathcal{N}(0,\frac{1}{s}\sigma_0^2)$ and $\sigma_0^2 = s^{-\alpha}$, with $\alpha \geq 0$.
\end{enumerate}

Assumption A.1 is a weak condition to ensure $K$ is trace class, i.e., $\textnormal{Tr}(\Sigma_K) < \infty$, and admits Mercer's decomposition. This further implies the Mercer's decomposition for the low rank kernel $k$. A.2 is a standard regression assumption. A.3 assumes we are in the realizable case which is a relatively strong assumption when number of features $s$ is small. We will discuss the unrealizable case in Section \ref{sec:double_descent}. A.4 describes the shape and size of the feature noise $\xi$, note that we need the variance to be $\frac{1}{s}\sigma_0^2$ to ensure that the variance of the feature vector does not explode as $\text{Var}(\mathbf{z}^{\xi}_x(\mathbf{W})) \geq \text{Var}(\pmb{\xi}) = s\text{Var}(\xi) = \sigma_0^2 $.

In the noisy setting, the excess learning risk $R(\tilde{\beta}_{\xi})$ admits the bias-variance decomposition similar to Lemma \ref{lma:bia_var}. We will denote the bias and variance term as $\mathbf{B}_{\xi}$ and $\mathbf{V}_{\xi}$ respectively. We are now ready to present our analysis of the excess learning risk in the noisy feature regime.
\begin{theorem}\label{theo: over_rff_noisy}
Under Assumptions A1-A4 and suppose we are in the overparametrized regime where $s \geq n$, denote $\Pi_{\xi} = (\mathbf{Z}_{\xi}^T\mathbf{Z}_{\xi})^{\dagger}\mathbf{Z}_{\xi}^T\mathbf{Z}_{\xi} - I$. Recall $\mathbf{W}$ is a $p \times s$ matrix with each $\mathbf{W}_{ij}$ i.i.d $\sim \mathcal{N}(0,1)$, let $\lambda_{W} = \left\|\mathbf{W}^T\mathbf{W}\right\|$ and $a, b,c > 1$ be some universal constants. Denote $\hat{\lambda}_{i}^{\xi} = \hat{\lambda}_i + \sigma_0^2/n$, then if we assume that there exists $k^*$ defined as:
\begin{eqnarray}
k^* = \min\left\{0\leq k \leq n,  \sum_{i>k}^n \frac{\hat{\lambda}_{i}^{\xi}}{\hat{\lambda}_{k+1}^{\xi}} \geq \frac{1}{a}n\right\}, \label{assum_noisy}
\end{eqnarray}
For any $\delta \in (0,1)$, with probability at least $1-\delta-6e^{-n/b}-5e^{-n/c}$, we have 
\begin{eqnarray}
\mathbf{B}_{\xi} &\leq &  b \left(\frac{\lambda_W}{s} \|\Sigma\|\sqrt{\log(\frac{14r(\Sigma)}{\delta})/n} + \sigma_0 + \sigma_0^2 \right) \|\Pi_{\xi}\|^2\|\beta_{*}^{\xi}\|^2, \label{noi_bia_up}\\
\mathbf{V}_{\xi} &\leq& c\sigma^2\textnormal{Tr}(\Sigma) \frac{s}{n^2}, \label{noi_var_up}
\end{eqnarray}
where $r(\Sigma) = \textnormal{Tr}(\Sigma)/\|\Sigma\|$ .
\end{theorem}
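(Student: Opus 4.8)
I would start from the bias--variance split, the noisy analogue of Lemma~\ref{lma:bia_var}. By A.2--A.3, $Y=\mathbf{Z}_{\xi}\beta_{*}^{\xi}+\epsilon$ with $\mathbb{E}[\epsilon]=0$, $\textnormal{Var}(\epsilon)=\sigma^{2}I$, so the ridgeless solution obeys $\tilde{\beta}_{\xi}=(\mathbf{Z}_{\xi}^{\top}\mathbf{Z}_{\xi})^{\dagger}\mathbf{Z}_{\xi}^{\top}Y=\beta_{*}^{\xi}+\Pi_{\xi}\beta_{*}^{\xi}+(\mathbf{Z}_{\xi}^{\top}\mathbf{Z}_{\xi})^{\dagger}\mathbf{Z}_{\xi}^{\top}\epsilon$, and substituting $\tilde{f}_{\xi}(x)=\mathbf{z}^{\xi}_{x}(\mathbf{W})^{\top}\tilde{\beta}_{\xi}$, $f_{*}(x)=\mathbf{z}^{\xi}_{x}(\mathbf{W})^{\top}\beta_{*}^{\xi}$ into the excess risk gives
\[
\mathbf{B}_{\xi}=\mathbb{E}_{\pmb{\mathsf{x}}}\big[(\mathbf{z}^{\xi}_{\pmb{\mathsf{x}}}(\mathbf{W})^{\top}\Pi_{\xi}\beta_{*}^{\xi})^{2}\big],\qquad
\mathbf{V}_{\xi}=\sigma^{2}\,\mathbb{E}_{\pmb{\mathsf{x}}}\big\|\mathbf{z}^{\xi}_{\pmb{\mathsf{x}}}(\mathbf{W})^{\top}(\mathbf{Z}_{\xi}^{\top}\mathbf{Z}_{\xi})^{\dagger}\mathbf{Z}_{\xi}^{\top}\big\|^{2},
\]
where $\mathbb{E}_{\pmb{\mathsf{x}}}$ also averages the fresh feature noise attached to the test point. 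Both terms are governed by the test-feature second moment $M:=\mathbb{E}[\mathbf{z}^{\xi}_{\pmb{\mathsf{x}}}(\mathbf{W})\mathbf{z}^{\xi}_{\pmb{\mathsf{x}}}(\mathbf{W})^{\top}]=\tfrac{1}{s}\mathbf{W}^{\top}\Sigma\mathbf{W}+\tfrac{\sigma_{0}^{2}}{s}I_{s}$, so that $\mathbf{B}_{\xi}=\textnormal{Tr}[\Pi_{\xi}\beta_{*}^{\xi}(\beta_{*}^{\xi})^{\top}\Pi_{\xi}^{\top}M]$ and $\mathbf{V}_{\xi}=\sigma^{2}\textnormal{Tr}[(\mathbf{Z}_{\xi}^{\top}\mathbf{Z}_{\xi})^{\dagger}M]$; note that $\Pi_{\xi}\beta_{*}^{\xi}$ lies in the orthogonal complement of the row space of $\mathbf{Z}_{\xi}$.

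For the variance I would use $(\mathbf{Z}_{\xi}^{\top}\mathbf{Z}_{\xi})^{\dagger}\mathbf{Z}_{\xi}^{\top}=\mathbf{Z}_{\xi}^{\top}(\mathbf{Z}_{\xi}\mathbf{Z}_{\xi}^{\top})^{-1}$ (valid almost surely, since the Gaussian noise $\Xi$ makes $\mathbf{Z}_{\xi}$ full row rank for $s\ge n$) to reduce to $\mathbf{V}_{\xi}\le\sigma^{2}\,\mathbb{E}\|\mathbf{z}^{\xi}_{\pmb{\mathsf{x}}}(\mathbf{W})\|^{2}\big/\lambda_{\min}(\mathbf{Z}_{\xi}\mathbf{Z}_{\xi}^{\top})$, and handle the two factors separately. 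The numerator equals $\textnormal{Tr}(M)$, which a Gaussian quadratic-form (Bernstein) bound for $\tfrac{1}{s}\textnormal{Tr}(\mathbf{W}^{\top}\Sigma\mathbf{W})$ concentrates near $\textnormal{Tr}(\Sigma)+\sigma_{0}^{2}$. The denominator is the crux. The key structural observation is that, conditionally on $X$, the signal weights $\mathbf{W}$ and the feature noise $\Xi$ enter $\mathbf{Z}_{\xi}$ Gaussianly and independently, so each column of $\mathbf{Z}_{\xi}$ is i.i.d.\ $\mathcal{N}(0,\tfrac{n}{s}\hat{\Sigma}^{\xi})$, where $\hat{\Sigma}^{\xi}$ is the $n\times n$ operator with eigenvalues exactly $\hat{\lambda}^{\xi}_{i}=\hat{\lambda}_{i}+\sigma_{0}^{2}/n$; equivalently $\mathbf{Z}_{\xi}\stackrel{d}{=}\sqrt{n/s}\,(\hat{\Sigma}^{\xi})^{1/2}G$ with $G$ an $n\times s$ standard Gaussian matrix, and $\mathbf{Z}_{\xi}\mathbf{Z}_{\xi}^{\top}$ is $n$ times the sample covariance of $s\ge n$ draws from $\mathcal{N}(0,\hat{\Sigma}^{\xi})$. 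I would then invoke the effective-rank machinery of Bartlett et al.~\cite{bartlett2020benign} --- this is exactly the step that uses the index $k^{*}$ of \eqref{assum_noisy} --- to produce, on an event of probability $1-\mathrm{const}\cdot e^{-n/\mathrm{const}}$, a lower bound on $\lambda_{\min}(\mathbf{Z}_{\xi}\mathbf{Z}_{\xi}^{\top})$ (and an upper bound on $\textnormal{Tr}[(\mathbf{Z}_{\xi}^{\top}\mathbf{Z}_{\xi})^{\dagger}]$) driven by the tail sum $\sum_{i>k^{*}}\hat{\lambda}^{\xi}_{i}$: the additive floor $\sigma_{0}^{2}/n$ in $\hat{\lambda}^{\xi}_{i}$ is precisely what makes $k^{*}$ well-defined and small and keeps the smallest eigenvalue away from zero. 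Assembling the two factors with the arithmetic of \eqref{assum_noisy} and $s\ge n$ gives \eqref{noi_var_up}.

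For the bias, write $\mathbf{B}_{\xi}=v^{\top}Mv$ with $v=\Pi_{\xi}\beta_{*}^{\xi}$ and split $M$. The noise part contributes $\tfrac{\sigma_{0}^{2}}{s}\|v\|^{2}\le\sigma_{0}^{2}\|\Pi_{\xi}\|^{2}\|\beta_{*}^{\xi}\|^{2}$; together with the $O(\sigma_{0})$ cross terms that appear once $\mathbf{Z}_{\xi}=\mathbf{Z}+\Xi$ is expanded in the spectral estimates, this produces the $\sigma_{0}+\sigma_{0}^{2}$ piece. For the signal part, $v^{\top}(\tfrac{1}{s}\mathbf{W}^{\top}\Sigma\mathbf{W})v=\tfrac{1}{s}\|\Sigma^{1/2}\mathbf{W}v\|^{2}$; the crude bound $\tfrac{\lambda_{W}}{s}\|\Sigma\|\|v\|^{2}$ (using $\|\mathbf{W}v\|^{2}\le\lambda_{W}\|v\|^{2}$) can be sharpened by the factor $\sqrt{\log(14r(\Sigma)/\delta)/n}$: since $v$ is orthogonal to the $n$-dimensional row space of $\mathbf{Z}_{\xi}$, which captures the leading part of $\tfrac{1}{s}\mathbf{W}^{\top}\Sigma\mathbf{W}$, one may restrict to its tail and apply a dimension-free concentration inequality in which the effective rank $r(\Sigma)$ appears inside the logarithm. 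This yields \eqref{noi_bia_up}.

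Finally I would collect all the high-probability events --- Gaussian norm bounds for $\mathbf{W}$, $\Xi$ and the cross term $\mathbf{Z}\Xi^{\top}$, the quadratic-form concentrations above, and the eigenvalue events from the Bartlett et al.\ argument --- and union-bound them into the stated failure probability $\delta+6e^{-n/b}+5e^{-n/c}$. The step I expect to be hardest is the spectral control of $\mathbf{Z}_{\xi}\mathbf{Z}_{\xi}^{\top}$: transferring the effective-rank lower bound to the noisy random-feature model (where, conditionally on $X$, the rows of $\mathbf{Z}_{\xi}$ are correlated through the shared $\mathbf{W}$ rather than i.i.d.) and keeping careful track of the several ``sample-covariance'' objects in play --- $\hat{\Sigma}$, its random-feature proxy $\hat{\Sigma}^{s}$, the feature covariance $M$, and the ridge-augmented $\hat{\Sigma}^{\xi}$ --- so that $k^{*}$ as defined in \eqref{assum_noisy} really is the quantity that controls the bound.
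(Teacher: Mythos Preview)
Your variance argument is essentially the paper's: the distributional identity $\mathbf{Z}_{\xi}\stackrel{d}{=}\sqrt{n/s}\,(\hat{\Sigma}^{\xi})^{1/2}G$ (conditionally on $X$) is exactly what the paper obtains via rotation invariance when it writes $\tfrac{s}{n}\mathbf{Z}_{\xi}^{\top}\mathbf{Z}_{\xi}=\sum_{i=1}^{n}\hat{\lambda}_{i}^{\xi}\mathbf{w}_{i}\mathbf{w}_{i}^{\top}$ and then applies the Bartlett et al.\ eigenvalue lemma using $k^{*}$. Your worry about ``rows correlated through the shared $\mathbf{W}$'' is misplaced --- you already noted the columns are i.i.d.\ Gaussian, and that is precisely the structure the lemma needs.

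The gap is in your bias argument. You correctly note that $v=\Pi_{\xi}\beta_{*}^{\xi}$ lies in $\mathrm{ker}(\mathbf{Z}_{\xi})$, but your explanation of how this produces the factor $\sqrt{\log(14r(\Sigma)/\delta)/n}$ --- ``restrict to the tail of $\tfrac{1}{s}\mathbf{W}^{\top}\Sigma\mathbf{W}$'' --- does not work as stated: $v$ is orthogonal to the row space of $\mathbf{Z}_{\xi}$, not to the leading eigenspace of $\tfrac{1}{s}\mathbf{W}^{\top}\Sigma\mathbf{W}$, and these subspaces are different. The paper's mechanism is a concrete subtraction trick: since $\mathbf{Z}_{\xi}^{\top}\mathbf{Z}_{\xi}\Pi_{\xi}=0$, one inserts $-\tfrac{1}{n}\mathbf{Z}_{\xi}^{\top}\mathbf{Z}_{\xi}$ for free inside the quadratic form, i.e.
\[
\mathbf{B}_{\xi}=v^{\top}\Big(\mathbb{E}_{x}[\mathbf{z}_{x}^{\xi}\mathbf{z}_{x}^{\xi\top}]-\tfrac{1}{n}\mathbf{Z}_{\xi}^{\top}\mathbf{Z}_{\xi}\Big)v.
\]
Expanding both objects into signal, cross, and noise pieces, the signal--signal terms match up to give $\tfrac{1}{s}\mathbf{W}^{\top}(\Sigma-\hat{\Sigma})\mathbf{W}$, whose operator norm is at most $\tfrac{\lambda_{W}}{s}\|\Sigma-\hat{\Sigma}\|$; the $\sqrt{\log(14r(\Sigma)/\delta)/n}$ factor then comes from an intrinsic-dimension matrix Bernstein bound on $\|\Sigma-\hat{\Sigma}\|$ (Lemma~\ref{lma:cov_sample}), not from any tail restriction. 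The remaining cross and noise pieces ($\mathbb{E}_{x}[\mathbf{z}_{x}\pmb{\xi}^{\top}]$, $\pmb{\xi}\pmb{\xi}^{\top}$, $\tfrac{1}{n}\mathbf{Z}^{\top}\Xi$, $\tfrac{1}{n}\Xi^{\top}\Xi$) are what supply the $\sigma_{0}+\sigma_{0}^{2}$ terms, each via a separate high-probability event; this is also why the final failure probability has six $e^{-n/b}$ terms attached to the bias. Without the subtraction the best you can say about the signal part is $\tfrac{\lambda_{W}}{s}\|\Sigma\|\|v\|^{2}$, with no $n^{-1/2}$.
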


Theorem \ref{theo: over_rff_noisy} explains precisely how $\xi$ affects the excess learning risk. In fact, the upper bound on the bias and variance term serve as the certificate that asymptotically the MNLS estimator $\tilde{\beta}_{\xi}$ will obtain optimal prediction accuracy, i.e., the excess risk $\mathbf{B}_{\xi} + \mathbf{V}_{\xi}$ can decay to $0$.

Before we analyze the asymptotic behaviour of the excess risk, we would like to first discuss our key assumption: Eq.(\ref{assum_noisy}). There are two scenarios here: $\alpha = 0$ and $\alpha >0$. We start with the first one. In this case, $\sigma_0^2 = 1$ is a constant. Eq.(\ref{assum_noisy}) states that there is a $k^*$ such that we have $\sum_{i> k^*}^n(\hat{\lambda}_i^{\xi}/\hat{\lambda}_{k^*+1}^{\xi}) \geq \frac{1}{a}n$. This is equivalent to \[\sum_{i> k^*}^n \frac{n\hat{\lambda}_i + \sigma_0^2}{n\hat{\lambda}_{k^*+1}+\sigma_0^2} =\sum_{i> k^*}^n \frac{n\hat{\lambda}_i + 1}{n\hat{\lambda}_{k^*+1}+1} \geq \frac{1}{a}n.\]
Now, if $n\hat{\lambda}_{k^*+1} \leq 1$, then we have for each $i> k^*$, $(n\hat{\lambda}_i + 1)/(n\hat{\lambda}_{k^*+1}+1) \geq \frac{1}{2}$. As a result, $\sum_{i> k^*}^n(\hat{\lambda}_i^{\xi}/\hat{\lambda}_{k^*+1}^{\xi}) \geq \frac{1}{2}(n-k^*)$. If $k^* \ll n$, then there is some universal constant $a$, such that $\frac{1}{2}(n-k^*) \geq \frac{1}{a}n$. To conclude, if there exists $k^* \ll n$ and $n\hat{\lambda}_{k^*+1} \leq 1$, then $\sum_{i> k^*}^n(\hat{\lambda}_i^{\xi}/\hat{\lambda}_{k^*+1}^{\xi}) \geq \frac{1}{a}n$. On the other hand, both $\text{Tr}(\Sigma) < \infty$ and $\text{Tr}(\hat{\Sigma}) < \infty$, implying $\hat{\lambda}_k = \omega_1 k^{-\gamma}$ for some constant $\omega_1$ and $1 < \gamma \leq \infty$. Based on different values of $\gamma$, there are three different cases here.
\begin{itemize}
    \item[Case $1$] $\gamma = \infty$: $\hat{\Sigma}$ has finite rank, so there is some $d$ such that $\hat{\lambda}_i = 0$ for $i > d$. As such, if we let $k^* = d$, we can easily see that $\sum_{i> k^*}^n(\hat{\lambda}_i^{\xi}/\hat{\lambda}_{k^*+1}^{\xi}) = (n-d) \geq \frac{1}{a}n$.
    
    \item[Case $2$] $\gamma \propto k$: $\hat{\Sigma}$ has exponential spectrum decay, i.e., $\hat{\lambda}_k = \omega_1 e^{-k}$. Without loss of generality we assume $\omega_1 =1$, then if we let $k^* = \log n$, it is easy to see that $n\hat{\lambda}_{k^*+1} = \frac{n}{n+1}\leq \ 1$. We hence have $\sum_{i> k^*}^n(\hat{\lambda}_i^{\xi}/\hat{\lambda}_{k^*+1}^{\xi}) = \frac{1}{2}(n-\log n) \geq \frac{1}{a}n$.
    
    \item[Case $3$] $\gamma$ is a constant: $\hat{\Sigma}$ has polynomial decay, i.e., $\hat{\lambda}_k = \omega_1 k^{-\gamma}$. Again we assume $\omega_1 = 1$ and if we let $k^* = n^{1/\gamma}$, we have $\hat{\lambda}_{k^*+1} = (n^{1/\gamma}+1 )^{-\gamma}\leq  (n^{1/\gamma} )^{-\gamma}\leq 1$. Therefore, we have $\sum_{i> k^*}^n(\hat{\lambda}_i^{\xi}/\hat{\lambda}_{k^*+1}^{\xi}) \leq \frac{1}{2}(n- n^{1/\gamma}) \geq \frac{1}{a}n$.
\end{itemize}

The analysis in the second scenario where $\alpha > 0$ is similar to the first one. The key is that there exists $k^* \ll n$ such that $n\hat{\lambda}_{k^*+1} \leq \sigma_0^2$. The difference here is that $\sigma_0^2$ decays with $s$ at rate $\alpha > 0$. However, if we control the decay rate $\alpha$ such that $\alpha \ll \gamma$, then we can guarantee the existence of $k^*$. In summary, as long as $\alpha \ll \gamma$, then we can find a $k^*$ such that Eq.(\ref{assum_noisy}) holds.

We are now ready to analyze the asymptotic behaviour of the excess risk. We start with $\mathbf{V}_{\xi}$ where we can see that the variance $\mathbf{V}_{\xi}$ is governed by $\frac{s}{n^2}$. Hence, if we let $s = o(n^{2})$, then we have $\lim_{n\rightarrow \infty}\mathbf{V}_{\xi} = 0$. For the bias $\mathbf{B}_{\xi}$, $\sigma_0$ and $\sigma_0^2$ decays to $0$ as long as $\alpha > 0$. In addition, it is easy to see that $\lambda_W = O(p)$, hence if we have $\lim_{n \rightarrow} \frac{p}{s}\sqrt{\frac{1}{n}} = 0$, then $\lim_{n\rightarrow \infty} \mathbf{B}_{\xi}= 0$.

Therefore, the following two conditions are required to ensure that our predictor $\tilde{\beta_{\xi}}$ is optimal:
\begin{itemize}
    \item $\lim_{n\rightarrow \infty} \frac{p}{s}\sqrt{\frac{1}{n}} = 0$;
    \item $s = o(n^2)$.
\end{itemize}
We can see that if we let $s= n^{\gamma_0}$ for some $\gamma_0 \in (1,2)$, then $s \gg n$ since $\lim_{n\rightarrow \infty} s/n = \infty$. In other words, even in the heavily overparametrized model, our estimator $\tilde{\beta}_{\xi}$ is optimal since both $\mathbf{B}_{\xi}$ and $\mathbf{V}_{\xi}$ converge to $0$. However, our theorem indicates that once $s$ is beyond the order of $n^2$, the variance start to increase again. This result is aligned with the recent discovery by \cite[Figure 1 and 4]{adlam2020neural}, where the excess risk is found to increase if $s$ is close to the order of $n^2$.

\subsubsection*{Motivation} Before providing a sketch proof of Theorem \ref{theo: over_rff_noisy}, we first state our motivation for considering noisy feature $z_{\xi}(x,w)$. The motivation arises from analyzing the excess risk of the MNLS estimator in the noiseless version, i.e., $\tilde{\beta}$ in Eq.(\ref{eqn:mnls}). Proposition \ref{theo:over_rff} establishes nearly matching upper and lower bounds for the excess risk of $\tilde{\beta}$. The proof is in Section \ref{sec:pf_over_rff}. 

\begin{prop}\label{theo:over_rff}
Consider the regression problem Eq.(\ref{rff_opm}) with feature matrix $\mathbf{Z}$ and suppose we are in the overparametrized regime where $s \geq n$. Let $a,b,c,c' > 1$ be some universal constants and assume $s \geq n$. Let $\delta \in (0,1)$ and denote \[k^* = \min \left\{0 \leq k \leq n, \frac{\sum_{i > k}^n \hat{\lambda}_i }{\hat{\lambda}_{k+1}} \geq \frac{1}{a} n \right\}.\] Then with probability greater than $1-\delta-2e^{-n/c}$, we have
\begin{eqnarray}
R(\tilde{\beta}) &=& \mathbf{B}_R + \mathbf{V}_R, \nonumber\\
&\leq& b\frac{\lambda_{W}}{s} \|\Pi\|^2\|\beta_*\|^2\|\Sigma\|\sqrt{\log\left(\frac{14r(\Sigma)}{\delta}\right)/n} + c\sigma^2\frac{s}{n} \frac{\textnormal{Tr}(\Sigma)}{\sum_{i>k^*}^n\hat{\lambda}_i},\label{risk_ns}
\end{eqnarray}
 We can also lower bound the risk with probability greater than $1-2e^{-n/c'}$:
\begin{eqnarray}
R(\tilde{\beta})  \geq c'\sigma^2\frac{s}{n}  \frac{\textnormal{Tr}(\Sigma)}{\sum_{i>k^*}^n\hat{\lambda}_i}. \label{risk_low}
\end{eqnarray}
\end{prop}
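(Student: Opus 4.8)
The plan is to start from the bias--variance decomposition of Lemma~\ref{lma:bia_var} and bound $\mathbf{B}_R$ and $\mathbf{V}_R$ separately. The first step is to identify the population covariance of the random feature vector: since $\{e_i\}$ is orthonormal in $L_2(d\rho)$ we have $\mathbb{E}_{\pmb{\mathsf{x}}}[V(\pmb{\mathsf{x}})V(\pmb{\mathsf{x}})^T]=I$, hence $\tilde{\Sigma}:=\mathbb{E}_{\pmb{\mathsf{x}}}[\mathbf{z}_{\pmb{\mathsf{x}}}(\mathbf{W})\mathbf{z}_{\pmb{\mathsf{x}}}(\mathbf{W})^T]=\tfrac{1}{s}\mathbf{W}^T\Sigma\mathbf{W}$. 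Substituting this, together with $\mathbb{E}_{Y|X}[(Y-f_*(X))(Y-f_*(X))^T]=\sigma^2 I$ (A.2) and $(\mathbf{Z}^T\mathbf{Z})^{\dagger}\mathbf{Z}^T\mathbf{Z}(\mathbf{Z}^T\mathbf{Z})^{\dagger}=(\mathbf{Z}^T\mathbf{Z})^{\dagger}$, yields the clean forms $\mathbf{B}_R=\beta_*^T\Pi^T\tilde{\Sigma}\Pi\beta_*=\tfrac{1}{s}\big\|\Sigma^{1/2}\mathbf{W}\Pi\beta_*\big\|^2$ and $\mathbf{V}_R=\sigma^2\,\textnormal{Tr}\big((\mathbf{Z}^T\mathbf{Z})^{\dagger}\tilde{\Sigma}\big)$. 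I would also record the identities $\mathbf{Z}^T\mathbf{Z}=\tfrac{n}{s}\mathbf{W}^T\hat{\Sigma}\mathbf{W}$ and $\mathbf{Z}\mathbf{Z}^T=\tfrac{1}{s}V(X)D^{1/2}\mathbf{W}\mathbf{W}^TD^{1/2}V(X)^T$, which tie the feature matrix to the sample covariance $\hat{\Sigma}$, and condition on $X$ throughout the feature-randomness arguments, so that $\hat{\Sigma}$ and the threshold $k^*$ are fixed and $\mathbf{W}$ is an independent Gaussian matrix.

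\textbf{Bias term.} The crucial observation is that $\Pi\beta_*$ lies in the null space of $\mathbf{Z}$, equivalently of $\mathbf{W}^T\hat{\Sigma}\mathbf{W}$, which forces $\hat{\Sigma}\mathbf{W}\Pi\beta_*=0$ and in particular $\beta_*^T\Pi^T\mathbf{W}^T\hat{\Sigma}\mathbf{W}\Pi\beta_*=0$. One may therefore subtract $\hat{\Sigma}$ for free: $\mathbf{B}_R=\tfrac{1}{s}\beta_*^T\Pi^T\mathbf{W}^T(\Sigma-\hat{\Sigma})\mathbf{W}\Pi\beta_*\le\tfrac{\lambda_W}{s}\,\|\Sigma-\hat{\Sigma}\|\,\|\Pi\|^2\|\beta_*\|^2$, using $\|\mathbf{W}\Pi\beta_*\|^2\le\lambda_W\|\Pi\|^2\|\beta_*\|^2$. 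It then remains to control $\|\Sigma-\hat{\Sigma}\|$, i.e.\ the deviation of an empirical covariance of a trace-class operator from its mean; a standard operator-Bernstein type concentration bound gives $\|\Sigma-\hat{\Sigma}\|\lesssim\|\Sigma\|\sqrt{\log(r(\Sigma)/\delta)/n}$ with probability at least $1-\delta$, which reproduces the bias term in \eqref{risk_ns} (the constant $14$ inside the logarithm being merely what the concentration inequality produces).

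\textbf{Variance term, both directions.} This is the heart of the proof; I would analyse $\mathbf{V}_R=\sigma^2\,\textnormal{Tr}\big((\mathbf{Z}^T\mathbf{Z})^{\dagger}\tilde{\Sigma}\big)$ conditionally on $X$ by adapting the effective-rank machinery of \cite{bartlett2020benign}. Split the eigen-directions of $\hat{\Sigma}$ into a head of size $k^*$ and a tail, and correspondingly write $\mathbf{Z}=\mathbf{Z}_0+\mathbf{Z}_1$. The head contributes only $O(\sigma^2 k^*/n)$, which is absorbed; the tail is the source of the main term. Because the tail of the spectrum of $\hat{\Sigma}$ is ``flat enough'' by the definition of $k^*$ (namely $\sum_{i>k^*}^n\hat{\lambda}_i\ge\tfrac{n}{a}\hat{\lambda}_{k^*+1}$), the tail-feature Gram matrix $\mathbf{Z}_1\mathbf{Z}_1^T$ is two-sidedly well conditioned, with both $\lambda_{\min}$ and $\lambda_{\max}$ of order $\tfrac{n}{s}\sum_{i>k^*}^n\hat{\lambda}_i$; the transfer of this spectrum from $\hat{\Sigma}$ to $\mathbf{Z}_1\mathbf{Z}_1^T$ relies on Gaussian concentration of $\mathbf{W}$ (since $p>s$, $P_U\mathbf{W}\mathbf{W}^TP_U$ has all eigenvalues in $s(1\pm\sqrt{n/s})^2$ for any fixed $n$-dimensional subspace $U$), and one also needs the Bernstein estimate $\textnormal{Tr}(\tilde{\Sigma})=\tfrac{1}{s}\sum_{j=1}^s\mathbf{w}^{(j)T}\Sigma\mathbf{w}^{(j)}\le c\,\textnormal{Tr}(\Sigma)$. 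For the upper bound, a Bartlett-type inequality bounding $(\mathbf{Z}^T\mathbf{Z})^{\dagger}$ on the tail block by $(\mathbf{Z}_1\mathbf{Z}_1^T)^{\dagger}$-type quantities turns the tail contribution into $\textnormal{Tr}(\tilde{\Sigma})\big/\big(\tfrac{n}{s}\sum_{i>k^*}^n\hat{\lambda}_i\big)$ up to constants, i.e.\ $\mathbf{V}_R\le c\sigma^2\tfrac{s}{n}\tfrac{\textnormal{Tr}(\Sigma)}{\sum_{i>k^*}^n\hat{\lambda}_i}$. For the lower bound, since $\mathbf{B}_R\ge0$ it suffices to lower bound $\mathbf{V}_R$; discarding the head, $\mathbf{V}_R\ge\sigma^2\,\textnormal{Tr}\big((\mathbf{Z}^T\mathbf{Z})^{\dagger}\tilde{\Sigma}_1\big)$, and the matching \emph{upper} bound $\lambda_{\max}(\mathbf{Z}_1\mathbf{Z}_1^T)\lesssim\tfrac{n}{s}\sum_{i>k^*}^n\hat{\lambda}_i$ together with a lower bound of order $\tfrac{n}{s}\textnormal{Tr}(\Sigma)$ on the trace of $\tilde{\Sigma}_1$ projected onto the $n$-dimensional row space of $\mathbf{Z}$ gives $c'\sigma^2\tfrac{s}{n}\tfrac{\textnormal{Tr}(\Sigma)}{\sum_{i>k^*}^n\hat{\lambda}_i}$. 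A union bound over the covariance-concentration event, the $\lambda_W=O(p)$ event and the Gaussian-matrix events then produces the stated probabilities.

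\textbf{Main obstacle.} The difficulty is concentrated in the variance analysis. Transferring the Bartlett et al.\ head/tail decomposition to the random-feature model is not routine because the design rows $\mathbf{z}_{x_i}(\mathbf{W})=\tfrac{1}{\sqrt{s}}\mathbf{W}^TD^{1/2}V(x_i)$ are \emph{not} independent --- they share the single Gaussian matrix $\mathbf{W}$ --- and three distinct covariance operators appear in the estimates and must be reconciled: the population covariance $\Sigma$, the sample covariance $\hat{\Sigma}$ (to which $k^*$ is calibrated), and the random-feature covariance $\tilde{\Sigma}=\tfrac{1}{s}\mathbf{W}^T\Sigma\mathbf{W}$. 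Above all, obtaining a genuinely \emph{matching} lower bound rather than just an upper bound requires two-sided spectral control of the tail-feature Gram matrix $\mathbf{Z}_1\mathbf{Z}_1^T$, which is where most of the technical work --- and the $e^{-n/c}$ failure probabilities --- comes from.
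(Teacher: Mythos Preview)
Your bias analysis is essentially identical to the paper's: both use $\mathbf{Z}^T\mathbf{Z}\Pi=0$ to subtract $\hat\Sigma$ for free and then invoke an operator-Bernstein bound on $\|\Sigma-\hat\Sigma\|$.

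For the variance, your proposal is correct in spirit but takes a more laborious route than the paper. You flag as the main obstacle that the rows $\mathbf{z}_{x_i}(\mathbf{W})$ of $\mathbf{Z}$ share the Gaussian matrix $\mathbf{W}$ and are therefore not independent, and you propose to work around this via a head/tail split $\mathbf{Z}=\mathbf{Z}_0+\mathbf{Z}_1$ of the feature matrix together with separate control of $\mathbf{Z}_1\mathbf{Z}_1^T$. The paper sidesteps this obstacle entirely. Starting from $\mathbf{V}_R\le\frac{\sigma^2}{n}\textnormal{Tr}\big(\mathbf{W}^T\Sigma\mathbf{W}(\mathbf{W}^T\hat\Sigma\mathbf{W})^{\dagger}\big)$, it uses Gaussian orthogonal invariance to diagonalize $\hat\Sigma$ \emph{on the left} of $\mathbf{W}$, obtaining (in distribution) $\mathbf{W}^T\hat\Sigma\mathbf{W}=\sum_{i=1}^n\hat\lambda_i\,\mathbf{w}_i\mathbf{w}_i^T=:\hat A$ with \emph{independent} standard Gaussian $\mathbf{w}_i\in\mathbb{R}^s$; this is exactly the random matrix that the Bartlett--Long--Lugosi--Tsigler lemma controls, so no adaptation for ``non-independent rows'' is needed. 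The paper then uses the crude inequality $\textnormal{Tr}(AB^{\dagger})\le\mu_n(B)^{-1}\textnormal{Tr}(A)$, bounds $\mu_n(\hat A)\ge\mu_n(\hat A_{k^*})\gtrsim\sum_{i>k^*}\hat\lambda_i$ via the tail-flatness defining $k^*$, and handles $\textnormal{Tr}(\mathbf{W}^T\Sigma\mathbf{W})=\sum_i\lambda_i\|\mathbf{w}_i\|^2\lesssim s\,\textnormal{Tr}(\Sigma)$ separately. The lower bound is obtained symmetrically from the upper bound on $\mu_1(\hat A)$. In short, your head/tail decomposition of $\mathbf{Z}$ corresponds, after the invariance trick, to the decomposition $\hat A=(\hat A-\hat A_{k^*})+\hat A_{k^*}$, but the paper only needs the monotonicity $\mu_n(\hat A)\ge\mu_n(\hat A_{k^*})$ rather than an explicit head contribution of $O(\sigma^2 k^*/n)$; this buys a considerably shorter argument, while your route would give a finer decomposition at the cost of more bookkeeping.
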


Inspecting Eq.(\ref{risk_ns}), we can see that as long as $\lim_{p,s \rightarrow \infty} \frac{p}{s}\sqrt{\frac{1}{n}} = 0$, the bias term $\mathbf{B}_{R}$ decays (since $\lambda_{W} = O(p)$). Therefore, if the variance term decays, then $R(\tilde{\beta})$ decreases to $0$. As a result, Proposition \ref{theo:over_rff} states that we need the following conditions for $\tilde{\beta}$ to have optimal prediction accuracy:
\begin{enumerate}
    \item The covariance operator is of trace-class;
    \item The sum of the tail eigenvalues of $\hat{\Sigma}$ is on the order of $N$, i.e., there exists a $k^*$ such that $\sum_{i>k^*}^n\hat{\lambda}_i = \Theta(n)$ and $\lim_{s,n \rightarrow \infty} \frac{s}{n}\frac{1}{\sum_{i>k^*}^n\hat{\lambda}_i} = 0$.
\end{enumerate}
The first condition is a standard requirement for a typical learning problem, where the last condition states that we need $s$ to be of order $o(n^2)$. While these two conditions seems reasonable, the second condition seems at odds with the first condition. Namely, according to the classical concentration inequality, $\|\Sigma - \hat{\Sigma}\| \rightarrow 0$ as $n \rightarrow \infty$, we hence have $\sum_{i>k}^n \hat{\lambda}_i \leq \text{Tr}(\hat{\Sigma})$ which is finite and does not grow with $n$. However, traditional concentration theory requires that the observed samples $\{x_i\}_{i=1}^n$ are i.i.d from the marginal distribution $\rho_{\pmb{\mathsf x}}(x)$. Hence, $\hat{\Sigma} = \frac{1}{n}\sum_{i=1}^n D^{1/2}V_{x_i}V_{x_i}^TD^{1/2}$ is simply an empirical estimate of $\Sigma = \int D^{1/2}V_xV_x^TD^{1/2} d\rho(x)$. However, if the feature $z(x,w)$ is corroded with noise $\xi$, then this will distort the behaviour of $\hat{\Sigma}$. We qualitatively discuss the effect of $\xi$ below, in order to provide an intuition on how benign overfitting arises.\\

Recall the definitions of $\Sigma$, $\hat{\Sigma}$ and $\hat{\Sigma}^s$ in Section \ref{sec:new_look}, in the noiseless setting, $\hat{\Sigma}^s = \frac{1}{n} \mathbf{Z}^T\mathbf{Z} \in \mathbb{R}^{s\times s}$. As $s \rightarrow \infty$, $\hat{\Sigma}_s \rightarrow \hat{\Sigma}$, implying $\hat{D}^s \rightarrow \hat{D}$. In the noisy setting, suppose the feature matrix is corroded with some i.i.d noise: $\mathbf{Z}_{\xi} = \mathbf{Z} + \Xi$, the covariance matrix now is \[\hat{\Sigma}_{\xi}^s = \frac{1}{n}\mathbf{Z}_{\xi}^T\mathbf{Z}_{\xi} = \frac{1}{n}(\mathbf{Z}^T\mathbf{Z}+ \Xi^T\mathbf{Z} +\mathbf{Z}^T\Xi  + \Xi^T\Xi).\] Denote $\hat{D}_{\xi}^s$ to be the eigenvalues of $\hat{\Sigma}_{\xi}^s$. As $s \rightarrow \infty$, we approximately have $\hat{D}_{\xi}^s  \approx \textnormal{diag}(\hat{\lambda}^s_1 + \sigma_0^2, \dots, \hat{\lambda}^s_n + \sigma_0^2)$. Since $\hat{\lambda}_i^s$ decays, there will be a $k^* < n,s$ such that $\hat{\lambda}_i^s > \sigma_0^2, \forall i < k^*$. However, $\hat{\lambda}^s_i$ is on the same scale of $\sigma_0^2$ for all $i > k^*$, in the sense that \[\frac{\hat{\lambda}_i^s + \sigma_0^2}{\hat{\lambda}_j^s + \sigma_0^2} \approx \Theta(1), \text{for~all~} i, j > k^*.\] Since $\hat{\Sigma}_{\xi}^s$ has at most $n$ eigenvalues, summing up the tails gives \[\frac{\sum_{i>k^*}^n (\hat{\lambda}_i^s + \sigma_0^2)}{\hat{\lambda}_{k^*+1}^s + \sigma_0^2} \approx \Theta(n).\] This indicates that the sum of the tail eigenvalues of the covariance matrix $\hat{\Sigma}_{\xi}^s$ is of the order of $n$, leading to the decay of $R(\tilde{\beta})$. This further motivates us to quantify how exactly the noise $\xi$ affect the behaviour of the excess learning risk in Theorem \ref{theo: over_rff_noisy}. Below we provide a sketch of the proof for Theorem \ref{theo: over_rff_noisy}.

\begin{proof}(Sketch of Proof for Theorem \ref{theo: over_rff_noisy})
The proof starts with the Bias-Variance decomposition. We employ the noisy feature version of Lemma \ref{lma:bia_var}, where the excess learning risk is decomposed to the bias term $\mathbf{B}_{\xi}$ and the variance term $\mathbf{V}_{\xi}$. While the treatment to $\mathbf{B}_{\xi}$ is relatively standard, the heavy part is on how to analyze $\mathbf{V}_{\xi}$. The key is to express the $\mathbf{V}_{\xi}$ to be a sum of the outer product of random vectors with each entry being i.i.d standard Gaussian random variables. After that, we apply concentration inequalities to the outer products which gives us the desired results. For detailed derivation, please refer to Section \ref{sec:gau_proof}. 
\end{proof}

\subsection{Benign Overfitting with Subgaussian Noisy Features}
Theorem \ref{theo: over_rff_noisy} demonstrates that if $\xi$ is Gaussian with decaying variance, then benign overfitting can be observed. However, Gaussian noise is sometimes a strong assumption. A close investigation of Theorem \ref{theo: over_rff_noisy} indicates that the key driving force of benign overfitting is that $\sigma_0^2$ decays with $s$, not the shape of $\xi$. Hence, we conjecture that benign overfitting will occur even if we have non-Gaussian distributions. It turns out that a simple extension of Theorem \ref{theo: over_rff_noisy} would allow us to generalize our results to subgaussian noise. Hence, we modify Assumption A.$4$ to  

\begin{itemize}
    \item[A.$4'$] Feature Noise Condition: $\xi$ is a subgaussian in the sense that $\xi = \frac{\sigma_0^2}{s} u$, where $u$ is mean $0$, variance $1$ and $\sigma_u^2$-subgaussian, i.e., \[\mathbb{E}(\exp(tu)) \leq \exp\left( \frac{\sigma_u^2}{2}t^2 \right).\]
\end{itemize}

Our results below confirm that benign overfitting can indeed also be observed in the subgaussian noise setting.
\begin{cor}\label{theo: over_rff_noisy_sug}
Under A.$1$-$4'$ and suppose $s \geq n$, let $a, b,c > 1$ be some universal constants, recall $\hat{\lambda}_{i}^{\xi} = \hat{\lambda}_i + \sigma_0^2/n$, if we assume that there exists $k^*$ defined as: 
\begin{eqnarray}
k^* = \min\left\{0\leq k \leq n,  \sum_{i>k}^n \frac{\hat{\lambda}_{i}^{\xi}}{\hat{\lambda}_{k+1}^{\xi}} \geq \frac{1}{a}n\right\}, \label{assum_sub}
\end{eqnarray}
For any $\delta \in (0,1)$, with probability at least $1-\delta-6e^{-n/b}-5e^{-n/c}$, we have 
\begin{eqnarray}
\mathbf{B}_{\xi} &\leq &  b \left(\frac{\lambda_W}{s} \|\Sigma\|\sqrt{\log(\frac{14r(\Sigma)}{\delta})/n} + \sigma_0 + \sigma_0^2 \right) \|\Pi_{\xi}\|^2\|\beta_{*}^{\xi}\|^2, \label{noi_bia_up_sub}\\
\mathbf{V}_{\xi} &\leq& c\sigma^2\textnormal{Tr}(\Sigma) \frac{s}{n^2}.  \label{noi_var_up_sub}
\end{eqnarray}
\end{cor}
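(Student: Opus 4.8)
The plan is to run the proof of Theorem~\ref{theo: over_rff_noisy} essentially verbatim, isolating the (few) places where the Gaussianity of $\Xi$ was actually invoked and substituting a subgaussian counterpart. First I would start from the noisy-feature bias--variance decomposition $R(\tilde{\beta}_\xi) = \mathbf{B}_\xi + \mathbf{V}_\xi$ (the analogue of Lemma~\ref{lma:bia_var} with $\mathbf{Z}$ replaced by $\mathbf{Z}_\xi = \mathbf{Z} + \Xi$), which makes no distributional assumption on $\Xi$ and hence transfers unchanged. The bias bound~\eqref{noi_bia_up_sub} hinges on two spectral facts about $\mathbf{Z}_\xi^T\mathbf{Z}_\xi = \mathbf{Z}^T\mathbf{Z} + \mathbf{Z}^T\Xi + \Xi^T\mathbf{Z} + \Xi^T\Xi$: that the cross terms $\mathbf{Z}^T\Xi$ contribute at the scale $\sigma_0$ and that $\Xi^T\Xi$ contributes at the scale $\sigma_0^2$ (up to a controlled deviation). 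After rescaling $\Xi$ so that its entries have unit variance (using the scaling prescribed in A.$4'$), the resulting matrix has i.i.d., mean-zero, $\sigma_u^2$-subgaussian entries, so these two facts follow from matrix Bernstein / matrix Chernoff inequalities for sums of independent subexponential random matrices (the summands being rank-one outer products of the rescaled columns of $\Xi$, whose squared norms concentrate by Hanson--Wright / Bernstein). The only effect is that the absolute constants now also depend on $\sigma_u^2$, which is harmless since $a,b,c$ are already declared to be universal constants.

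For the variance bound~\eqref{noi_var_up_sub}, the heavy step in Theorem~\ref{theo: over_rff_noisy} was to write $\mathbf{V}_\xi$ as $\sigma^2$ times a trace of outer products of vectors whose (rescaled) entries are i.i.d.\ standard normal, and then to apply Gaussian concentration. Here I would keep the same algebraic reduction --- it is purely linear-algebraic and does not use the law of $\Xi$ --- but replace the concentration step by Hanson--Wright for quadratic forms $u^T A u$ of $\sigma_u^2$-subgaussian vectors, together with a matrix-Bernstein bound on $\|\tfrac1n\mathbf{Z}_\xi^T\mathbf{Z}_\xi - \mathbb{E}_{\pmb{\mathsf w},\Xi}\{\cdot\}\|$ restricted to the $n$-dimensional range of $\mathbf{Z}_\xi$. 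The crucial quantitative input --- that all $n$ nonzero eigenvalues of $\tfrac1n\mathbf{Z}_\xi^T\mathbf{Z}_\xi$ lie within a constant factor of $\hat{\lambda}_i^\xi = \hat{\lambda}_i + \sigma_0^2/n$, and in particular are bounded below by $\Theta(\sigma_0^2/n)$ --- is exactly what the assumption~\eqref{assum_sub} (the same one as in Theorem~\ref{theo: over_rff_noisy}) is designed to exploit, and it survives the passage to subgaussian $\Xi$ because the mean of $\tfrac1s\Xi\Xi^T$ is still a multiple of $I_n$ and its operator-norm deviation is $O(\sqrt{n/s})$ with probability $\ge 1 - 2e^{-cn}$ by the subgaussian sample-covariance bound (e.g.\ Vershynin). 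Collecting the failure events yields the stated probability $1-\delta-6e^{-n/b}-5e^{-n/c}$ with possibly enlarged $b,c$.

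The main obstacle is the loss of rotational invariance: for Gaussian $\Xi$ one can diagonalize and reduce to sums of independent squared standard normals with exact chi-square tails, whereas for subgaussian $\Xi$ one must argue dimension-freely. Concretely, the delicate point is the lower-tail control of the smallest nonzero singular value of $\mathbf{Z}_\xi$ (equivalently $\|(\mathbf{Z}_\xi^T\mathbf{Z}_\xi)^\dagger\|$): in the Gaussian case this rests on sharp singular-value estimates, while in the subgaussian case I would instead obtain it from matrix Bernstein applied to $\tfrac1s\Xi\Xi^T$, which only needs the crude statement that the noise shifts the whole spectrum of $\tfrac1n\mathbf{Z}_\xi^T\mathbf{Z}_\xi$ up by $\approx \sigma_0^2/n$, not the precise spectral edge --- so no small-ball argument in the spirit of Rudelson--Vershynin is required. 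Once these substitutions are in place, the remaining bookkeeping (tracking how $\sigma_u^2$ propagates into $a,b,c$ and checking that $\|\Sigma\|$, $r(\Sigma)$, $\mathrm{Tr}(\Sigma)$, $\|\Pi_\xi\|$, $\|\beta_*^\xi\|$ enter exactly as before) is routine and identical to the derivation in Section~\ref{sec:gau_proof}.
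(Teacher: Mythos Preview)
Your proposal is correct and matches the paper's approach: rerun the proof of Theorem~\ref{theo: over_rff_noisy}, replacing each Gaussian concentration step by its subgaussian analogue, with the bias handled term-by-term and the variance controlled via a lower bound on $\mu_n\!\bigl(\tfrac{s}{n}\mathbf{Z}_\xi^T\mathbf{Z}_\xi\bigr)$ extracted from assumption~\eqref{assum_sub}. The only cosmetic difference is that the paper invokes its in-house subgaussian lemmas (Lemmas~\ref{lma:norm_sug}, \ref{lma:eigen_A_bd}/\ref{lma:eigen_A_bd_beta}, \ref{lma:eigen_norm_sug}, and \cite[Lemma~4]{bartlett2020benign}) rather than the named tools you cite (Hanson--Wright, matrix Bernstein, Vershynin's sample-covariance bound), but these are the same inequalities up to constants; your explicit acknowledgment of the rotational-invariance issue and the workaround via $\tfrac{1}{s}\Xi\Xi^T$ is in fact more careful than the paper's one-line ``similar analysis'' for the variance.
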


The behaviour of the excess learning risk in the subgaussian case is almost identical to the Gaussian case up to some constant. As discussed in the Gaussian case, this leads to the decaying of the learning risk asymptotically. The results verify our conjecture that as long as the noise $\xi$ decays with $s$, we will observe benign overfitting. Corollary \ref{theo: over_rff_noisy_sug} further confirms that the noise $\xi$ in the covariate or feature vector can serve as an implicit regularizer to prevent overfitting.

\subsection{The Double Descent Phenomenon} \label{sec:double_descent}
The classical U-shape learning curve \cite[Figure 2.11]{friedman2001elements} has been largely challenged recently \cite{belkin2018understand,belkin2019reconciling,belkin2019two}, as empirically it is often observed that the relationship between the prediction accuracy and the complexity of the learning machine exhibits the so called the double descent phenomenon. Looking Figure \ref{fig:double_descent} \cite{belkin2019reconciling}, the double descent curve states that when we first increase the capacity of the hypothesis space $\mathcal{H}$, the excess learning risk decreases but starts to increase as we keep increasing the model complexity. The excess learning risk increases to the maximum (or potentially diverges to infinity) at some interpolation threshold. After that, as we keep increasing the complexity of $\mathcal{H}$, the excess learning risk decreases either to a global minimum or vanishes to zero. Overall, it forms a double descent curve. 

\begin{figure}[ht]
    \centering
    \includegraphics[width = 13cm, height = 6.0cm]{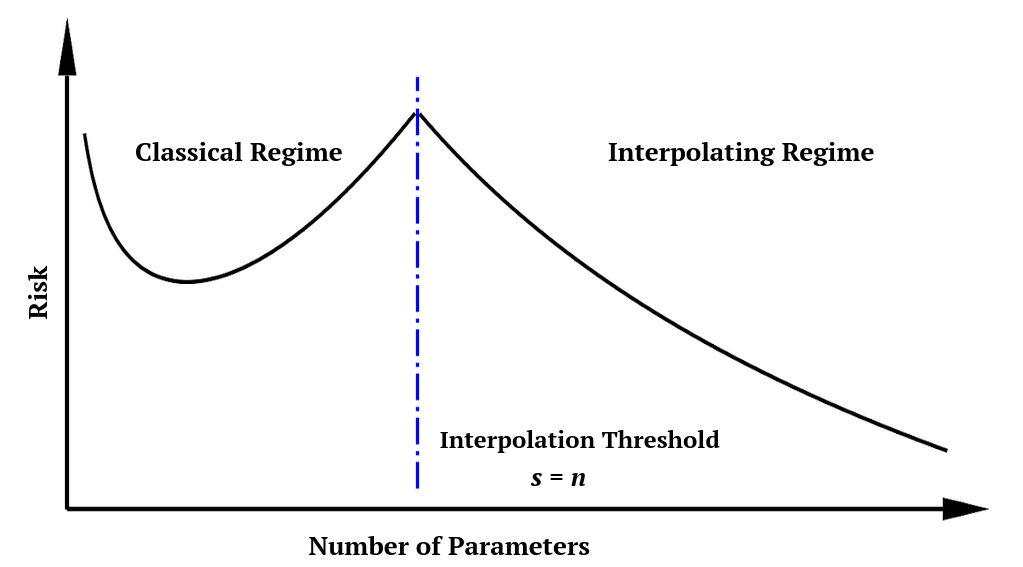}
    \caption{The \textit{double descent curve}.}
    \label{fig:double_descent}
\end{figure}
The double descent phenomenon has attracted much research interest recently while a concluding answer has not been discovered in general. In this section, we try to provide an answer to how double descent occurs as a result of noisy features through analyzing the behaviour of the excess learning risk in Corollary \ref{theo: over_rff_noisy_sug}.\\  

Before delivering our results, we first state our notations and assumptions to ease discussion. Recall in Lemma \ref{lma:bia_var}, we have decomposed the excess learning risk into the misspecification error $\mathbf{M}_R$ (or $\mathbf{M}_{\xi}$ in the noisy feature setting), the bias $\mathbf{B}_R$ (or $\mathbf{B}_{\xi}$) and the variance $\mathbf{V}_R$ (or $\mathbf{V}_{\xi}$). Connection between Corollary \ref{theo: over_rff_noisy_sug} and the double descent is through analyzing these errors in the noisy feature setting: $\mathbf{M}_{\xi}$, $\mathbf{B}_{\xi}$ and  $\mathbf{V}_{\xi}$. We now demonstrate the finite sample behaviour of the excess learning risk from Corollary \ref{theo: over_rff_noisy_sug}.

\begin{cor}\label{theo:double_descent}
Under Assumptions A.$1$-$4'$, the behaviour of the excess learning risk $R(\tilde{\beta}_{\xi})$ can be described as the following:
\begin{itemize}
        \item[a.] $s = n$ 
        \begin{IEEEeqnarray}{rCl} 
        \mathbf{B}_{\xi} &=& O(\frac{p}{n^{3/2}})+ O(n^{-\alpha}),\nonumber \\
        \mathbf{V}_{\xi} &=& O(n^{-1});\nonumber
        \end{IEEEeqnarray}
        \item[b.] $s = o(n^{\gamma_1})$, $\gamma_1 \in (1,2)$
        \begin{IEEEeqnarray}{rCl} 
        \mathbf{B}_{\xi} &=& O(\frac{p}{s}n^{-\frac{1}{2}})+O(n^{-\alpha}),\nonumber \\
        \mathbf{V}_{\xi} &=& O(n^{(\gamma_1-2)});\nonumber
        \end{IEEEeqnarray}
        \item[c.] $s = \Theta( n^{\gamma_2})$, $\gamma_2 > 2$ \[\mathbf{V}_{\xi} = \Theta(n^{(\gamma_2 -2)}).\]
\end{itemize}
\end{cor}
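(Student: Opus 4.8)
(Sketch of Proof for Corollary \ref{theo:double_descent})
The plan is to read the corollary off Corollary~\ref{theo: over_rff_noisy_sug} by plugging in the three scalings of $s$, and then to supply a matching variance lower bound for part~(c). Everything takes place on the high-probability event of Corollary~\ref{theo: over_rff_noisy_sug} intersected with the event $\{\lambda_{W}=O(p)\}$; the latter follows from the standard non-asymptotic estimate $\|\mathbf{W}\|\le\sqrt{p}+\sqrt{s}+t$ for a $p\times s$ matrix with i.i.d.\ $\mathcal{N}(0,1)$ entries, which gives $\lambda_{W}=\|\mathbf{W}^{T}\mathbf{W}\|=O(p)$ with overwhelming probability because $p>s$. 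I treat $\|\Sigma\|$, $r(\Sigma)$, $\textnormal{Tr}(\Sigma)$, $\sigma^{2}$, $\|\beta_{*}^{\xi}\|$ and $\delta$ as absolute constants --- they depend only on the fixed kernel $K$ and on $f_{*}$ --- and I use $\|\Pi_{\xi}\|\le 1$, which holds because $(\mathbf{Z}_{\xi}^{T}\mathbf{Z}_{\xi})^{\dagger}\mathbf{Z}_{\xi}^{T}\mathbf{Z}_{\xi}$ is the orthogonal projection onto $\textnormal{row}(\mathbf{Z}_{\xi})$, so $\Pi_{\xi}$ is minus an orthogonal projection. Finally, by A.$4'$, $\sigma_{0}^{2}=s^{-\alpha}$ is a decaying power of $s$ and hence (as $s\ge n$) also of $n$; throughout I assume, as in the discussion following Theorem~\ref{theo: over_rff_noisy}, that $\alpha$ is small enough relative to the spectral decay that a $k^{*}$ obeying Eq.~(\ref{assum_sub}) exists in the regime at hand, so that Corollary~\ref{theo: over_rff_noisy_sug} applies.

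Given this, the upper bounds are immediate from Eqs.~(\ref{noi_bia_up_sub})--(\ref{noi_var_up_sub}). The bias bound becomes $\mathbf{B}_{\xi}=O\!\big(\tfrac{\lambda_{W}}{s}n^{-1/2}\big)+O(\sigma_{0}+\sigma_{0}^{2})=O\!\big(\tfrac{p}{s}n^{-1/2}\big)+O(n^{-\alpha})$, which for $s=n$ reads $O(p\,n^{-3/2})+O(n^{-\alpha})$ and for $s=o(n^{\gamma_{1}})$ reads $O(\tfrac{p}{s}n^{-1/2})+O(n^{-\alpha})$. The variance bound becomes $\mathbf{V}_{\xi}=O(s/n^{2})$, which equals $O(n^{-1})$ for $s=n$, $O(n^{\gamma_{1}-2})$ for $s=o(n^{\gamma_{1}})$, and $O(n^{\gamma_{2}-2})$ for $s=\Theta(n^{\gamma_{2}})$. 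This settles parts~(a), (b) and the upper half of part~(c).

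The only step that is not a substitution is the lower bound $\mathbf{V}_{\xi}=\Omega(n^{\gamma_{2}-2})$ in part~(c), and I expect it to be the main obstacle. The plan is to prove a variance lower bound for the noisy MNLS estimator in the spirit of Eq.~(\ref{risk_low}) of Proposition~\ref{theo:over_rff}. Since $\mathbf{Z}_{\xi}\mathbf{Z}_{\xi}^{T}$ is invertible (a.s.\ in the overparametrised regime $s\ge n$), the noisy-feature analogue of Lemma~\ref{lma:bia_var} together with A.$2$ gives $\mathbf{V}_{\xi}=\sigma^{2}\,\textnormal{tr}\!\big((\mathbf{Z}_{\xi}\mathbf{Z}_{\xi}^{T})^{-1}\mathbf{Z}_{\xi}\,\Sigma_{\xi}^{s}\,\mathbf{Z}_{\xi}^{T}(\mathbf{Z}_{\xi}\mathbf{Z}_{\xi}^{T})^{-1}\big)$, where $\Sigma_{\xi}^{s}=\mathbb{E}_{\pmb{\mathsf{x}}}[\mathbf{z}^{\xi}_{\pmb{\mathsf{x}}}(\mathbf{W})\mathbf{z}^{\xi}_{\pmb{\mathsf{x}}}(\mathbf{W})^{T}]$ splits into a signal part $\tfrac1s\mathbf{W}^{T}\Sigma\mathbf{W}$ and an isotropic noise part $\tfrac{\sigma_{0}^{2}}{s}I$. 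Lower-bounding $\Sigma_{\xi}^{s}$ by its signal part and feeding in an operator-norm bound $\|\mathbf{Z}_{\xi}\mathbf{Z}_{\xi}^{T}\|\le n\hat{\lambda}_{1}^{\xi}+o(n)$, the trace reduces to a sum over the tail spectrum of the noisy sample covariance, which Eq.~(\ref{assum_sub}) pins down on the scale of $n\hat{\lambda}_{k^{*}+1}^{\xi}$, and after simplification this yields $\mathbf{V}_{\xi}\ge c'\sigma^{2}\textnormal{Tr}(\Sigma)\,s/n^{2}=\Omega(n^{\gamma_{2}-2})$. The difficulty, unlike for the upper bounds, is that one must re-run the concentration arguments of Section~\ref{sec:gau_proof} ``in reverse'' on $\mathbf{Z}_{\xi}\mathbf{Z}_{\xi}^{T}=\mathbf{Z}\mathbf{Z}^{T}+\mathbf{Z}\Xi^{T}+\Xi\mathbf{Z}^{T}+\Xi\Xi^{T}$, controlling the indefinite cross terms tightly from both sides so that the lower bound is $\Theta$-matched with~(\ref{noi_var_up_sub}); a lazy estimate such as $\Sigma_{\xi}^{s}\succeq\tfrac{\sigma_{0}^{2}}{s}I$ alone would give only $\mathbf{V}_{\xi}=\Omega(\sigma_{0}^{2}/s)$, far too small. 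A minor secondary point is to verify that a $k^{*}$ meeting Eq.~(\ref{assum_sub}) exists when $s=\Theta(n^{\gamma_{2}})$ and $\sigma_{0}^{2}=s^{-\alpha}$, which, as in the discussion after Theorem~\ref{theo: over_rff_noisy}, requires $\alpha$ small relative to the spectral decay rate $\gamma$.
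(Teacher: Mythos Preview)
Your substitution argument for parts (a), (b), and the upper half of (c) is exactly what the paper does: its entire proof is the one-liner ``apply $\sigma_0^2 = O(s^{-\alpha})$ into Eq.~(\ref{noi_bia_up_sub}) and Eq.~(\ref{noi_var_up_sub}).'' The auxiliary observations you make ($\lambda_W = O(p)$, $\|\Pi_\xi\|\le 1$, treating $\|\Sigma\|$, $\textnormal{Tr}(\Sigma)$, etc.\ as constants) are the implicit ingredients the paper is silently using, so on this part you are aligned with the paper and slightly more explicit.

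Where you diverge is the lower bound for (c). You correctly read the $\Theta$ in the statement as demanding a matching variance lower bound and propose to adapt the noiseless lower bound of Proposition~\ref{theo:over_rff} (Eq.~(\ref{risk_low})) to the noisy feature matrix $\mathbf{Z}_\xi$. The paper, however, does \emph{not} carry this out: its proof only invokes the two upper bounds (\ref{noi_bia_up_sub})--(\ref{noi_var_up_sub}) and says nothing about a lower bound in the noisy setting. So the extra work you outline --- re-running the concentration arguments on $\mathbf{Z}_\xi\mathbf{Z}_\xi^T$ from above and controlling the cross terms $\mathbf{Z}\Xi^T + \Xi\mathbf{Z}^T$ two-sidedly --- is genuinely additional to what the paper proves. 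In effect you have identified a gap between the $\Theta$ in the statement and the $O$ that the paper's one-line proof actually delivers; your sketch for closing it (mirror Lemma~\ref{lma:var_low} with $\hat{\lambda}_i^\xi$ in place of $\hat{\lambda}_i$ and use $\mu_1(A^\xi)$ rather than $\mu_n(A^\xi)$) is the natural route, but be aware that the paper itself does not supply it.
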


Corollary \ref{theo:double_descent} describes the precise behaviour of the excess learning risk in the overparametrized regime with $s \geq n$. Before we give a detailed discussion on that, we first qualitatively discuss the behaviour of the excess learning risk in the underparametried regime i.e., the first U-shape in Figure \ref{fig:double_descent}. When $s < n$, Corollary \ref{theo: over_rff_noisy_sug} indicates that $\mathbf{B}_{\xi} = 0$ by Lemma \ref{lma:proj}. However, Corollary \ref{theo: over_rff_noisy_sug} assume we are in the realizable case ($f_*\in \tilde{\mathcal{H}}$). When $s$ is small, this is unlikely to happen. As a result, we have the misspecification error $\mathbf{M}_{\xi}$. In another words, in the finite sample case where $s < n$, the excess learning risk is governed by the misspecification error and the variance. Intuitively we can see that $\mathbf{M}_{\xi}$ decreases as we increase $s$, and typically, when $s$ is small, $\mathbf{M}_{\xi}$ dominates the excess learning risk. As we increase $s$, $\mathbf{M}_{\xi}$ decreases and $\mathbf{V}_{\xi}$ increases up to some point, where $\mathbf{V}_{\xi}$ start to dominate the excess learning risk. Therefore, we will observe that the excess learning risk decreases with $s$ initially and after some point, it starts to increase with $s$, which forms the classical U-shape curve.

When we approach the interpolation threshold where $s =n$, Corollary \ref{theo:double_descent} shows that the bias term $\mathbf{B}_{\xi}$ starts to kick in and dominates the excess learning risk by the term $O\left(\frac{p}{n^{3/2}}\right)$, because $n,s \ll p$ at this point. In particular, if we use kernel $K$ where $P = \infty$, then the bias $\mathbf{B}_R$ diverges to infinity.

Furthermore, if we keep increasing $s$ so that it passes the interpolation threshold, the excess learning risk is now dominated by $\mathbf{B}_{\xi}$ and $\mathbf{V}_{\xi}$, since the function space is large enough such that the misspecification error is negligible. As discussed earlier, if $\frac{p}{s\sqrt{n}}$ converges to $0$, and $s = o(n^2)$, then both terms vanish to zero asymptotically, driving the learning risk to its global minimum. In particular, if $s = n^{\gamma}$ with $\gamma \in (1,2)$, then the overfitted model with $s \gg n$ can still have excess learning risk to converge. In addition, if we keep increase $s$ such that it is beyond the order of $n^2$, we can see that the variance start to increase again. Overall, Corollary \ref{theo:double_descent} gives us the precise description of the double descent curve up to the point where $s$ is within the $n^2$ order. This is aligned with the recent findings from \cite[Figure 1 and 4]{adlam2020neural}. 

In Figure \ref{fig:bound_double_descent}, we give an sample path of how our upper bound evolves with the number of features $s$. We can see that it closely resembles the double descent curve in Figure \ref{fig:double_descent} from \cite{belkin2019reconciling}. 

\begin{figure}[ht]
    \centering
    \includegraphics[width = 13cm, height = 6cm]{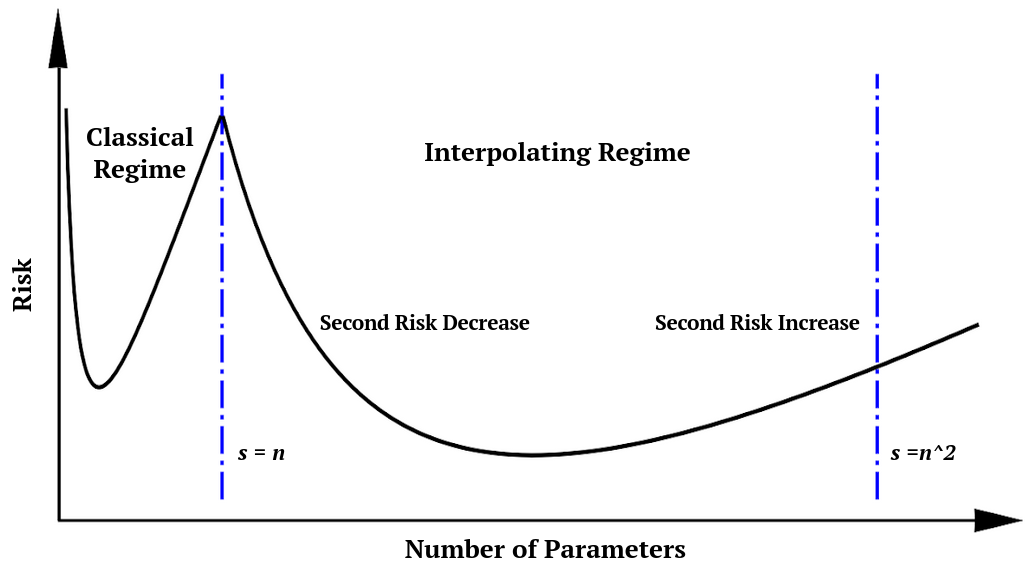}
    \caption{The evolution of the excess learning risk from Corollary \ref{theo:double_descent}. Note that our chracterization has the property where the excess learning risk starts to grow beyond the $n^2$ regime. This is aligned with the recent empirical observation of the triple-double descent learning curve from \cite[Figure 1 and 4]{adlam2020neural}. }
    \label{fig:bound_double_descent}
\end{figure}

\section{Proofs}
In the proof, we use $b_1,b_2,\dots \pmb{> 1}$ to denote universal constants in the theorem statement, and we use $c_1,c_2,\dots \pmb{> 1}$ to denote universal constants in the proof of the theorems.
\subsection{Proof of Proposition \ref{theo:over_rff}} \label{sec:pf_over_rff}
The proof of Proposition \ref{theo:over_rff} starts with the analysis of each term in the bias-variance decomposition. We first deal with the bias term $\mathbf{B}_{R}$.
\subsubsection*{Upper Bound of Bias:} The following lemma gives the upper bound on the bias term $\mathbf{B}_R$.
\begin{lma}\label{lma:bia_up}
The bias can be upper bounded as:
\begin{IEEEeqnarray}{rCl}
\int_{\mathcal{X}} \left\|\mathbf{z}_x(\mathbf{W})^T\Pi\beta_*\right\|^2 d\rho(x) \leq \frac{\lambda_{W}}{s}\|\Pi\| \|\Sigma-\hat{\Sigma}\| \|\beta_*\|^2.\nonumber
\end{IEEEeqnarray}
\end{lma}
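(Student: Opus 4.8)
\emph{Proof plan.} The plan is to evaluate the integral over the test point in closed form and then exploit the fact that $\Pi$ is (minus) the orthogonal projector onto $\ker(\mathbf{Z}^{T}\mathbf{Z})$ in order to replace $\Sigma$ by $\Sigma-\hat{\Sigma}$ for free. First I would rewrite the integrand as a quadratic form, $\|\mathbf{z}_x(\mathbf{W})^{T}\Pi\beta_*\|^{2}=(\Pi\beta_*)^{T}\mathbf{z}_x(\mathbf{W})\mathbf{z}_x(\mathbf{W})^{T}(\Pi\beta_*)$, and, treating $\mathbf{W}$ and $D$ as fixed, pull the integral inside. Using $\mathbf{z}_x(\mathbf{W})=\tfrac{1}{\sqrt{s}}\mathbf{W}^{T}D^{1/2}V(x)$ and the definition $\Sigma=\mathbb{E}_{\pmb{\mathsf{x}}}\!\left[D^{1/2}V(\pmb{\mathsf{x}})V(\pmb{\mathsf{x}})^{T}D^{1/2}\right]$, this yields
\[
\int_{\mathcal{X}}\|\mathbf{z}_x(\mathbf{W})^{T}\Pi\beta_*\|^{2}\,d\rho(x)=\frac{1}{s}(\Pi\beta_*)^{T}\mathbf{W}^{T}\Sigma\,\mathbf{W}(\Pi\beta_*).
\]

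Next comes the key algebraic step. Since $\mathbf{Z}=\tfrac{1}{\sqrt{s}}V(X)D^{1/2}\mathbf{W}$, one computes $\mathbf{Z}^{T}\mathbf{Z}=\tfrac{n}{s}\mathbf{W}^{T}\hat{\Sigma}\mathbf{W}$, so $\mathbf{Z}^{T}\mathbf{Z}$ and $\mathbf{W}^{T}\hat{\Sigma}\mathbf{W}$ have the same kernel. Because $\Pi=(\mathbf{Z}^{T}\mathbf{Z})^{\dagger}\mathbf{Z}^{T}\mathbf{Z}-I$ equals minus the orthogonal projector onto $\ker(\mathbf{Z}^{T}\mathbf{Z})$, the vector $\Pi\beta_*$ lies in $\ker(\mathbf{W}^{T}\hat{\Sigma}\mathbf{W})$, hence $(\Pi\beta_*)^{T}\mathbf{W}^{T}\hat{\Sigma}\mathbf{W}(\Pi\beta_*)=0$. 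Subtracting this vanishing term from the previous display and applying the operator-norm bound for the quadratic form gives
\[
\int_{\mathcal{X}}\|\mathbf{z}_x(\mathbf{W})^{T}\Pi\beta_*\|^{2}\,d\rho(x)=\frac{1}{s}(\Pi\beta_*)^{T}\mathbf{W}^{T}(\Sigma-\hat{\Sigma})\mathbf{W}(\Pi\beta_*)\le\frac{1}{s}\|\Sigma-\hat{\Sigma}\|\,\|\mathbf{W}\Pi\beta_*\|^{2}.
\]

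Finally I would dispose of $\|\mathbf{W}\Pi\beta_*\|^{2}$ by two routine steps: $\|\mathbf{W}\Pi\beta_*\|^{2}=(\Pi\beta_*)^{T}\mathbf{W}^{T}\mathbf{W}(\Pi\beta_*)\le\|\mathbf{W}^{T}\mathbf{W}\|\,\|\Pi\beta_*\|^{2}=\lambda_{W}\|\Pi\beta_*\|^{2}$, and then, using that $\Pi$ is symmetric with spectrum in $\{-1,0\}$ (so that $\Pi^{2}=I-(\mathbf{Z}^{T}\mathbf{Z})^{\dagger}\mathbf{Z}^{T}\mathbf{Z}$ is an orthogonal projector and $\|\Pi\|\le 1$), $\|\Pi\beta_*\|^{2}=\beta_*^{T}\Pi^{2}\beta_*\le\|\Pi\|\,\|\beta_*\|^{2}$. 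Chaining the three inequalities produces exactly $\tfrac{\lambda_{W}}{s}\|\Pi\|\,\|\Sigma-\hat{\Sigma}\|\,\|\beta_*\|^{2}$. The only step that is not bookkeeping is the identity $\mathbf{W}^{T}\hat{\Sigma}\mathbf{W}(\Pi\beta_*)=0$, i.e.\ recognizing that the MNLS orthogonal-complement projector $\Pi$ annihilates precisely the $\hat{\Sigma}$-component of $\mathbf{W}^{T}\Sigma\mathbf{W}$; this is what converts the a priori non-vanishing $\|\Sigma\|$ into the much smaller, concentrating quantity $\|\Sigma-\hat{\Sigma}\|$, and it is the crux on which the subsequent analysis of Proposition \ref{theo:over_rff} rests.
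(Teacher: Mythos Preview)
Your proposal is correct and follows essentially the same route as the paper: rewrite the bias as $\tfrac{1}{s}(\Pi\beta_*)^{T}\mathbf{W}^{T}\Sigma\mathbf{W}(\Pi\beta_*)$, use the pseudoinverse identity $\mathbf{Z}^{T}\mathbf{Z}\,\Pi=0$ (equivalently, $\Pi\beta_*\in\ker(\mathbf{W}^{T}\hat{\Sigma}\mathbf{W})$) to insert $-\hat{\Sigma}$ at no cost, and then bound the resulting quadratic form by operator norms. The only cosmetic difference is that the paper bounds $\beta_*^{T}\Pi\,\mathbf{W}^{T}(\Sigma-\hat{\Sigma})\mathbf{W}\,\Pi\beta_*$ directly by $\|\Pi\|^{2}\|\beta_*\|^{2}\|\Sigma-\hat{\Sigma}\|\,\lambda_W$, whereas you first pass through $\|\mathbf{W}\Pi\beta_*\|^{2}$ and then invoke the idempotency $\Pi^{2}=-\Pi$ to get $\|\Pi\beta_*\|^{2}\le\|\Pi\|\,\|\beta_*\|^{2}$; since $\|\Pi\|\in\{0,1\}$ the two bounds coincide, and yours matches the lemma statement exactly.
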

\begin{proof}
Using the property of pseudoinverse yields \[\mathbf{Z}^T\mathbf{Z} \Pi =\mathbf{Z}^T\mathbf{Z}(I -(\mathbf{Z}^T\mathbf{Z})^{\dagger}\mathbf{Z}^T\mathbf{Z}) = 0.\]
Hence, we have 
\begin{IEEEeqnarray}{rCl}
\mathbf{B}_{R} &=&  \mathbb{E}_x\left\{ \beta_*\Pi\mathbf{z}_x(\mathbf{W})\mathbf{z}_x(\mathbf{W})^T\Pi\beta_*\right\}= \beta_*\Pi\left(\frac{1}{s}\mathbf{W}^T\Sigma\mathbf{W}\right)\Pi\beta_*,\nonumber\\
&=& \beta_*\Pi\left(\frac{1}{s}\mathbf{W}^T\Sigma\mathbf{W} - \frac{1}{n}\mathbf{Z}^T\mathbf{Z}\right)\Pi\beta_*= \beta_*\Pi\left(\frac{1}{s}\mathbf{W}^T\Sigma\mathbf{W} - \frac{1}{s}\mathbf{W}^T\hat{\Sigma}\mathbf{W}\right)\Pi\beta_*,\nonumber\\
& =& \frac{1}{s}\beta_*\Pi\left(\mathbf{W}^T(\Sigma-\hat{\Sigma})\mathbf{W}\right)\Pi\beta_* \leq \frac{1}{s}\|\beta_*\|^2 \|\Pi\|^2 \|\Sigma-\hat{\Sigma}\|\|\mathbf{W}^T\mathbf{W}\|, \nonumber\\
&=& \frac{\lambda_{W}}{s}\|\Pi\|^2 \|\Sigma-\hat{\Sigma}\|\|\beta_*\|^2. \nonumber
\end{IEEEeqnarray}
\end{proof}
The following lemma provides an upper bound for $\|\Sigma-\hat{\Sigma}\|$.
\begin{lma}\label{lma:cov_sample}
Let $\Sigma$ and $\hat{\Sigma}$ denote the covariance operator and sample covariance operator respectively, then for $\delta \in (0,1)$, we have with probability at least $1- \delta$, \[\|\Sigma - \hat{\Sigma}\| \leq c\|\Sigma\|\sqrt{\log(\frac{14r(\Sigma)}{\delta})/n},\] where $r(\Sigma) = \frac{\textnormal{Tr}(\Sigma)}{\|\Sigma\|}$ and $c_0$ is some universal constant.
\end{lma}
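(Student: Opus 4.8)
The plan is to read Lemma~\ref{lma:cov_sample} as a Bernstein-type concentration bound for an average of i.i.d.\ self-adjoint random operators, in the sharp \emph{intrinsic-dimension} (Tropp/Minsker) form. First I would write $\hat{\Sigma}-\Sigma = \frac{1}{n}\sum_{i=1}^n(A_i-\Sigma)$ with $A_i = D^{1/2}V(x_i)V(x_i)^T D^{1/2}$; since $\{e_i\}$ is orthonormal in $L_2(d\rho)$ we have $\mathbb{E}[V(\pmb{\mathsf{x}})V(\pmb{\mathsf{x}})^T]=I$, hence $\mathbb{E}[A_i]=\Sigma$ (which is also why $\Sigma$ carries the eigenvalues $D$), each $A_i\succeq 0$ has rank one, and $A_i^2 = k(x_i,x_i)\,A_i$. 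Thus the centred summands $M_i:=\Sigma-A_i$ are self-adjoint with mean zero, and the proof reduces to estimating three quantities: the almost-sure range $\|M_i\|\le \max(\|\Sigma\|,k(x_i,x_i))$; the variance $v:=\|\mathbb{E}[M_1^2]\| = \|\mathbb{E}[k(\pmb{\mathsf{x}},\pmb{\mathsf{x}})A_1]-\Sigma^2\|$, which is at most $\|\mathbb{E}[k(\pmb{\mathsf{x}},\pmb{\mathsf{x}})A_1]\|$; and the intrinsic dimension $\textnormal{Tr}(\mathbb{E}[M_1^2])/\|\mathbb{E}[M_1^2]\|$, which under Assumption~A.1 ($\textnormal{Tr}(\Sigma)<\infty$, so each $A_i$ is trace class) is of order $r(\Sigma)=\textnormal{Tr}(\Sigma)/\|\Sigma\|$.

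With these in hand, the intrinsic-dimension matrix Bernstein inequality gives, for $\delta\in(0,1)$, with probability at least $1-\delta$, $\|\Sigma-\hat\Sigma\|\lesssim \sqrt{v\beta/n}+R\beta/n$ where $\beta=\log(14\,r(\Sigma)/\delta)$, $R$ is the range bound, and the $14\,r(\Sigma)$ inside the logarithm is precisely the intrinsic-dimension factor that appears in that inequality. In the large-$n$ regime used throughout the paper the first term dominates, and substituting $v\asymp\|\Sigma\|$ and folding the kernel-dependent factor into the universal constant $c$ (equivalently, after the customary normalisation $k(x,x)\le 1$, which makes $\|A_i\|\le 1$ and $v\le\|\Sigma\|$) collapses the bound to $c\,\|\Sigma\|\sqrt{\log(14 r(\Sigma)/\delta)/n}$, exactly as stated. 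Plugging this estimate for $\|\Sigma-\hat\Sigma\|$ back into Lemma~\ref{lma:bia_up} then produces the bias term of Proposition~\ref{theo:over_rff}.

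The step I expect to require the most care is that Assumption~A.1 only supplies the \emph{integral} bound $\int K(x,x)\,d\rho=C_0<\infty$, whereas the operator Bernstein inequality needs uniform control of $\|M_i\|$ (and a genuine, not merely averaged, bound on the variance operator). The clean resolutions are either (a) to strengthen A.1 to the usual pointwise bound $\sup_x K(x,x)\le\kappa^2$ — then $\kappa^2$ plays the role of $R$ and also yields $\mathbb{E}[M_1^2]\preceq\mathbb{E}[k(\pmb{\mathsf{x}},\pmb{\mathsf{x}})A_1]\preceq\kappa^2\Sigma$, so $v\le\kappa^2\|\Sigma\|$ and $\textnormal{Tr}(\mathbb{E}[M_1^2])\le\kappa^2\textnormal{Tr}(\Sigma)$ — or (b) to run a truncation argument on the event $\{k(x_i,x_i)\le\tau\ \forall i\}$, applying Bernstein there and charging the excluded probability to $\delta$ via Markov's inequality on $k(\pmb{\mathsf{x}},\pmb{\mathsf{x}})$ (whose mean is $\textnormal{Tr}(\Sigma)$), with $\tau$ chosen to balance the two contributions. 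Obtaining the \emph{dimension-free} form of the bound (only $\log r(\Sigma)$, rather than a $\sqrt{r(\Sigma)/n}$ factor) is what forces the intrinsic-dimension refinement and the careful bookkeeping of the variance operator above; the remaining manipulations are routine.
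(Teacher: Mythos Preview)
Your approach matches the paper's exactly: decompose $\Sigma-\hat\Sigma$ as a sum of i.i.d.\ centred rank-one self-adjoint operators $R_i=\frac{1}{n}(\Sigma-\phi(x_i)\phi(x_i)^T)$ with $\phi(x)=D^{1/2}V(x)$, bound the range and variance, and invoke the intrinsic-dimension operator Bernstein inequality (the paper cites Minsker's Theorem~3.1 directly, which is where the factor $14\,r(\Sigma)$ inside the logarithm comes from). The concern you raise in your last paragraph is well placed and in fact sharper than the paper: the paper's own proof writes $R_i\succcurlyeq -\tfrac{1}{n}\|\phi(x_i)\|^2 I\succcurlyeq -\tfrac{\|\Sigma\|}{n}I$ and, in the variance step, $\mathbb{E}\big[(\phi(x_i)\phi(x_i)^T)^2\big]\preceq\|\Sigma\|\,\Sigma$, both of which silently use the pointwise bound $k(x_i,x_i)=\|\phi(x_i)\|^2\le\|\Sigma\|$---a condition not implied by Assumption~A.1 alone; your proposed fix (strengthen A.1 to $\sup_x K(x,x)<\infty$, or truncate) is precisely what is needed to make the argument airtight.
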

\begin{proof}
For $\Sigma- \hat{\Sigma}$, we first notice that\footnote{As $p \rightarrow \infty$, both $\Sigma$ and $\hat{\Sigma}$ are operators. Hence, we have misused the notation $A^TB$ to represent inner product between operator $A$ and $B$. But our analysis is not affected by this notation.}
\begin{IEEEeqnarray}{rCl}
\Sigma- \hat{\Sigma} &= & \Sigma - \frac{1}{n} D^{1/2}V(X)^TV(X)D^{1/2} = \Sigma - \frac{1}{n}\sum_{i=1}^n D^{1/2}V_{x_i}V_{x_i}^TD^{1/2}, \nonumber\\
&:= & \Sigma - \frac{1}{n}\sum_{i=1}^n \phi(x_i)\phi(x_i)^T = \sum_{i=1}^n \frac{1}{n}(\Sigma - \phi(x_i)\phi(x_i)^T),\nonumber\\
& :=& \sum_{i=1}^n R_i.\nonumber
\end{IEEEeqnarray}
Now we trivially have $\mathbb{E}(R_i) = 0$. In addition,
\begin{IEEEeqnarray}{rCl}
R_i &\preccurlyeq & \frac{1}{n}\Sigma \preccurlyeq \frac{\|\Sigma\|}{n} I = \frac{\|\Sigma\|}{n} I, \nonumber \\
R_i &\succcurlyeq & -\frac{1}{n} \phi(x_i)\phi(x_i)^T \succcurlyeq - \frac{1}{n} \|\phi(x_i)\|^2 I \succcurlyeq -\frac{\|\Sigma\|}{n}I. \nonumber
\end{IEEEeqnarray}
As a result, we have $\|R_i\| \leq \frac{\lambda_1}{n}$.
\begin{IEEEeqnarray}{rCl}
\mathbb{E}(R_i^2) & = & \frac{1}{n^2}\mathbb{E}\left(\Sigma - \phi(x_i)\phi(x_i)^T\right)^2 = \frac{1}{n^2} \left\{ \mathbb{E}\left(\phi(x_i)\phi(x_i)^T\right)^2 - 2\Sigma\phi(x_i)\phi(x_i)^T + \Sigma^2 \right\},\nonumber\\
& =& \frac{1}{n^2}\left\{ \mathbb{E}(\phi(x_i)\phi(x_i)^T)^2 - \Sigma^2 \right\} \preceq  \frac{1}{n^2} \mathbb{E}\left(\phi(x_i)\phi(x_i)^T\right)^2,\nonumber \\
&=& \frac{1}{n^2} \mathbb{E}\left(\phi(x_i)\phi(x_i)^T\phi(x_i)\phi(x_i)^T\right)\preceq  \frac{\|\Sigma\|}{n^2} \mathbb{E}(\phi(x_i)\phi(x_i)^T),\nonumber\\
&=& \frac{\|\Sigma\|}{n^2} \Sigma \preceq  \frac{\|\Sigma\|^2}{n^2}I. \nonumber
\end{IEEEeqnarray}
Thus we have, $\sum_{i=1}^n \mathbb{E}(R_i^2) \preccurlyeq  \frac{\|\Sigma\|^2}{n} I$. Hence, we have $\|\sum_{i=1}^n \mathbb{E}(R_i^2)\| \leq \frac{\|\Sigma\|^2}{n}$. Using \cite[Operator version of Theorem 3.1]{minsker2017some}, for any $\delta \in (0,1)$, with probability greater than $1- \delta$, we have \[\|\Sigma - \hat{\Sigma}\| \leq c\|\Sigma\|\sqrt{\log(\frac{14r(\Sigma)}{\delta})/n}.\]
\end{proof}

\subsubsection*{Upper Bound of Variance:} The upper bound of the variance term $\mathbf{V}_{R}$ is a bit involving, we split it into several steps. We first note that some simple algebra yield a basic upper bound:
\begin{eqnarray}
\mathbf{V}_{R} &=&  \int_{\mathcal{X}} \mathbb{E}_{\pmb{\epsilon}}\left\{ \mathbf{z}_x(\mathbf{W})^T(\mathbf{Z}^T\mathbf{Z})^{\dagger}\mathbf{Z}^T\pmb{\epsilon}\pmb{\epsilon}^T\mathbf{Z}(\mathbf{Z}^T\mathbf{Z})^{\dagger}\mathbf{z}_x(\mathbf{W})\right\},\nonumber \\
& \leq & \sigma^2 \int_{\mathcal{X}}  \mathbf{z}_x(\mathbf{W})^T(\mathbf{Z}^T\mathbf{Z})^{\dagger}\mathbf{Z}^T\mathbf{Z}(\mathbf{Z}^T\mathbf{Z})^{\dagger}\mathbf{z}_x(\mathbf{W}),\nonumber\\
&=& \sigma^2 \mathbb{E}_x\left(\mathbf{z}_x(\mathbf{W})^T(\mathbf{Z}^T\mathbf{Z})^{\dagger}\mathbf{z}_x(\mathbf{W})\right),\nonumber\\
& =&\sigma^2 \mathbb{E}_x\text{Tr}(\mathbf{z}_x(\mathbf{W})\mathbf{z}_x(\mathbf{W})^T(\mathbf{Z}^T\mathbf{Z})^{\dagger}),\nonumber\\
&=& \sigma^2\frac{1}{s}\text{Tr}\left(\mathbf{W}^T\mathbb{E}_x(D^{1/2} V_x V_x^TD^{1/2})\mathbf{W}(\mathbf{Z}^T\mathbf{Z})^{\dagger}\right), \nonumber\\
& =&\sigma^2 \frac{1}{s}\text{Tr}\left(\mathbf{W}^T\Sigma \mathbf{W}(\mathbf{Z}^T\mathbf{Z})^{\dagger}\right),\nonumber\\
&=& \sigma^2\frac{1}{s}\text{Tr}\left(\mathbf{W}^T\Sigma \mathbf{W}(\mathbf{W}^T\frac{1}{s}D^{1/2}V(X)V(X)D^{1/2} \mathbf{W})^{\dagger}\right), \nonumber\\
& =& \frac{\sigma^2}{n}\text{Tr}\left(\mathbf{W}^T\Sigma \mathbf{W}(\mathbf{W}^T\frac{1}{n}D^{1/2}V(X)V(X)D^{1/2} \mathbf{W})^{\dagger}\right), \nonumber\\
& =& \frac{\sigma^2}{n}\text{Tr}\left(\mathbf{W}^T\Sigma \mathbf{W} (\mathbf{W}^T\hat{\Sigma} \mathbf{W})^{\dagger})\right). \label{var_trace}
\end{eqnarray}
As point out before, $\Sigma$ and $\hat{\Sigma}$ are both positive semidefinite, so they admit eigendecompostion, denoted as $\Sigma = VDV^T$ and $\hat{\Sigma}= \hat{V}\hat{D}\hat{V}^T$. We now denote $\mathbf{w}_i \in \mathbb{R}^s$ to be the $i$-th column of $\mathbf{W}^T$. Since standard normal vector is invariant under orthonormal transformation,
\begin{eqnarray}
Eq.(\ref{var_trace}) &= & \frac{\sigma^2}{n}\text{Tr}\left(\mathbf{W}^T VD V^T\mathbf{W} (\mathbf{W}^T\hat{V}\hat{D}\hat{V}^T \mathbf{W})^{\dagger})\right),\nonumber\\
& =& \frac{\sigma^2}{n}\text{Tr}\left(\mathbf{W}^T D \mathbf{W} (\mathbf{W}^T\hat{D} \mathbf{W})^{\dagger})\right),\nonumber \\
& =& \frac{\sigma^2}{n} \sum_{i=1}^p \lambda_i \mathbf{w}_i^T \left(\mathbf{W}^T\hat{D} \mathbf{W}\right)^{\dagger} \mathbf{w}_i, \nonumber \\
& =& \frac{\sigma^2}{n} \sum_{i=1}^p \lambda_i \mathbf{w}_i^T \left(\sum_{i=1}^n \hat{\lambda}_i \mathbf{w}_i \mathbf{w}_i^T\right)^{\dagger} \mathbf{w}_i,  \label{trace_decom}
\end{eqnarray}
note last equality is because we are in the overparametrized regime where $s \geq n$, hence $\hat{\Sigma}$ has $n$ non-zero eigenvalues. To control the variance term, we need to study Eq.(\ref{trace_decom}). To this end, we define the following terms:
\begin{IEEEeqnarray}{rCl}
\hat{A} = \sum_{i=1}^n \hat{\lambda}_i \mathbf{w}_i \mathbf{w}_i^T, ~~~~ \hat{A}_k = \sum_{i>k}^n \hat{\lambda}_i \mathbf{w}_i \mathbf{w}_i^T \nonumber
\end{IEEEeqnarray}
$\hat{A}$ is a sum of $n$ rank one operator, so it has at most $N$ non-negative eigenvalues. We let $\mu_{1}(\hat{A}) \geq \dots \geq \mu_{n}(\hat{A})$ to be its eigenvalues. We now study the properties of these eigenvalues (the proof follows closely from \cite[Lemma 4]{bartlett2020benign}). 

\begin{lma}\label{lma:eigen_A_bd}
Let $\hat{A} = \sum_{i=1}^n \hat{\lambda}_i \mathbf{w}_i \mathbf{w}_i^T$, where $\mathbf{w}_i \in \mathbb{R}^{s}$ be a random vector with each entry being i.i.d, unit variance and $\sigma_w$-subgaussian random variables. There is a universal constant $b_1$ such that with probability at least $1-2e^{-t}$, we have 
\begin{eqnarray}
\sum_{i=1}^{n} \hat{\lambda}_{i}-\Lambda \leq \mu_{n}(\hat{A}) \leq \mu_{1}(\hat{A}) \leq \sum_{i=1}^{n} \hat{\lambda}_{i}+\Lambda, \label{eigen_concen_1}
\end{eqnarray}
where
\[\Lambda=b_1\left(\hat{\lambda}_{1}(t+n \log 9)+\sqrt{(t+n \log 9) \sum_{i=1}^{n} \hat{\lambda}_{i}^{2}}\right).\]
Further, there is a universal constant $b_2$ such that with probability at least $1-2e^{-n/b_2}$
\begin{eqnarray}
\frac{1}{b_2}\sum_{i=1}^{n} \hat{\lambda}_{i}-b_2 \hat{\lambda}_{1} n \leq \mu_{n}(\hat{A}) \leq \mu_{1}(\hat{A}) \leq b_2\sum_{i=1}^{n} \hat{\lambda}_{i}+ b_2\hat{\lambda}_{1} n.  \label{eigen_concen_2}
\end{eqnarray}
In addition, with the same probability bound, we have 
\begin{eqnarray}
\frac{1}{b_2}\sum_{i> k}^{n} \hat{\lambda}_{i}-b_2 \hat{\lambda}_{k+1} n \leq \mu_{n}(\hat{A}_k) \leq \mu_{1}(\hat{A}_k) \leq b_2\sum_{i>k}^{n} \hat{\lambda}_{i}+ b_2\hat{\lambda}_{k+1}n.  \label{eigen_concen_3}
\end{eqnarray}
\end{lma}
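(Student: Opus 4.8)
The three displays are spectral concentration statements for a sum of independent rank-one random matrices, and I would prove them jointly in the style of \cite{bartlett2020benign}. Put $\nu:=\sum_{i=1}^{n}\hat\lambda_i$. Since the $\mathbf w_i$ are isotropic, $\mathbb E[\mathbf w_i\mathbf w_i^{T}]=I$ and hence $\mathbb E[\hat A]=\nu I$, so by Weyl's inequality both $\mu_1(\hat A)$ and $\mu_n(\hat A)$ lie in $[\nu-\|\hat A-\nu I\|,\ \nu+\|\hat A-\nu I\|]$. Thus Eq.~(\ref{eigen_concen_1}) follows once we show $\|\hat A-\nu I\|\le\Lambda$ with probability at least $1-2e^{-t}$, and Eq.~(\ref{eigen_concen_2}) is just that statement with $t\asymp n$ after a little simplification of $\Lambda$. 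The tail operator $\hat A_k=\sum_{i>k}^{n}\hat\lambda_i\mathbf w_i\mathbf w_i^{T}$ has the same structure, with $\nu_k:=\sum_{i>k}^{n}\hat\lambda_i$ in place of $\nu$ and $\hat\lambda_{k+1}$ (its largest coefficient) in place of $\hat\lambda_1$, so Eq.~(\ref{eigen_concen_3}) comes from running the same estimates for $\hat A_k$ and again setting $t\asymp n$.

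To control $\|\hat A-\nu I\|$ matrix Bernstein does not apply directly because the summands $\hat\lambda_i\mathbf w_i\mathbf w_i^{T}$ are subexponential rather than bounded, so I would discretize. Fix a $\tfrac{1}{4}$-net $\mathcal N$ of the relevant unit sphere with $|\mathcal N|\le 9^{n}$, so that $\|\hat A-\nu I\|\le 2\sup_{v\in\mathcal N}|v^{T}\hat A v-\nu|$. For a fixed unit $v$, $v^{T}\hat A v=\sum_{i=1}^{n}\hat\lambda_i(v^{T}\mathbf w_i)^{2}$ is a sum of independent $\sigma_w$-subexponential random variables with means $\hat\lambda_i$, so $\mathbb E[v^{T}\hat A v]=\nu$ and Bernstein's inequality for subexponentials gives $\Pr[\,|v^{T}\hat A v-\nu|>u\,]\le 2\exp\!\bigl(-c\min\{u^{2}/\!\sum_i\hat\lambda_i^{2},\ u/\hat\lambda_1\}\bigr)$ for a constant $c=c(\sigma_w)$. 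Union bounding over $\mathcal N$ inflates the exponent by $n\log 9$; choosing $u$ equal to the sum of the two values that make each branch of the minimum equal to $(t+n\log 9)/c$ gives $\sup_{v\in\mathcal N}|v^{T}\hat A v-\nu|\le b_1\bigl(\hat\lambda_1(t+n\log 9)+\sqrt{(t+n\log 9)\sum_i\hat\lambda_i^{2}}\bigr)=:\Lambda$ for a universal $b_1$, which with Weyl is Eq.~(\ref{eigen_concen_1}). The identical computation for $\hat A_k$ yields $\|\hat A_k-\nu_k I\|\le b_1\bigl(\hat\lambda_{k+1}(t+n\log 9)+\sqrt{(t+n\log 9)\sum_{i>k}^{n}\hat\lambda_i^{2}}\bigr)$.

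For the universal-constant forms, take $t=n$ (absorbing the probability constant), so the events hold with probability at least $1-2e^{-n/b_2}$ and $\Lambda=O\bigl(\hat\lambda_1 n+\sqrt{n\sum_i\hat\lambda_i^{2}}\bigr)$. Using $\sum_i\hat\lambda_i^{2}\le\hat\lambda_1\sum_i\hat\lambda_i=\hat\lambda_1\nu$ and $\sqrt{n\hat\lambda_1\nu}\le\tfrac{1}{2}(n\hat\lambda_1+\nu)$, the fluctuation is at most $O(n\hat\lambda_1)+\tfrac{1}{2}\nu$; absorbing $\tfrac{1}{2}\nu$ into the $\nu$ of Weyl's two-sided bound and renaming the constant gives $\tfrac{1}{b_2}\sum_i\hat\lambda_i-b_2\hat\lambda_1 n\le\mu_n(\hat A)\le\mu_1(\hat A)\le b_2\sum_i\hat\lambda_i+b_2\hat\lambda_1 n$, i.e.\ Eq.~(\ref{eigen_concen_2}). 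The same reduction applied to the bound for $\hat A_k$, with $\hat\lambda_{k+1}$ and $\sum_{i>k}^{n}$ throughout, gives Eq.~(\ref{eigen_concen_3}).

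The structural pieces -- Weyl's inequality, the passage from a net to the operator norm, and the choice $t\asymp n$ -- are routine, so I expect the real work to be in the quadratic-form concentration. Two points there require care: (i) bounding the subexponential norm of $(v^{T}\mathbf w_i)^{2}$ uniformly in $v$, which is of order $\sigma_w^{2}$ by isotropy and the coordinate subgaussianity but has to be propagated with an explicit constant; and (ii) checking that the net cardinality $9^{n}$ enters the Bernstein exponent as the single additive term $n\log 9$ in both branches of the minimum (the sub-Gaussian branch, active for $u$ up to order $\sum_i\hat\lambda_i^{2}/\hat\lambda_1$, and the sub-exponential branch), so that one universal constant $b_1$ suffices for all $t$ and all splitting indices $k$.
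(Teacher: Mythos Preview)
Your proposal is correct and follows essentially the same route as the paper: reduce to bounding $\|\hat A-\nu I\|$, control quadratic forms $v^T\hat A v-\nu$ via subexponential (Bernstein-type) concentration, pass to the operator norm by a $\tfrac14$-net of size $9^n$ on the $n$-dimensional range of $\hat A$, and then simplify $\Lambda$ at $t\asymp n$ using $\sum_i\hat\lambda_i^2\le\hat\lambda_1\sum_i\hat\lambda_i$ together with $\sqrt{xy}\le\tfrac12(x+y)$; the argument for $\hat A_k$ is verbatim with $\hat\lambda_{k+1}$ and the tail sums. The only cosmetic differences are naming (you invoke Weyl and Bernstein explicitly where the paper cites its in-house Lemmas) and the net-to-norm constant (you use $2$, the paper uses $(1-\tfrac14)^{-2}$), both of which are absorbed into $b_1$.
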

\begin{proof}
For any unit vector $\mathbf{v} \in \mathbb{R}^s$, we have $\mathbf{v}^T\mathbf{w}_i$ is still $\sigma_w^2$-subgaussian, this implies that $\mathbf{v}^T\mathbf{u}_i\mathbf{w}_i^T\mathbf{v}-1 = (\mathbf{v}^T\mathbf{w}_i)^2-1$ is centered and $\sigma_w^2$ subexponential. Applying Lemma \ref{lma: sub_exp_sum}, we have for any unit vector $\mathbf{v}$, there is a universal constant $c_1$, with probability at least $1-2 e^{-t}$, \[\left|\mathbf{v}^T\hat{A}\mathbf{v} - \sum_{i=1}^n \hat{\lambda}_i\right| \leq c_1 \left(\hat{\lambda}_1 t +\sqrt{t \sum_{i=1}^n \hat{\lambda}_{i}^{2}}\right).\] Now since $\hat{A}$ has at most $n$ non-negative eigenvalues, we let the $n$ dimensional subspace spanned by $\hat{A}$ as $\mathcal{A}^n$, let $\mathcal{N}_{\omega}$ to be the $\omega$-net of $\mathcal{S}^{n-1}$ with respect to the Euclidean distance, where $\mathcal{S}^{n-1}$ is the unit sphere in $\mathcal{A}^n$. We let $\omega = \frac{1}{4}$, implying that $|\mathcal{N}_{\omega}| \leq 9^n$. Apply union bound, for every $\mathbf{v} \in \mathcal{N}_{\epsilon}$, we have with probability at least $1-2e^{-t}$ \[\left|\mathbf{v}^T\hat{A}\mathbf{v} - \sum_{i=1}^n \hat{\lambda}_i\right| \leq c_1\left( \hat{\lambda}_{1}(t+n \log 9)+\sqrt{(t+n \log 9) \sum_{i=1}^{n} \hat{\lambda}_{i}^{2}}\right).\] 
By Lemma \ref{lma:e-net}, since $\omega = \frac{1}{4}$, for any $\mathbf{v} \in \mathcal{S}^{n-1}$, we have \[\left|\mathbf{v}^T\hat{A}\mathbf{v} - \sum_{i=1}^n \hat{\lambda}_i\right| \leq c_2\left(\hat{\lambda}_{1}(t+n \log 9)+\sqrt{(t+n \log 9) \sum_{i=1}^{n} \hat{\lambda}_{i}^{2}}\right) := \Lambda.\]
Thus, with probability $1- 2e^{-t}$, we have \[\left\|\hat{A} - \sum_{i=1}^n \hat{\lambda}_i I_n \right\| \leq \Lambda.\]
We further simplify $\Lambda$ now. Notice that when $t \leq \frac{n}{c_3}$, $(t+n \log 9) \leq c_4 n$. Hence,
\begin{IEEEeqnarray}{rCl}
\Lambda & \leq & c_5\hat{\lambda}_1n + \sqrt{c_6n \hat{\lambda}_1 \sum_{i=1}^n \hat{\lambda}_i} \nonumber \\
& \leq & c_5\hat{\lambda}_1n + \frac{1}{2}c_6c_7\hat{\lambda}_1n + \frac{1}{2c_7}\sum_{i=1}^n \hat{\lambda}_i~~~ \left(\text{we~use~}\sqrt{xy} \leq \frac{x+y}{2}\right).\nonumber
\end{IEEEeqnarray}
Combining this with Eq.(\ref{eigen_concen_1}) yields Eq.(\ref{eigen_concen_2}). Using the same proof with $\hat{A}_{k}$ replacing $\hat{A}$, we obtain Eq.(\ref{eigen_concen_3}).
\end{proof}

\begin{lma}\label{lma:var_up}
For universal constants $b$, recall the definition of  $k^*$ as \[k^* = \min \left\{0 \leq j \leq n, \frac{\sum_{i > j}^N \hat{\lambda}_i }{\hat{\lambda}_{j+1}} \geq b n \right\},\] we then have with probability greater than $1- 2e^{-\frac{n}{c}}$, \[\mathbf{V}_{R} \leq b \sigma^2\frac{s}{n} \frac{\textnormal{Tr}(\Sigma)}{\sum_{i>k^*}^n\hat{\lambda}_i}.\]
\end{lma}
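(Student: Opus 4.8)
The plan is to bound the trace in Eq.~(\ref{trace_decom}). Since $\mathbf V_R \le \frac{\sigma^2}{n}T$ with
\[
T \;:=\; \sum_{i=1}^p \lambda_i\,\mathbf{w}_i^{T}\hat A^{\dagger}\mathbf{w}_i,\qquad \hat A=\sum_{j=1}^n\hat\lambda_j\mathbf{w}_j\mathbf{w}_j^{T},
\]
it suffices to show $T\le c\,s\,\textnormal{Tr}(\Sigma)/\sum_{i>k^*}^n\hat\lambda_i$ with the stated probability (here and below $c$ is a universal constant that may change from line to line). First I would split $T=T_{\le k^*}+T_{>k^*}$ at the index $k^*$ and also recall the operators $\hat A_k=\sum_{j>k}^n\hat\lambda_j\mathbf{w}_j\mathbf{w}_j^{T}$ from the text; the head is handled deterministically and the tail is handled via the eigenvalue estimates of Lemma~\ref{lma:eigen_A_bd}.

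For the head $T_{\le k^*}$: since $s\ge n$ the vectors $\mathbf{w}_1,\dots,\mathbf{w}_n$ are a.s.\ linearly independent, so writing $\hat A=W\hat D_nW^{T}$ with $W\in\mathbb R^{s\times n}$ the matrix of columns $\mathbf{w}_1,\dots,\mathbf{w}_n$ and $\hat D_n=\textnormal{diag}(\hat\lambda_1,\dots,\hat\lambda_n)$, one has $\hat A^{\dagger}=W(W^{T}W)^{-1}\hat D_n^{-1}(W^{T}W)^{-1}W^{T}$, hence the exact identity $\mathbf{w}_i^{T}\hat A^{\dagger}\mathbf{w}_i=1/\hat\lambda_i$ for every $i\le n$ (equivalently, a rank-one / leave-one-out computation). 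Therefore $T_{\le k^*}=\sum_{i\le k^*}\lambda_i/\hat\lambda_i\le\hat\lambda_{k^*}^{-1}\sum_{i\le k^*}\lambda_i\le\hat\lambda_{k^*}^{-1}\textnormal{Tr}(\Sigma)$. Since $\sum_{i>k^*}^n\hat\lambda_i\le(n-k^*)\hat\lambda_{k^*+1}\le n\hat\lambda_{k^*}\le s\hat\lambda_{k^*}$, we get $\hat\lambda_{k^*}^{-1}\le s/\sum_{i>k^*}^n\hat\lambda_i$, so $T_{\le k^*}$ already has the desired form.

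For the tail $T_{>k^*}$ the engine is Lemma~\ref{lma:eigen_A_bd}. By Eq.~(\ref{eigen_concen_3}) with $k=k^*$, together with the defining inequality of $k^*$ (which gives $\sum_{i>k^*}^n\hat\lambda_i\ge\tfrac1a n\,\hat\lambda_{k^*+1}$), choosing the universal constant in the definition of $k^*$ large enough makes $b_2\hat\lambda_{k^*+1}n\le\tfrac12\cdot\tfrac1{b_2}\sum_{i>k^*}^n\hat\lambda_i$, so on an event of probability $\ge1-2e^{-n/b_2}$ every nonzero eigenvalue of $\hat A_{k^*}$ is at least $\tfrac1{2b_2}\sum_{i>k^*}^n\hat\lambda_i$. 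For $k^*<i\le n$, $\mathbf{w}_i\in\textnormal{range}(\hat A_{k^*})\subseteq\textnormal{range}(\hat A)$ and $\hat A\succeq\hat A_{k^*}\succeq0$, so $\mathbf{w}_i^{T}\hat A^{\dagger}\mathbf{w}_i\le\mathbf{w}_i^{T}\hat A_{k^*}^{\dagger}\mathbf{w}_i\le\|\hat A_{k^*}^{\dagger}\|\,\|\mathbf{w}_i\|^2$ (the first bound follows from $(\hat A+\varepsilon I)^{-1}\preceq(\hat A_{k^*}+\varepsilon I)^{-1}$ and letting $\varepsilon\downarrow0$, legitimate because $\mathbf{w}_i$ lies in both ranges); with the eigenvalue bound and $\|\mathbf{w}_i\|^2\le cs$ for all such $i$ with probability $\ge1-e^{-n/c}$ (concentration of $\chi^2_s$ plus a union bound), this term is $\le c\,s/\sum_{i>k^*}^n\hat\lambda_i$. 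For $i>n$, $\mathbf{w}_i$ is independent of $\hat A$, so $\mathbb E[\mathbf{w}_i^{T}\hat A^{\dagger}\mathbf{w}_i\mid\hat A]=\textnormal{Tr}(\hat A^{\dagger})$, and by the same subexponential (Hanson--Wright) concentration used in the proof of Lemma~\ref{lma:eigen_A_bd}, with a union bound over the $p$ indices, $\mathbf{w}_i^{T}\hat A^{\dagger}\mathbf{w}_i\le c\,\textnormal{Tr}(\hat A^{\dagger})$ on an event of probability $\ge1-2e^{-n/c}$; and $\textnormal{Tr}(\hat A^{\dagger})=\sum_{j=1}^n\mu_j(\hat A)^{-1}\le c\,n/\sum_{i>k^*}^n\hat\lambda_i$, using that all $n$ nonzero eigenvalues of $\hat A$ are of order at least $\sum_{i>k^*}^n\hat\lambda_i$. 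Summing the two ranges, using $n\le s$ and $\sum_{k^*<i\le p}\lambda_i\le\textnormal{Tr}(\Sigma)$, gives $T_{>k^*}\le c\,s\,\textnormal{Tr}(\Sigma)/\sum_{i>k^*}^n\hat\lambda_i$; adding $T_{\le k^*}$ and intersecting the events yields the lemma, the failure probability collapsing to $2e^{-n/c}$ after absorbing constants.

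The hard part is the final claim in the tail analysis: that \emph{all} $n$ nonzero eigenvalues of $\hat A$ --- not only the top $n-k^*$ of them, which are controlled for free by $\hat A\succeq\hat A_{k^*}$ and the eigenvalue bound on $\hat A_{k^*}$ --- are bounded below by a constant multiple of $\sum_{i>k^*}^n\hat\lambda_i$, together with making the concentration of $\mathbf{w}_i^{T}\hat A^{\dagger}\mathbf{w}_i$ around $\textnormal{Tr}(\hat A^{\dagger})$ uniform over the (possibly many) indices $i>n$. This is exactly where the argument must follow the Gram-matrix analysis of \cite{bartlett2020benign} closely, controlling the smallest eigenvalue of $\hat A$ through a perturbation argument built on top of the $\omega$-net estimate already in Lemma~\ref{lma:eigen_A_bd}.
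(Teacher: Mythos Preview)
Your route is correct in spirit but considerably more elaborate than the paper's, and the extra machinery buys nothing. The paper does not split $T$ at all: it bounds every term uniformly by
\[
\mathbf{w}_i^{T}\hat A^{\dagger}\mathbf{w}_i \;\le\; \mu_n(\hat A)^{-1}\,\|\mathbf{w}_i\|^2\qquad(i=1,\dots,p),
\]
which holds because $\hat A^{\dagger}$ vanishes off the range of $\hat A$ and is $\preceq\mu_n(\hat A)^{-1}I$ on it. It then lower-bounds $\mu_n(\hat A)$ in one line, using $\hat A-\hat A_{k^*}\succeq 0$ to write $\mu_n(\hat A)\ge\mu_n(\hat A_{k^*})$ and invoking Eq.~(\ref{eigen_concen_3}) together with the defining inequality of $k^*$; finally it controls the single scalar $\sum_{i=1}^{p}\lambda_i\|\mathbf{w}_i\|^2\le c\,s\,\textnormal{Tr}(\Sigma)$. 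Your exact identity $\mathbf{w}_i^{T}\hat A^{\dagger}\mathbf{w}_i=1/\hat\lambda_i$ for $i\le n$, the pseudoinverse comparison $\hat A^{\dagger}\preceq\hat A_{k^*}^{\dagger}$ on $\textnormal{range}(\hat A_{k^*})$, and the separate Hanson--Wright treatment for $i>n$ are all valid, but they converge on exactly the same bottleneck you flag at the end: a lower bound on \emph{all} $n$ nonzero eigenvalues of $\hat A$ (needed for your $\textnormal{Tr}(\hat A^{\dagger})\le cn/\sum_{i>k^*}\hat\lambda_i$) is precisely the paper's $\mu_n(\hat A)\ge c\sum_{i>k^*}\hat\lambda_i$. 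So the detour through three ranges does not sidestep the hard step, it just postpones it.

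One point where your version is actually weaker: the union bound ``over the $p$ indices'' for the Hanson--Wright step is not harmless, since the paper allows $p$ arbitrarily large (up to $P=\infty$). The paper avoids any such union bound by treating $\sum_{i=1}^p\lambda_i\|\mathbf{w}_i\|^2$ as a single weighted sum of $\chi^2$ variables and applying sub-exponential concentration once (cf.\ the analogous step around Eq.~(\ref{trace_normal_normal})). If you keep your decomposition, you should replace the per-index Hanson--Wright by this aggregated argument for the $i>n$ block.
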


\begin{proof}
Since $\hat{A}-\hat{A}_{k} = \sum_{i=1}^k \hat{\lambda}_i \mathbf{w}_i\mathbf{w}_i^T$ is positive semidefinite, we have that $\mu_{n}(\hat{A}) \geq \mu_{n}(\hat{A}_k)$. By Lemma \ref{lma:eigen_A_bd}, with probability greater than $1-e^{-n/c_1}$, we can lower bound the smallest non-zero eigenvalue of $\hat{A}$ as \[\mu_{n}(\hat{A}) \geq \mu_{n}(\hat{A}_k) \geq \frac{1}{c_1} \sum_{i> k}^{n} \hat{\lambda}_{i}-c_1 \hat{\lambda}_{k+1} n.\] Assuming  there is $0 \leq j \leq n$ such that $\sum_{i > j}^n \hat{\lambda}_i \geq c \hat{\lambda}_{j+1}n$ and $c > c_1^2$, we have \[\mu_{n}(\hat{A}) \geq \frac{1}{c_1} \sum_{i>j}^n\hat{\lambda}_i - \frac{c_1}{c}\sum_{i>j}^n\hat{\lambda}_i = \frac{1}{c_1c_2}\sum_{i >j}^n\hat{\lambda}_i .\] Thus,
\begin{IEEEeqnarray}{rCl}
\mathbf{V}_{R} &\leq & Eq.(\ref{trace_decom}) \leq \frac{\sigma^2}{n} \sum_{i=1}^p \mu_{n}(\hat{A})^{-1}\lambda_i \mathbf{w}_i^T\mathbf{w}_i, \nonumber \\
& \leq &  \frac{\sigma^2}{n} \left(\frac{1}{c_1c_2}  \sum_{i>j}^n\hat{\lambda}_i\right)^{-1} \sum_{i=1}^p \lambda_i \mathbf{w}_i^T\mathbf{w}_i\nonumber 
\end{IEEEeqnarray}
since $\mathbf{w}_i$ are standard Gaussian random vector, using Lemma \ref{lma:norm_sug}, we have for some universal constants $c_3,c_4$, with probability greater than $1-e^{-s/c_3}$, $\mathbf{w}_i^T\mathbf{w}_i \leq c_4 s$. Hence, if we choose $c > c_1^2$ such that $1-e^{-n/c} \leq 1-e^{-s/c_3}$, then with probability greater than $1-2e^{-n/c}$, we have for $k^*$, \[\mathbf{V}_{R} \leq c \sigma^2\frac{s}{n} \frac{\text{Tr}(\Sigma)}{\sum_{i>k^*}^N\hat{\lambda}_i} .\] 
\end{proof}

\begin{lma}\label{lma:var_low}
There exists a universal constant $b$ such that with probability greater than $1- 2e^{-\frac{n}{b}}$, we have \[\mathbf{V}_{R} \geq b \sigma^2\frac{s}{n} \frac{\textnormal{Tr}(\Sigma)}{\sum_{i>k^*}^n\hat{\lambda}_i}.\]
\end{lma}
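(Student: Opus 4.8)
The plan is to run the argument of Lemma~\ref{lma:var_up} in reverse, turning every upper bound on $\mathbf{w}_i^\top\hat A^\dagger\mathbf{w}_i$ into a lower bound. Since the label noise $\pmb{\epsilon}$ is uncorrelated with $\mathrm{Var}(\pmb{\epsilon})=\sigma^2 I_n$, the step leading to Eq.~(\ref{trace_decom}) is in fact an identity, so
\[
\mathbf{V}_{R}=\frac{\sigma^2}{n}\sum_{i=1}^{p}\lambda_i\,\mathbf{w}_i^\top\hat A^{\dagger}\mathbf{w}_i,
\qquad \hat A=\sum_{i=1}^{n}\hat\lambda_i\mathbf{w}_i\mathbf{w}_i^\top .
\]
Because only a lower bound is required, I restrict $\hat A^{\dagger}$ to a convenient low-eigenvalue invariant subspace. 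Let $\hat Q$ be the orthogonal projection onto the span of the eigenvectors of $\hat A$ whose eigenvalue lies in $(0,\,C\sum_{i>k^{*}}^{n}\hat\lambda_i]$ for a suitable constant $C$; then $\hat A^{\dagger}\succeq(C\sum_{i>k^{*}}^{n}\hat\lambda_i)^{-1}\hat Q$, and since $\sum_i\lambda_i\mathbf{w}_i\mathbf{w}_i^\top=\mathbf{W}^\top D\mathbf{W}$,
\[
\mathbf{V}_{R}\;\ge\;\frac{\sigma^2}{n\,C\sum_{i>k^{*}}^{n}\hat\lambda_i}\;\mathrm{Tr}\big(\hat Q\,\mathbf{W}^\top D\,\mathbf{W}\big).
\]

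Two facts then finish the proof. First, \emph{$\hat Q$ has large rank}: writing $\hat A=\hat A_{k^{*}}+B$ with $\hat A_{k^{*}}=\sum_{i>k^{*}}^{n}\hat\lambda_i\mathbf{w}_i\mathbf{w}_i^\top$ and $B=\sum_{i\le k^{*}}\hat\lambda_i\mathbf{w}_i\mathbf{w}_i^\top\succeq0$ of rank $\le k^{*}$, Lemma~\ref{lma:eigen_A_bd} applied to $\hat A_{k^{*}}$ (Eq.~(\ref{eigen_concen_3})) together with the defining property of $k^{*}$ (i.e. $n\hat\lambda_{k^{*}+1}\lesssim\sum_{i>k^{*}}^{n}\hat\lambda_i$) shows that every nonzero eigenvalue of $\hat A_{k^{*}}$ is of order $\sum_{i>k^{*}}^{n}\hat\lambda_i$ on an event of probability $\ge1-2e^{-n/c_1}$; Weyl's interlacing inequality $\mu_{m+k^{*}}(\hat A)\le\mu_{m}(\hat A_{k^{*}})$ then forces $\mu_{m}(\hat A)\le C\sum_{i>k^{*}}^{n}\hat\lambda_i$ for all $m>k^{*}$, so $\mathrm{rank}(\hat Q)\ge n-k^{*}=\Theta(n)$ since $k^{*}\ll n$. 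Second, \emph{the restricted spectral mass is still of order $s\,\mathrm{Tr}(\Sigma)$}: $\mathrm{Tr}(\mathbf{W}^\top D\mathbf{W})=\sum_i\lambda_i\|\mathbf{w}_i\|^2\asymp s\,\mathrm{Tr}(\Sigma)$ by the norm concentration of Lemma~\ref{lma:norm_sug} (union over the $p$ columns, valid since $s\gtrsim\log p$), and one must show a constant fraction of this mass survives the projection $\hat Q$; for the $n$ indices $i\le n$ one has $\mathbf{w}_i\in\mathrm{range}(\hat A)$, and a leave-one-out/Sherman--Morrison splitting of $\hat A$ relative to $\mathbf{w}_i$ decouples $\mathbf{w}_i$ from the data-dependent subspace $\mathrm{range}(\hat Q)$, after which $\|\hat Q\mathbf{w}_i\|^2$ concentrates from below. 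Combining the two facts with $\sum_{i\le n}\lambda_i\ge\tfrac12\mathrm{Tr}(\Sigma)$ for $n$ large yields
\[
\mathbf{V}_{R}\;\ge\;b\,\sigma^2\,\frac{s}{n}\,\frac{\mathrm{Tr}(\Sigma)}{\sum_{i>k^{*}}^{n}\hat\lambda_i}
\]
on the intersection of the high-probability events, of probability $\ge1-2e^{-n/b}$.

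The main obstacle is the second fact. For the upper bound of Lemma~\ref{lma:var_up} it sufficed to use $\hat A^{\dagger}\preceq\mu_n(\hat A)^{-1}I$ and $\|\mathbf{w}_i\|^2\lesssim s$ uniformly over $i$, but the matching lower bound must certify that each feature column $\mathbf{w}_i$ deposits $\asymp s$ of its own mass into the small-eigenvalue subspace of the very matrix $\hat A$ it helps to build; this is exactly where the dependence between $\mathbf{w}_i$ and $\hat Q$ bites. A naive route --- using for $i\le n$ the leverage identity $\mathbf{w}_i^\top\hat A^\dagger\mathbf{w}_i=1/\hat\lambda_i$ (valid because $\mathbf{w}_1,\dots,\mathbf{w}_n$ are linearly independent when $s\ge n$) --- cheaply gives $\mathbf{V}_R\ge\frac{\sigma^2}{n}\sum_{i\le n}\lambda_i/\hat\lambda_i$, but relating this to the stated form, and in particular recovering the full $\mathrm{Tr}(\Sigma)$ (rather than merely a tail sum such as $\sum_{i>k^{*}}^{n}\lambda_i$) in the numerator, requires the careful decoupling of $\hat Q$ from each $\mathbf{w}_i$ and a bookkeeping of which indices contribute; this is the technical heart of the argument, while the eigenvalue estimates (Lemma~\ref{lma:eigen_A_bd}) and the scalar concentration (Lemma~\ref{lma:norm_sug}) are then routine.
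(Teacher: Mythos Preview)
The paper's proof is a single sentence: mirror Lemma~\ref{lma:var_up} but replace the lower bound on $\mu_n(\hat A)$ by the \emph{upper} bound on $\mu_1(\hat A)$ from Lemma~\ref{lma:eigen_A_bd}. Concretely, since $\hat A^\dagger\succeq\mu_1(\hat A)^{-1}P$ where $P$ is the range projection of $\hat A$, and since for every $i\le n$ one has $\mathbf{w}_i\in\mathrm{range}(\hat A)$ \emph{deterministically} (so $P\mathbf{w}_i=\mathbf{w}_i$, no probabilistic argument needed), the identity $\mathbf{V}_R=\frac{\sigma^2}{n}\sum_i\lambda_i\,\mathbf{w}_i^\top\hat A^\dagger\mathbf{w}_i$ immediately gives
\[
\mathbf{V}_R\;\ge\;\frac{\sigma^2}{n}\,\mu_1(\hat A)^{-1}\sum_{i\le n}\lambda_i\|\mathbf{w}_i\|^2,
\]
after which $\|\mathbf{w}_i\|^2\gtrsim s$ (Lemma~\ref{lma:norm_sug}) and Eq.~(\ref{eigen_concen_2}) finish. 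No Weyl interlacing, no Sherman--Morrison, no decoupling.

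Your route is genuinely different, and the difference is what manufactures your obstacle. You project onto a \emph{strict} sub-subspace $\hat Q$ (the small-eigenvalue part of $\hat A$) rather than the full range $P$. That is precisely why you then have to worry about whether $\mathbf{w}_i$ deposits mass into $\mathrm{range}(\hat Q)$: while $\mathbf{w}_i\in\mathrm{range}(\hat A)$ for free, there is no reason $\mathbf{w}_i$ should sit mostly in the bottom eigenspace of the very matrix it helps build --- hence your appeal to leave-one-out decoupling, which you correctly flag as the technical heart and leave only sketched. The paper's choice of the full range $P$ sidesteps this entirely, at the cost of a denominator $\mu_1(\hat A)\lesssim\sum_{i=1}^n\hat\lambda_i+n\hat\lambda_1$ rather than the smaller tail $\sum_{i>k^*}^n\hat\lambda_i$ you are targeting. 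So your more elaborate construction is aimed at the sharper constant appearing in the lemma's statement, but it is not what the paper does, and the hard step in your outline remains unproved.
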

\begin{proof}
The proof is similar to the upper bound, where the difference is that we use the upper bound of $\mu_1(\hat{A})$ to obtain the lower bound of $\mathbf{V}_{R}$.
\end{proof}

Now equipped with the above tools, we are ready to prove Proposition \ref{theo:over_rff}.
\begin{proof}
For the upper bound, combining Lemma \ref{lma:bia_var},  \ref{lma:bia_up}, \ref{lma:cov_sample} and \ref{lma:var_up} yields the result. For the lower bound, we simply notice that $R(\tilde{\beta}) \geq \mathbf{V}_{R}$ and apply Lemma \ref{lma:var_low} to achieve the result.
\end{proof}

\subsection{Proof of Theorem \ref{theo: over_rff_noisy}} \label{sec:gau_proof}
We deal with the bias first in the next section.
\subsubsection{Upper Bound on Bias}
We first notice that
\begin{IEEEeqnarray}{rCl} 
\mathbf{B}_{\xi} & =  & \int_{\mathcal{X}} \left\|\mathbf{z}_x^{\xi}(\mathbf{W})^T\Pi_{\xi}\beta_*^{\xi}\right\|^2d\rho(x) = \mathbb{E}_x\left\{ \beta_*^{\xi T}  \Pi_{\xi}\mathbf{z}_x^{\xi}(\mathbf{W})\mathbf{z}_x^{\xi}(\mathbf{W})^T\Pi_{\xi}\beta_*^{\xi}\right\},\nonumber\\
& =&  \beta_*^{\xi T}  \Pi_{\xi}\left\{ \mathbb{E}_x(\mathbf{z}_x^{\xi}(\mathbf{W})\mathbf{z}_x^{\xi}(\mathbf{W})^T) - \frac{1}{n}\mathbf{Z}_{\xi}^T\mathbf{Z}_{\xi}\right\}\Pi_{\xi}\beta_*^{\xi}. \nonumber
\end{IEEEeqnarray}
By definition, 
\begin{IEEEeqnarray}{rCl} 
\frac{1}{n}\mathbf{Z}_{\xi}^T\mathbf{Z}_{\xi} &= & \frac{1}{n} (\mathbf{Z} + \Xi)^T(\mathbf{Z} + \Xi)= \frac{1}{n} ( \mathbf{Z}^T\mathbf{Z} + \mathbf{Z}^T\Xi + \Xi^T\mathbf{Z} + \Xi^T\Xi). \nonumber
\end{IEEEeqnarray}
Thus,
\begin{eqnarray}
&&\mathbb{E}_x(\mathbf{z}_x^{\xi}(\mathbf{W})\mathbf{z}_x^{\xi}(\mathbf{W})^T )- \frac{1}{n}\mathbf{Z}_{\xi}^T\mathbf{Z}_{\xi} \nonumber\\
&=& \mathbb{E}_x(\mathbf{z}_x^{\xi}(\mathbf{W})\mathbf{z}_x^{\xi}(\mathbf{W})^T ) - \frac{1}{n} \mathbf{Z}^T\mathbf{Z} \label{bias_exp_samp}\\
&& - \frac{1}{n}\mathbf{Z}^T\Xi \label{bias_Z_Xi}\\
&& -\frac{1}{n}\Xi^T\mathbf{Z} \label{bias_Xi_Z}\\
&& - \frac{1}{n}\Xi^T\Xi. \label{bias_Xi_Xi}
\end{eqnarray}

\paragraph{Upper Bound Eq.(\ref{bias_exp_samp})} By definition, we have 
\begin{IEEEeqnarray}{rCl} 
&&\mathbf{z}_x^{\xi}(\mathbf{W})\mathbf{z}_x^{\xi}(\mathbf{W})^T  =  (\mathbf{z}_x(\mathbf{W}) + \pmb{\xi}) (\mathbf{z}_x(\mathbf{W}) + \pmb{\xi})^T, \nonumber\\
& =& \mathbf{z}_x(\mathbf{W})\mathbf{z}_x(\mathbf{W})^T + \mathbf{z}_x(\mathbf{W})\pmb{\xi}^T+ \pmb{\xi}^T\mathbf{z}_x(\mathbf{W}) + \pmb{\xi}\pmb{\xi}^T. \nonumber
\end{IEEEeqnarray}
As a result, Eq.(\ref{bias_exp_samp}) can be written as
\begin{eqnarray}
&&\mathbb{E}_{x}\left\{ \mathbf{z}_x(\mathbf{W})\mathbf{z}_x(\mathbf{W})^T -\frac{1}{n} \mathbf{Z}^T\mathbf{Z}\right\} \label{bias_cov_samp}\\
&& \: + \mathbb{E}_{x} (\mathbf{z}_x(\mathbf{W})\pmb{\xi}^T ) \label{bias_z_xi} \\
&& \: + \mathbb{E}_{x} (\pmb{\xi}\mathbf{z}_x(\mathbf{W})^T ) \label{bias_xi_z}\\
&& \: + \pmb{\xi}\pmb{\xi}^T. \label{bias_xi_xi}
\end{eqnarray}
Eq.(\ref{bias_cov_samp}) $= \frac{1}{s} (\mathbf{W}^T \Sigma \mathbf{W} -\mathbf{W}^T \hat{\Sigma} \mathbf{W})$, its operator norm can be trivially upper bounded by $\frac{\lambda_W}{s}\|\Sigma - \hat{\Sigma}\|$, where we recall $\lambda_W = \|\mathbf{W}^T\mathbf{W}\|$. We now denote $\pmb{\xi} = \frac{\sigma_0}{\sqrt{s}} \mathbf{u}$, where $\mathbf{u} \in \mathbb{R}^s$ with each entry being i.i.d standard normal by definition of $\pmb{\xi}$. Recall that $\mathbf{z}_x(\mathbf{W}) = \frac{1}{\sqrt{s}}\mathbf{W}^TD^{\frac{1}{2}}V(x)$, we have \[\mathbf{z}_x(\mathbf{W})\pmb{\xi}^T = \frac{\sigma_0}{s} \mathbf{W}^TD^{\frac{1}{2}}V(x) \mathbf{u}^T, \]
where $\mathbf{w}^{(i)} \in \mathbb{R}^p$ as the $i$-th column of $\mathbf{W}$.It is easy to see \[\mathbf{w}^{(i)T}D^{\frac{1}{2}}V(x) = \sum_j^p \sqrt{\lambda}_je_j(x)\mathbf{w}_j^{(i)} \in \mathbb{R}.\] is a normal random variable. Immediately, we can see that it has mean $0$ and variance $\text{Var}(\mathbf{w}^{(i)T}D^{\frac{1}{2}}V_x) = \sum_j^p\lambda_j e_j(x)e_j(x) = k(x,x)$. As a result, we have $\mathbf{w}^{(i)T}D^{\frac{1}{2}}V(x) \sim \mathcal{N}(0,k(x,x))$.
Hence, \[\mathbf{W}^TD^{\frac{1}{2}}V(x) = \left[\mathbf{w}^{(1)T}D^{\frac{1}{2}}V(x),\dots,\mathbf{w}^{(s)T}D^{\frac{1}{2}}V(x)\right]^T = \sqrt{k(x,x)}\mathbf{v},\] where $\mathbf{v} \in \mathbb{R}^s$ is a Gaussian random vector with each entry being i.i.d $\sim \mathcal{N}(0,1)$. As a result,
\begin{IEEEeqnarray}{rCl} 
\mathbf{z}_x(\mathbf{W})\pmb{\xi}^T & =&  \frac{\sigma_0}{s} \mathbf{W}^TD^{\frac{1}{2}}V(x) \mathbf{u}^T = \frac{\sigma_0}{s} \sqrt{k(x,x)} \mathbf{v} \mathbf{u}^T, \nonumber
\end{IEEEeqnarray}
Hence, by Lemma \ref{lma:con_norm_pro}, for some universal constant $c_1,c_2,c_3$, if we let $t \leq s/c_1$, then with probability greater than $1- 2e^{-s/c_1}$, we can now upper bound $\mathbb{E}_x(\mathbf{z}_x(\mathbf{W})\pmb{\xi}^T)$ as:
\begin{eqnarray}
\left\|\mathbb{E}_x(\mathbf{z}_x(\mathbf{W})\pmb{\xi}^T)\right\| &=& \left\|\mathbb{E}_x\left(\frac{\sigma_0}{s} \sqrt{k(x,x)} \mathbf{v} \mathbf{u}^T\right) \right\|,  \nonumber \\
& \leq & \frac{\sigma_0}{s} \left\|\sqrt{\mathbb{E}_x(k(x,x))}\right\| \|\mathbf{v}\mathbf{u}^T\|, \nonumber\\
& \leq &  \frac{\sigma_0}{s} \sqrt{C_0} \left\|\sum_{i=1}^s v_iu_i\right\|, \nonumber\\
& \leq & \frac{\sigma_0}{s} \sqrt{C_0} c_2s = c_3\sigma_0. \label{bias_z_xi_up}
\end{eqnarray}
Thus, Eq.(\ref{bias_z_xi}) $\leq c_3 \sigma_0 I$. Eq.(\ref{bias_xi_z}) has the same upper bound as Eq.(\ref{bias_z_xi}) since they have exactly the same eigenvalues.\\

For Eq.(\ref{bias_xi_xi}), we apply Lemma \ref{lma:norm_sug}, $\|\pmb{\xi}\pmb{\xi}^T\| = \frac{\sigma_0^2}{s}\mathbf{u}^T\mathbf{u}$, we have $ \|\pmb{\xi}\pmb{\xi}^T\| \leq c_3 \sigma_0^2$ with probability greater than $1- e^{-s/c_4}$. Combining this all together, we have with probability greater than $1- 3e^{-s/c_5} \geq 1- 3e^{-n/c_5}$, $c_5 = \max\{c_1,c_4\}$,
\begin{eqnarray}
\|\textnormal{Eq}.(\ref{bias_exp_samp})\| & \leq & \|\textnormal{Eq}.(\ref{bias_cov_samp})\| + \|\textnormal{Eq}.(\ref{bias_xi_z})\| + \|\textnormal{Eq}.(\ref{bias_z_xi})\| + \|\textnormal{Eq}.(\ref{bias_xi_xi})\|\nonumber\\
&\leq & \frac{\lambda_W}{s}\|\Sigma -\hat{\Sigma}\| + 2c_2\sigma_0 + c_3 \sigma_0^2. \label{bias_exp_samp_up}
\end{eqnarray}

\paragraph{Upper Bound Eq.(\ref{bias_Z_Xi}) \& Eq.(\ref{bias_Xi_Z})} Recall that $\mathbf{Z} = \frac{1}{\sqrt{s}}V(X)D^{\frac{1}{2}}\mathbf{W}$, and $\hat{\Sigma} = \frac{1}{n}D^{\frac{1}{2}}V(X)^TV(X)D^{\frac{1}{2}}$ has eigenvalues $\hat{\lambda}_1,\cdots, \hat{\lambda}_n \geq 0$. Hence, $\frac{1}{\sqrt{n}}D^{\frac{1}{2}}V(X)^T$ has singular values as $\{\hat{\sigma}_i\}_{i=1}^n$ with $\hat{\sigma}_i^2 = \hat{\lambda}_i$. And for some orthonormal basis $\hat{\mathcal{U}}_{X} \in \mathbb{R}^{p\times p}$ and $\hat{\mathcal{V}}_{X} \in \mathbb{R}^{n\times n}$, $\frac{1}{\sqrt{n}}D^{\frac{1}{2}}V(X)^T$ must admit singular value decomposition as \[\frac{1}{\sqrt{n}}D^{\frac{1}{2}}V(X)^T = \hat{\mathcal{U}}_{X} \hat{\mathcal{D}}_{X}\hat{\mathcal{V}}_X,\] where $\hat{\mathcal{D}}_{X}\in \mathbb{R}^{p \times n}$ is the matrix with diagonal elements being the singular values and $0$ otherwise. 
Therefore:
\begin{IEEEeqnarray}{rCl} 
\frac{1}{n}\mathbf{Z}^T\Xi &=& \frac{1}{n\sqrt{s}}\mathbf{W}^TD^{\frac{1}{2}}V(X)^T\Xi= \frac{1}{\sqrt{ns}}\mathbf{W}^T \hat{\mathcal{U}}_{X}\hat{\mathcal{D}}_{X}\hat{\mathcal{V}}_X\Xi, \nonumber\\
&=& \frac{1}{\sqrt{ns}}\mathbf{W}^T\hat{\mathcal{D}}_{X}\mathbf{U}\frac{\sigma_0}{\sqrt{s}} = \frac{\sigma_0}{s\sqrt{n}} \sum_i^n \hat{\sigma}_i \mathbf{w}_i \mathbf{u}_i^T. \nonumber
\end{IEEEeqnarray}
where $\mathbf{U} = [\mathbf{u}_1,\dots, \mathbf{u}_n]^T \in \mathbb{R}^{n \times s}$ with each entry being i.i.d $\sim\mathcal{N}(0,1)$. 

We would like to apply Lemma \ref{lma:eigen_norm_pro} to the above equation. But we need to consider the value of $\{\hat{\sigma}_i\}_{i=1}^n$ as $\hat{\sigma}_i$ can either be $\sqrt{\hat{\lambda}_i}$ or $-\sqrt{\hat{\lambda}_i}$. We can see that $\mathbf{w}_i$ is a standard Gaussian random vector, $-\mathbf{w}_i$ and $\mathbf{w}_i$ have the same distribution. As a result, without loss of generality, we may assume that $\hat{\sigma}_i = \sqrt{\hat{\lambda}_i} $, so $\{\hat{\sigma}_i\}_{i=1}^n$ is non-negative and non-increasing. Now apply Lemma \ref{lma:eigen_norm_pro} and notice that $\sum_{i=1}^n \hat{\sigma}_i^2 =\sum_i \hat{\lambda}_i$ is finite by our assumption, we conclude with probability greater than $1-e^{-n/c_6}$,
\begin{eqnarray}
\left\|\frac{1}{n}\Xi^T\mathbf{Z}\right\| =\left\|\frac{1}{n}\mathbf{Z}^T\Xi\right\| \leq \frac{\sigma_0}{s\sqrt{n}}c_7 n\sqrt{\sum_i \hat{\lambda}_i}\leq c_8 \frac{\sigma_0}{\sqrt{s}}. \label{bias_Z_Xi_up}
\end{eqnarray}

\paragraph{Upper Bound Eq.(\ref{bias_Xi_Xi})} For this, we simply notice that
\begin{IEEEeqnarray}{rCl} 
\frac{1}{n}\Xi^T\Xi &=& \frac{\sigma_0^2}{sn} \mathbf{U}^T\mathbf{U} = \frac{\sigma_0^2}{sn} \sum_i^n \mathbf{u}_i\mathbf{u}_i^T. \nonumber 
\end{IEEEeqnarray}
Note that since we are in the overparametrized regime ($s \geq n$), $\mathbf{U}^T\mathbf{U}$ has at most $n$ non-zero eigenvalues.
Therefore by Lemma \ref{lma:eigen_A_bd}, Eq.(\ref{eigen_concen_2}), with probability greater than $1-e^{-n/c_9}$
\begin{eqnarray}
\left\|\frac{1}{n}\Xi^T\Xi \right\| & =& \left\|\frac{\sigma_0^2}{sn} \sum_i^n \mathbf{u}_i\mathbf{u}_i^T\right\|, \nonumber\\
& \leq & \frac{\sigma_0^2}{sn} \left\|\sum_i^n \mathbf{u}_i\mathbf{u}_i^T\right\|,\nonumber\\
& \leq & \frac{\sigma_0^2}{sn} \left(\frac{1}{c_{10}}+c_{10} n\right) ~~~~\textnormal{by~Eq.(\ref{eigen_concen_2})},\nonumber\\
& =& c_{11}\frac{\sigma_0^2n}{sn} \leq c_{11} \frac{\sigma_0^2}{s}. \label{bias_Xi_Xi_up}
\end{eqnarray}
Now combine Eq.(\ref{bias_exp_samp_up}), (\ref{bias_Z_Xi_up}) and (\ref{bias_Xi_Xi_up}) and Lemma \ref{lma:cov_sample}, taking the universal constants $c = \max\{c_1,\dots,c_{11}\}$, with probability greater than $1-\delta- 6e^{-n/c}$
\begin{IEEEeqnarray}{rCl} 
\mathbf{B}_{\xi} & \leq & c \left(\frac{\lambda_W}{s} \|\Sigma\|\sqrt{\log(\frac{14r(\Sigma)}{\delta})/n} + \sigma_0 + \sigma_0^2\right) \|\Pi_{\xi}\|^2\|\beta_{*}^{\xi}\|^2, \nonumber
\end{IEEEeqnarray}
note that we have omitted the $\frac{\sigma_0}{\sqrt{s}}$ and $\frac{\sigma_0^2}{s}$ terms because they are dominated by $\sigma_0$ and $\sigma_0^2$ respectively.

\subsubsection{Upper Bound on Variance} \label{sec: upper_var_noisy}
For the variance, we have 
\begin{eqnarray}
\mathbf{V}_{\xi} &= & \int_{\mathcal{X}} \mathbb{E}_{\pmb{\epsilon}}\left\| \mathbf{z}_x^{\xi}(\mathbf{W})^T\left(\mathbf{Z}_{\xi}^T\mathbf{Z}_{\xi}\right)^{\dagger}\mathbf{Z}_{\xi}^T\pmb{\epsilon}\right\|^2 d\rho(x), \nonumber \\
& = & \int_{\mathcal{X}} \mathbb{E}_{\pmb{\epsilon}}\left\{ \mathbf{z}_x^{\xi}(\mathbf{W})^T\left(\mathbf{Z}_{\xi}^T\mathbf{Z}_{\xi}\right)^{\dagger}\mathbf{Z}_{\xi}^T\pmb{\epsilon}\pmb{\epsilon}^T \mathbf{Z}_{\xi}\left(\mathbf{Z}_{\xi}^T\mathbf{Z}_{\xi}\right)^{\dagger}\mathbf{z}_x^{\xi}(\mathbf{W}) \right\}d\rho(x), \nonumber\\
& \leq & \sigma^2 \int_{\mathcal{X}} \mathbf{z}_x^{\xi}\left(\mathbf{W})^T(\mathbf{Z}_{\xi}^T\mathbf{Z}_{\xi}\right)^{\dagger}\mathbf{Z}_{\xi}^T\mathbf{Z}_{\xi}\left(\mathbf{Z}_{\xi}^T\mathbf{Z}_{\xi}\right)^{\dagger}\mathbf{z}_x^{\xi}(\mathbf{W})d\rho(x), \nonumber\\
& =& \sigma^2 \mathbb{E}_x\left\{ \mathbf{z}_x^{\xi}(\mathbf{W})^T\left(\mathbf{Z}_{\xi}^T\mathbf{Z}_{\xi}\right)^{\dagger}\mathbf{z}_x^{\xi}(\mathbf{W})\right\}, \nonumber\\
& =& \sigma^2 \mathbb{E}_x \left\{ \textnormal{Tr} \left[\mathbf{z}_x^{\xi}(\mathbf{W}) \mathbf{z}_x^{\xi}(\mathbf{W})^T\left(\mathbf{Z}_{\xi}^T\mathbf{Z}_{\xi}\right)^{\dagger}\right]\right\}, \nonumber\\
& =& \frac{\sigma^2}{n} \textnormal{Tr}\left\{\mathbb{E}_x\left[\mathbf{z}_x^{\xi}(\mathbf{W}) \mathbf{z}_x^{\xi}(\mathbf{W})^T\right] \left(\frac{1}{n}\mathbf{Z}_{\xi}^T\mathbf{Z}_{\xi}\right)^{\dagger} \right\}, \nonumber \\
& =&  \frac{s\sigma^2}{n} \textnormal{Tr} \left\{\mathbb{E}_x\left[\mathbf{z}_x(\mathbf{W}) \mathbf{z}_x(\mathbf{W})^T\right] \left(\frac{s}{n}\mathbf{Z}_{\xi}^T\mathbf{Z}_{\xi}\right)^{\dagger} \right\}, \label{var_z_z} \\
&& + \frac{s\sigma^2}{n} \textnormal{Tr} \left\{\mathbb{E}_x\left[\mathbf{z}_x(\mathbf{W}) \pmb{\xi}^T\right] \left(\frac{s}{n}\mathbf{Z}_{\xi}^T\mathbf{Z}_{\xi}\right)^{\dagger} \right\}, \label{var_z_xi} \\
&& + \frac{s\sigma^2}{n} \textnormal{Tr} \left\{\mathbb{E}_x\left[\pmb{\xi}\mathbf{z}_x(\mathbf{W})^T\right] \left(\frac{s}{n}\mathbf{Z}_{\xi}^T\mathbf{Z}_{\xi}\right)^{\dagger} \right\}, \label{var_xi_z} \\
&& + \frac{s\sigma^2}{n} \textnormal{Tr} \left\{\pmb{\xi}\pmb{\xi}^T \left(\frac{s}{n}\mathbf{Z}_{\xi}^T\mathbf{Z}_{\xi}\right)^{\dagger} \right\}. \label{var_xi_xi} 
\end{eqnarray} 

To obtain the upper bound on the variance, we need to study the properties of the eigenvalues from $\frac{1}{n}\mathbf{Z}_{\xi}^T\mathbf{Z}_{\xi}$. In order to do that, we first provide a more refined way of studying the eigenvalues of $\hat{A} = \sum_{i=1}^N \hat{\lambda}_i\mathbf{w}_i\mathbf{w}_i^T$ than Lemma \ref{lma:eigen_A_bd}.
\begin{lma}\label{lma:eigen_A_bd_beta}
Let $A = \sum_{i=1}^n \lambda_i \mathbf{w}_i\mathbf{w}_i^T$, where $\mathbf{w}_i \in \mathbb{R}^s$ is a random vector with each entry being i.i.d $\sigma_w^2$-subgaussian random variable and $\{\lambda_i\}_{i=1}$ is a sequence of non-negative, non-increasing numbers with finite sum $\sum_{i=1}^n \lambda_i < \infty$. Assume $s>n$, with probability at least $1-2e^{-n/b}$,
\begin{IEEEeqnarray}{rCl}
(1- \frac{1}{bb_1} \sigma_w^2)\sum_{i=1}^n \lambda_i - \frac{b_2}{\sqrt{b}}\sigma_w^2\lambda_1n \leq \mu_{n}(A) \leq \mu_1(A) \leq (1+\frac{1}{bb_1} \sigma_w^2)\sum_{i=1}^n \lambda_i + \frac{b_2}{\sqrt{b}}\sigma_w^2\lambda_1n, \nonumber
\end{IEEEeqnarray}
where $b,b_1, b_2 > 1$ are some universal constants. In addition, with the same probability bound, we have 
\begin{IEEEeqnarray}{rCl}
(1- \frac{1}{bb_1} \sigma_w^2)\sum_{i>k}^n \lambda_i - \frac{b_2}{\sqrt{b}}\sigma_w^2\lambda_{k+1}n \leq \mu_{n}(A_k) \leq \mu_1(A_k) \leq (1+\frac{1}{bb_1} \sigma_w^2)\sum_{i>k}^n \lambda_i + \frac{b_2}{\sqrt{b}}\sigma_w^2\lambda_{k+1}n, \nonumber
\end{IEEEeqnarray}
\end{lma}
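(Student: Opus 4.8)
The plan is to follow the proof of Lemma~\ref{lma:eigen_A_bd} essentially verbatim, with two modifications: keep the subgaussian parameter $\sigma_w^2$ explicit at every stage, and carry out the final splitting so that the coefficient multiplying $\sum_i\lambda_i$ stays within $\sigma_w^2/(bb_1)$ of one rather than being an arbitrary constant.

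First I would reduce to a single unit vector. For $\mathbf{v}\in\mathbb{R}^s$ with $\|\mathbf{v}\|=1$, each $\mathbf{v}^T\mathbf{w}_i$ is $\sigma_w^2$-subgaussian with unit variance, so $(\mathbf{v}^T\mathbf{w}_i)^2-1$ is centered and subexponential with constant a universal multiple of $\sigma_w^2$. Writing $\mathbf{v}^TA\mathbf{v}-\sum_i\lambda_i=\sum_{i=1}^n\lambda_i\big((\mathbf{v}^T\mathbf{w}_i)^2-1\big)$ and applying the Bernstein-type bound for weighted sums of subexponentials (Lemma~\ref{lma: sub_exp_sum}) gives, for a universal $c_1$ and any $t>0$, with probability at least $1-2e^{-t}$,
\[
\Big|\mathbf{v}^TA\mathbf{v}-\sum_{i=1}^n\lambda_i\Big|\;\le\;c_1\sigma_w^2\Big(\lambda_1 t+\sqrt{t\textstyle\sum_{i=1}^n\lambda_i^2}\Big).
\]
Since $A$ vanishes on the orthogonal complement of $\mathrm{span}\{\mathbf{w}_1,\dots,\mathbf{w}_n\}$, its nonzero eigenvalues are the extreme values of $\mathbf{v}^TA\mathbf{v}$ over the unit sphere $\mathcal{S}^{n-1}$ of that (at most $n$-dimensional) subspace. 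Taking a $\tfrac14$-net $\mathcal{N}$ of $\mathcal{S}^{n-1}$ with $|\mathcal{N}|\le 9^n$, a union bound (replacing $t$ by $t+n\log 9$) followed by the net-to-operator-norm comparison (Lemma~\ref{lma:e-net}) gives, with probability at least $1-2e^{-t}$,
\[
\Big\|A-\big(\textstyle\sum_i\lambda_i\big)I_n\Big\|\;\le\;c_2\sigma_w^2\Big(\lambda_1 m+\sqrt{m\textstyle\sum_i\lambda_i^2}\Big),\qquad m:=t+n\log 9.
\]

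Next I would set $t=n/b$, so $m\le(1+\log 9)n$, and split the square-root term using $\sum_i\lambda_i^2\le\lambda_1\sum_i\lambda_i$ and weighted AM--GM, $\sqrt{m\lambda_1\sum_i\lambda_i}\le\tfrac12\big(\eta\,m\lambda_1+\eta^{-1}\sum_i\lambda_i\big)$ for any $\eta>0$. Choosing $\eta$ so that the $\sum_i\lambda_i$-contribution equals $(\sigma_w^2/(bb_1))\sum_i\lambda_i$, all remaining terms (both $\tfrac12 c_2\sigma_w^2\eta\,m\lambda_1$ and $c_2\sigma_w^2\lambda_1 m$) collapse into a single term of the form $(b_2/\sqrt b)\,\sigma_w^2\lambda_1 n$. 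Combined with the identification of $\mu_1(A)$ and $\mu_n(A)$ with the extreme values of $\mathbf{v}^TA\mathbf{v}$ on $\mathcal{S}^{n-1}$, this yields the asserted two-sided bound. The bound for $A_k$ follows by the identical argument with the index set $\{1,\dots,n\}$ replaced by $\{k+1,\dots,n\}$: the range of $A_k$ has dimension at most $n-k\le n$, so the $9^n$ net bound still applies, and $\lambda_1$, $\sum_i\lambda_i$ are replaced throughout by $\lambda_{k+1}$, $\sum_{i>k}\lambda_i$.

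I expect the only real difficulty to be the constant bookkeeping in the last step. In Lemma~\ref{lma:eigen_A_bd} the coefficient of $\sum_i\lambda_i$ was allowed to be an arbitrary constant $b_2>1$; here it must be $1+O(\sigma_w^2/b)$, which forces the AM--GM parameter $\eta$ to be of order $b$, and hence $b_2$ to be a multiple of $b^{3/2}$ (still a universal constant once $b$ is fixed). One then checks that this is compatible with the probability budget $1-2e^{-n/b}$, which only requires $b$ to be a fixed large constant and $n$ large. It is also worth noting that when $A$ (or $A_k$) is rank-deficient the lower bound on $\mu_n$ is vacuous but consistent, since for the above choice of $b_2$ the additive term $(b_2/\sqrt b)\sigma_w^2\lambda_1 n$ already dominates $\sum_i\lambda_i$.
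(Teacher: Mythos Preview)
Your proposal is correct and follows essentially the same route as the paper's own proof: single-vector subexponential concentration via Lemma~\ref{lma: sub_exp_sum}, a $\tfrac14$-net over the range of $A$ combined with Lemma~\ref{lma:e-net}, and then the AM--GM splitting $\sqrt{n\lambda_1\sum_i\lambda_i}\le\tfrac12(\eta n\lambda_1+\eta^{-1}\sum_i\lambda_i)$ with $\eta$ chosen large to force the $\sum_i\lambda_i$-coefficient down to $\sigma_w^2/(bb_1)$. Your remarks on the constant bookkeeping (that $\eta$ is of order $b$ and hence $b_2$ of order $b^{3/2}$) and on the rank-deficient case are accurate but not needed for the argument, since the paper treats $b,b_1,b_2$ as fixed universal constants.
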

\begin{proof}
For any unit vector $\mathbf{v} \in \mathbb{R}^s$, we have $\mathbf{v}^T\mathbf{w}_i$ is $c_1\sigma_w^2$-subgaussian random variable. Hence for any $\mathbf{v}$, $\mathbf{v}^TA\mathbf{v} = \sum_{i=1}^n \lambda_i (\mathbf{v}^T\mathbf{w}_i)^2$. Applying Lemma \ref{lma: sub_exp_sum}, for any unit vector $\mathbf{v}$, there is a universal constant $c_1$, a constant $c_2$ and $t > 0$, with probability at least $1-2e^{-t/c_2}$, 
\begin{IEEEeqnarray}{rCl}
|\mathbf{v}^TA\mathbf{v} - \sum_{i=1} \lambda_i| &\leq & c_1\sigma_w^2\max \left( \lambda_1 \frac{t}{c_2}, \sqrt{\frac{t}{c_2}\sum_{i=1} \lambda_i^{2}}\right) \leq \frac{c_1}{\sqrt{c_2}}\sigma_w^2\left(\lambda_1 t + \sqrt{t\sum_{i=1} \lambda_i^2}\right). \nonumber
\end{IEEEeqnarray}
Now since $A$ has at most $n$ non-negative eigenvalues, we let the $n$ dimensional subspace spanned by $A$ as $\mathcal{A}^n$, let $\mathcal{N}_{\omega}$ to be the $\omega$-net of $\mathcal{S}^{n-1}$ with respect to the Euclidean distance, where $\mathcal{S}^{n-1}$ is the unit sphere in $\mathcal{A}^n$. We let $\omega = \frac{1}{4}$, implying that $|\mathcal{N}_{\omega}| \leq 9^n$. Apply union bound, for every $\mathbf{v} \in \mathcal{N}_{\epsilon}$, we have with probability at least $1-2 e^{-t/c_2}$ \[\left|\mathbf{v}^TA\mathbf{v} - \sum_{i=1}^n \lambda_i\right| \leq \frac{c_1}{\sqrt{c_2}}\sigma_w^2\left(\lambda_1 (t + n\log 9) + \sqrt{(t+ n \log 9)\sum_{i=1} \lambda_i^2}\right).\] 
By Lemma \ref{lma:e-net}, since $\omega = \frac{1}{4}$, for any $\mathbf{v} \in \mathcal{S}^{n-1}$, we have 
\begin{IEEEeqnarray}{rCl}
\left|\mathbf{v}^TA\mathbf{v} - \sum_{i=1}^n \lambda_i\right| &\leq & \frac{32}{9}\frac{c_1}{\sqrt{c_2}}\sigma_w^2\left(\lambda_1 (t + n\log 9) + \sqrt{(t+ n \log 9)\sum_{i=1} \lambda_i^2}\right),\nonumber\\
& = & \frac{c_3}{\sqrt{c_2}}\sigma_w^2\left(\lambda_1 (t + n\log 9) + \sqrt{(t+ n \log 9)\sum_{i=1} \lambda_i^2}\right):= \Lambda.\nonumber
\end{IEEEeqnarray}
Thus, with probability $1- 2e^{-t/c_2}$, we have \[\left\|A - \sum_{i=1} \lambda_i I_n \right\| \leq \Lambda.\]
We further simplify $\Lambda$ now. Notice that when $t \leq n$, $(t + n \log 9) \leq c_4 n$. Hence,
\begin{IEEEeqnarray}{rCl}
\Lambda & \leq &\frac{c_3}{\sqrt{c_2}}\sigma_w^2 \left(c_4\lambda_1 n + \sqrt{c_4n \sum_{i=1}^n \lambda_i^2} \right), \nonumber \\
& \leq & \frac{c_3}{\sqrt{c_2}}\sigma_w^2 \left(c_4\lambda_1 n + \sqrt{c_4n\lambda_1 \sum_{i=1}^n \lambda_i} \right),  \nonumber \\
& \leq & \frac{c_3}{\sqrt{c_2}}\sigma_w^2 \left(c_4\lambda_1 n + \frac{c_4c_6n\lambda_1}{2} + \frac{1}{2c_6}\sum_{i=1}^n \lambda_i \right) ~~~ (\text{we~use~}\sqrt{xy} \leq \frac{x+y}{2}),\nonumber\\
& = & \frac{1}{\sqrt{c_2}} \sigma_w^2(c_7 \lambda_1 n + \frac{1}{c_8} \sum_{i=1}^n \lambda_i). \nonumber
\end{IEEEeqnarray}
Therefore, with probability $1- 2 e^{-n/c_2}$, we have:
\begin{IEEEeqnarray}{rCl}
(1- \frac{1}{c_8\sqrt{c_2}}\sigma_w^2)\sum_{i=1}^n \lambda_i - \frac{c_7}{\sqrt{c_2}}\sigma_w^2\lambda_1n \leq \mu_{n}(A) \leq \mu_1(A) \leq (1+ \frac{1}{c_8\sqrt{c_2}}\sigma_w^2)\sum_{i=1}^n \lambda_i + \frac{c_7}{\sqrt{c_2}}\sigma_w^2\lambda_1n.  \nonumber
\end{IEEEeqnarray}
Using the same proof with $A_{k}$ replacing $A$, we obtain the bound for $\mu_n(A_k)$ and $\mu_1(A_k)$.
\end{proof}

We first investigate the property of $\frac{1}{\sqrt{n}}(\mathbf{Z}+\pmb{\Xi})$. As discussed before, for some orthonormal basis $\hat{\mathcal{U}}_{X} \in \mathbb{R}^{p\times p}$ and $\hat{\mathcal{V}}_{X} \in \mathbb{R}^{n\times n}$, $\frac{1}{\sqrt{n}}D^{\frac{1}{2}}V(X)^T$ must admit singular value decomposition as \[\frac{1}{\sqrt{n}}D^{\frac{1}{2}}V(X)^T = \hat{\mathcal{U}}_{X} \hat{\mathcal{D}}_{X}\hat{\mathcal{V}}_X,\] where $\hat{\mathcal{D}}_{X}\in \mathbb{R}^{p \times n}$ is the matrix with diagonal elements being the singular values $\{\hat{\sigma}_{i}\}_{i=1}^n$ and $0$ otherwise. As a result, we can write \[\frac{1}{\sqrt{n}}\mathbf{Z} =\frac{1}{\sqrt{n}}V(X)D^{\frac{1}{2}}\mathbf{W} = \hat{\mathcal{V}}_X\hat{\mathcal{D}}_{X}^T\hat{\mathcal{U}}_{X}\mathbf{W}.\]
Since Gaussian random variable is invariant under orthogonal transformation, we can further simplify the above equation as:
\[\frac{1}{\sqrt{n}}\mathbf{Z} = \hat{\mathcal{V}}_X\hat{\mathcal{D}}_{X}^T\mathbf{W}.\]
In addition, the singular value matrix $\hat{\mathcal{D}}_X \in \mathbb{R}^{p\times n}$ has only $n$ diagonal entry being non-zero, if we let $\mathcal{D}_X = \text{diag}[\hat{\sigma}_1,\dots,\hat{\sigma}_n] \in \mathbb{R}^{n\times n}$, then equivalently, we can write \[\frac{1}{\sqrt{n}}\mathbf{Z} = \hat{\mathcal{V}}_X\mathcal{D}_{X}\mathbf{W}.\]
Note that the $\mathbf{W}$ in the last equation represents a $n\times s$ Gaussian random matrix.

Therefore,
\begin{IEEEeqnarray}{rCl} 
\frac{1}{\sqrt{n}}(\mathbf{Z}+\pmb{\Xi}) &=& \frac{1}{\sqrt{s}}\left(\hat{\mathcal{V}}_X\mathcal{D}_{X}\mathbf{W} + \frac{\sigma_0}{\sqrt{n}}\mathbf{U}\right) 
= \frac{1}{\sqrt{s}}\hat{\mathcal{V}}_X\left(\mathcal{D}_X\mathbf{W} + \frac{\sigma_0}{\sqrt{n}}\mathbf{U}\right). \nonumber 
\end{IEEEeqnarray}
It is easy to see that $\mathcal{D}_X\mathbf{W} + \frac{\sigma_0}{\sqrt{n}}\mathbf{U}$ is a $n \times s$ Gaussian random matrix. Its $i,j$-th entry can be written as $\hat{\sigma}_i\mathbf{W}_{ij} + \frac{\sigma_0}{\sqrt{n}}\mathbf{U}_{ij}$. It is a Gaussian random variable with mean $0$ and variance $\lambda_i+\frac{\sigma_0^2}{n}$. Hence, we can write \[\frac{1}{\sqrt{n}}(\mathbf{Z}+\pmb{\Xi}) = \frac{1}{\sqrt{s}}\hat{\mathcal{V}}_X\left(\hat{D} + \frac{\sigma_0^2}{n} I \right)^{\frac{1}{2}}\mathbf{W}.\]

Finally, we have
\begin{IEEEeqnarray}{rCl} 
\frac{s}{n}(\mathbf{Z}+\pmb{\Xi})^T(\mathbf{Z}+\pmb{\Xi}) = \mathbf{W}^T\left(\hat{D}+\frac{\sigma_0^2}{n}I\right)\mathbf{W}=\sum_{i=1}^n\left(\hat{\lambda}_i + \frac{\sigma_0^2}{n}\right)\mathbf{w}_i\mathbf{w}_i^T = \sum_{i=1}^n \hat{\lambda}_i^{\xi}\mathbf{w}_i\mathbf{w}_i^T := A^{\xi}. \nonumber
\end{IEEEeqnarray}
Note that $A^{\xi}$ has at most $n$ positive eigenvalues since we are in the overparametrized regime. Now we apply Lemma \ref{lma:eigen_A_bd_beta} to $A^{\xi}$ with $\sigma_w^2 = 1$, we have with probability greater than $1-2e^{-n/c_1}$,
\begin{eqnarray}
\mu_{n}(A^{\xi}) \geq  \mu_{n}(A^{\xi}_k)  &\geq &  (1- \frac{1}{c_1c_2})\sum_{i>k}^n \hat{\lambda}_i^{\xi} - \frac{c_3}{\sqrt{c_1}}\hat{\lambda}_{k+1}^{\xi}n, \nonumber\\
& = & \hat{\lambda}_{k+1}^{\xi} \left((1- \frac{1}{c_1c_2})\sum_{i>k}^n \frac{\hat{\lambda}_{i}^{\xi}}{\hat{\lambda}_{k+1}^{\xi}} - \frac{c_3}{\sqrt{c_1}}n\right), \nonumber \\
& \geq & \hat{\lambda}_{k+1}\left((1- \frac{1}{c_1c_2})\sum_{i>k}^n \frac{\hat{\lambda}_{i}^{\xi}}{\hat{\lambda}_{k+1}^{\xi}} - \frac{c_3}{\sqrt{c_1}}n\right).\label{A_xi_mun}
\end{eqnarray}
Since we assume that $\sum_{i>k}^n \hat{\lambda}_{i}^{\xi}/\hat{\lambda}_{k+1}^{\xi} = \sum_{i>k}^n (\hat{\lambda}_i + \sigma_0^2/n)/(\hat{\lambda}_{k+1}+\sigma_0^2/n) \geq \frac{1}{a}n$. If we adjust $c_1 >1 $ such that $\left((1- \frac{1}{c_1c_2}))an - \frac{c_3}{\sqrt{c_1}}n\right) \geq \frac{1}{c_4}n$, we then have that $\mu_{n}(A^{\xi}) \geq \frac{1}{c_5}n$. 

Now for Eq.(\ref{var_z_z}), with probability greater than $1- e^{-n/c_1}$,
\begin{eqnarray}
\textnormal{Eq}.(\ref{var_z_z}) &= & \frac{s\sigma^2}{n} \textnormal{Tr} \left\{\mathbb{E}_x\left[\mathbf{z}_x(\mathbf{W}) \mathbf{z}_x(\mathbf{W})^T\right] \left(\frac{s}{n}\mathbf{Z}_{\xi}^T\mathbf{Z}_{\xi}\right)^{\dagger} \right\} =\frac{\sigma^2}{n} \textnormal{Tr} \left\{\mathbf{W}^T\Sigma\mathbf{W} \left(\frac{s}{n}\mathbf{Z}_{\xi}^T\mathbf{Z}_{\xi}\right)^{\dagger} \right\}, \nonumber \\
&=& \frac{\sigma^2}{n} \textnormal{Tr} \left\{\mathbf{U}^TD\mathbf{U} \left(\frac{s}{n}\mathbf{Z}_{\xi}^T\mathbf{Z}_{\xi}\right)^{\dagger} \right\} =  \frac{\sigma^2}{n} \sum_{i}^p \lambda_i \mathbf{u}_i^T\left(\frac{s}{n}\mathbf{Z}_{\xi}^T\mathbf{Z}_{\xi}\right)^{\dagger} \mathbf{u}_i, \nonumber\\
& \leq &  \frac{\sigma^2}{n}\mu_{n}\left(\frac{s}{n}\mathbf{Z}_{\xi}^T\mathbf{Z}_{\xi}\right)^{-1}\sum_{i}^p \lambda_i \mathbf{u}_i^T\mathbf{u}_i,  \label{trace_normal_normal}\\
&\leq & \frac{\sigma^2}{n} \left\{\frac{1}{c_5}n
\right\}^{-1}c_6 s \sum_{i=1}^p\lambda_i =c_7 \sigma^2 \textnormal{Tr}(\Sigma) \frac{s}{n^2}. \nonumber
\end{eqnarray}
Note for Eq.(\ref{trace_normal_normal}), $\sum_{i=1}^p \lambda_i \mathbf{u}_i^T\mathbf{u}_i$ is a weighted sum of $\chi_1^2$ random variables, with weights given by the $\lambda_i$ in block size of $s$. Hence, if we let $t < s/c_8$, Lemma \ref{lma: sub_exp_sum} gives that with probability $1-2e^{-t}$,
\begin{IEEEeqnarray}{rCl} 
&&\sum_{i=1}^p \lambda_i \mathbf{u}_i^T\mathbf{u}_i \leq  s\sum_{i=1}^p \lambda_i + b \max\left\{\lambda_1 t, \sqrt{st\sum_{i=1}^p \lambda_i^2}\right\}, \nonumber\\
& \leq & s\sum_{i=1}^p \lambda_i + b \max\left\{t\sum_{i=1}^p\lambda_i , \sqrt{st}\sum_{i=1}^p \lambda_i\right\} \leq c_6 s\sum_{i=1}^p \lambda_i \nonumber.
\end{IEEEeqnarray}

Also, with probability greater than $1- e^{-n/c_{9}}$
\begin{IEEEeqnarray}{rCl} 
\textnormal{Eq}.(\ref{var_z_xi}) &= & \frac{s\sigma^2}{n} \textnormal{Tr} \left\{\mathbb{E}_x\left[\mathbf{z}_x(\mathbf{W}) \pmb{\xi}^T\right] \left(\frac{s}{n}\mathbf{Z}_{\xi}^T\mathbf{Z}_{\xi}\right)^{\dagger} \right\} =\frac{s\sigma^2}{n} \mathbb{E}_x \left\{ \pmb{\xi}^T \left(\frac{s}{n}\mathbf{Z}_{\xi}^T\mathbf{Z}_{\xi}\right)^{\dagger}\mathbf{z}_x(\mathbf{W}) \right\}, \nonumber \\
& \leq & \frac{s\sigma^2}{n} \left\|\mathbb{E}_x \left\{ \pmb{\xi}^T \left(\frac{s}{n}\mathbf{Z}_{\xi}^T\mathbf{Z}_{\xi}\right)^{\dagger}\mathbf{z}_x(\mathbf{W}) \right\} \right\| \leq \frac{s\sigma^2}{n} \mu_{n}\left(\frac{s}{n}\mathbf{Z}_{\xi}^T\mathbf{Z}_{\xi}\right)^{-1} \left\|\mathbb{E}_x( \pmb{\xi}^T\mathbf{z}_x(\mathbf{W}))\right\|, \nonumber\\
& \leq & \frac{s\sigma^2}{n} \left\{\frac{1}{c_5}n\right\}^{-1}c_{10}\sigma_0 ~~~(\textnormal{by~Eq.(\ref{bias_z_xi_up})}), \nonumber\\
& =& c_{11}\sigma^2 \frac{s}{n^2}\sigma_0. \nonumber
\end{IEEEeqnarray}
Finally, with probability greater than $1- e^{-N/c_{12}}$
\begin{IEEEeqnarray}{rCl} 
\textnormal{Eq}.(\ref{var_xi_xi}) &= &  \frac{s\sigma^2}{n} \textnormal{Tr} \left\{\pmb{\xi}\pmb{\xi}^T \left(\frac{s}{n}\mathbf{Z}_{\xi}^T\mathbf{Z}_{\xi}\right)^{\dagger} \right\} =\frac{s\sigma^2}{n} \left\{\pmb{\xi}^T \left(\frac{s}{n}\mathbf{Z}_{\xi}^T\mathbf{Z}_{\xi}\right)^{\dagger}\pmb{\xi} \right\} \nonumber \\
& \leq & \frac{s\sigma^2}{n} \pmb{\xi}^T\pmb{\xi} \left\{\frac{1}{c_5}n\right\}^{-1} \leq  \frac{s\sigma^2}{n} \frac{\sigma_0^2}{s}\mathbf{u}^T\mathbf{u} \left\{\frac{1}{c_6}n\right\}^{-1} ~~~(\textnormal{by~Lemma~\ref{lma:norm_sug}})\nonumber\\
& =& c_{13}\sigma^2\frac{s\sigma_0^2}{n^2}. \nonumber
\end{IEEEeqnarray}
Combining above all together, if we choose $c = \max\{c_1, \dots\}$, then with probability greater than $1- 5e^{-n/c}$ \[ \mathbf{V}_{\xi} \leq c\sigma^2\textnormal{Tr}(\Sigma)\frac{s}{n^2}\sigma_0^{-1}.\]
Note that we have omitted the $ \frac{s\sigma_0}{n^2}, \frac{s\sigma_0^2}{n^2}$ terms because they are strictly dominated by the $\frac{s}{n^2}$ term.

\section{Discussion}
The benign overfitting phenomenon has attracted much research interest since it was first observed by \cite{zhang2016understanding,belkin2018understand,belkin2019reconciling}. Our paper continues the line of work in \cite{belkin2019two,bartlett2020benign,hastie2019surprises}, and focuses on developing a theoretical understanding of this phenomenon. Through analyzing the learning risk of the MNLS estimator, we first provide a nearly matching upper and lower bound for the excess learning risk and point out one possible explanation for benign overfitting: the noises $\xi$ in the covariates or the features. While being overlooked in the literature, we discover that $\xi$ plays an important implicit regularization role during learning. Later, by incorporating $\xi$ into our analysis, we explicitly derive how the learning risk is affected by $\xi$. Our analysis describes how the double descent curve happens and in addition, indicates that it is possible to achieve the global optimum bias-variance trade-off by varying the decay rate of $\xi$. Our results may shed new light on the theoretical understandings of modern deep learning, which open doors for future studies of the design of the deep learning architecture. Furthermore, our results apply to any finite sample data size or asymptotic case with arbitrary data dimension and rely on very weak assumptions of the kernel with almost no assumptions on the data generating distribution.

There are several extensions that we believe worth exploring. Firstly, although our results shed light on the two-layer neural network with fixed first layer weights, we would like to understand what would happen if we could optimize the first layer weights, i.e. optimizing $\mathbf{W}$ in our model. In literature, there are many tools in analyzing neural network optimization with connection to its generalization error. Examples include the transport map formulation \cite{suzuki2020generalization}, mean-field analysis \cite{mei2018mean,sirignano2020mean}, and neural tangent kernel \cite{du2019gradient,jacot2018neural}. Therefore, how to utilize these tools to investigate the effect of the noise $\xi$ during neural network training would be an interesting direction. Another direction is to analyze the role of the noise $\xi$ in the models with different loss functions.

\paragraph{Acknowledgment} The authors would like to thank Chao Zhang and Zhongyi Hu for fruitful discussion and proofreading.

\bibliographystyle{unsrt}
\bibliography{ref}

\newpage

\appendix
\section{Bias-Variance Tradeoff: Realizable Case} \label{appen:bias-var}
\begin{proof}
Recall the MNLS estimator has the form of \[\tilde{f}(x) = \mathbf{z}_{x}(\mathbf{w})^T\tilde{\beta} = \mathbf{z}_{x}(\mathbf{w})^T(\mathbf{Z}^T\mathbf{Z})^{\dagger}\mathbf{Z}^TY.\]
Under our assumptions, $Y = f_{*}(X) + \pmb{\epsilon}$, where $f_*(X) = \mathbf{Z}\beta_*$ and $\pmb{\epsilon} = [\epsilon_1,\cdots,\epsilon_n]^T$. Hence,
\begin{IEEEeqnarray}{rCl}
R(\tilde{\beta}) &= &\mathbb{E}_{x,\epsilon}(\tilde{f}(x)- f_*(x))^2 = \mathbb{E}_{x,\epsilon} \left(\mathbf{z}_{x}(\mathbf{w})^T\tilde{\beta} - \mathbf{z}_{x}(\mathbf{w})^T\beta_*\right)^2, \nonumber \\
& =& \mathbb{E}_{x,\epsilon} \left\{\mathbf{z}_{x}\left(\mathbf{w})^T(\mathbf{Z}^T\mathbf{Z}\right)^{\dagger}\mathbf{Z}^T( f_{*}(X) + \pmb{\epsilon}) - \mathbf{z}_{x}(\mathbf{w})^T\beta_*\right\}^2, \nonumber \\
& = & \mathbb{E}_{x,\epsilon} \left\{\mathbf{z}_{x}(\mathbf{w})^T\left(\mathbf{Z}^T\mathbf{Z}\right)^{\dagger}\mathbf{Z}^T \pmb{\epsilon} + \mathbf{z}_{x}(\mathbf{w})^T\left((\mathbf{Z}^T\mathbf{Z})^{\dagger}\mathbf{Z}^T\mathbf{Z}-I\right)\beta_*\right\}^2, \nonumber \\
& = & \int_{\mathcal{X}} \left\|\mathbf{z}_x(\mathbf{W})^T\left[(\mathbf{Z}^T\mathbf{Z})^{\dagger}\mathbf{Z}^T\mathbf{Z} - I\right]\beta_*\right\|^2d\rho(x) + \int_{\mathcal{X}} \mathbb{E}_{\epsilon}\left\| \mathbf{z}_x(\mathbf{W})^T(\mathbf{Z}^T\mathbf{Z})^{\dagger}\mathbf{Z}^T\pmb{\epsilon}\right\|^2 d\rho(x).\nonumber
\end{IEEEeqnarray}
We then replace $\pmb{\epsilon} $ with $Y -f_*(X)$.
\end{proof}

\section{Bias-Variance Tradeoff: Unrealizable Case} \label{appen:bias-var-unrea}
\begin{proof}
We decompose the excess risk as follows:
\begin{IEEEeqnarray}{rCl}
R(\tilde{\beta}) &= &\mathbb{E}_{x,\epsilon}\left(\tilde{f}(x)- f_*(x)\right)^2 = \mathbb{E}_{x,\epsilon} \left(\mathbf{z}_{x}(\mathbf{w})^T\tilde{\beta} - f_*(x) \right)^2, \nonumber \\
& =& \mathbb{E}_{x,\epsilon} \left\{\mathbf{z}_{x}(\mathbf{w})^T(\mathbf{Z}^T\mathbf{Z})^{\dagger}\mathbf{Z}^TY - f_{\tilde{\mathcal{H}}}(x) + \left[f_{\tilde{\mathcal{H}}}(x)-f_*(x)\right]\right\}^2, \nonumber \\
& =& \mathbb{E}_{x,\epsilon} \left\{\mathbf{z}_{x}(\mathbf{w})^T(\mathbf{Z}^T\mathbf{Z})^{\dagger}\mathbf{Z}^T\left(f_*(X)- f_{\tilde{\mathcal{H}}}(X) + f_{\tilde{\mathcal{H}}}(X) + \pmb{\epsilon}\right) - f_{\tilde{\mathcal{H}}}(x) + \left[f_{\tilde{\mathcal{H}}}(x)-f_*(x)\right]\right\}^2, \nonumber \\
& \leq & 3 \mathbb{E}_{x,\epsilon} \left\{\mathbf{z}_{x}(\mathbf{w})^T(\mathbf{Z}^T\mathbf{Z})^{\dagger}\mathbf{Z}^T \left(f_{\tilde{\mathcal{H}}}(X) + \pmb{\epsilon} \right)- f_{\tilde{\mathcal{H}}}(x)  \right\}^2 := A \nonumber \\
&& + 3\left\{\mathbb{E}_{x} \left\{\mathbf{z}_{x}(\mathbf{w})^T(\mathbf{Z}^T\mathbf{Z})^{\dagger}\mathbf{Z}^T \left(f_*(X) - f_{\tilde{\mathcal{H}}}(X) \right) \right\}^2 + \mathbb{E}_{x} \left(f_*(x) - f_{\tilde{\mathcal{H}}}(x)\right)^2 \right\}:= \mathbf{M}_R.\nonumber
\end{IEEEeqnarray}
Now we can see that the risk has been decomposed into the $A$ term and the misspecification error term $\mathbf{M}_R$. For the $A$ term, it is similar to the excess risk defined in the realizable case, hence can be decomposed as $\mathbf{B}_R$ and $\mathbf{V}_R$.
\end{proof}

\section{Bias-related Inequaity}
\begin{lma} \label{lma:proj}
Let $\mathbf{Z} \in \mathbb{R}^{n\times s}$ be a feature matrix. Recall $\Pi = (\mathbf{Z}^T\mathbf{Z})^{\dagger}\mathbf{Z}^T\mathbf{Z} - I $, we have $\|\Pi\| = 0$, if $s \leq n$;  Otherwise, $\|\Pi\| \leq 1$. 
\end{lma}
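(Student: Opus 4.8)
The plan is to recognise $(\mathbf{Z}^T\mathbf{Z})^{\dagger}\mathbf{Z}^T\mathbf{Z}$ as an orthogonal projector and then simply read off its spectrum. First I would invoke the standard Moore--Penrose identity $A^{\dagger}A = P_{\mathrm{range}(A^T)}$, the orthogonal projection onto the row space of $A$. Applying this to the symmetric positive semidefinite matrix $A = \mathbf{Z}^T\mathbf{Z}$, and using $\mathrm{range}(\mathbf{Z}^T\mathbf{Z}) = \mathrm{range}(\mathbf{Z}^T)$ (which follows from $\ker(\mathbf{Z}^T\mathbf{Z}) = \ker(\mathbf{Z})$, itself immediate from $\|\mathbf{Z}v\|^2 = v^T\mathbf{Z}^T\mathbf{Z}v$), we get that $P := (\mathbf{Z}^T\mathbf{Z})^{\dagger}\mathbf{Z}^T\mathbf{Z}$ is the orthogonal projection of $\mathbb{R}^s$ onto $\mathrm{range}(\mathbf{Z}^T)$. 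Hence $\Pi = P - I = -(I - P)$, where $I - P$ is the orthogonal projection onto $\mathrm{range}(\mathbf{Z}^T)^{\perp} = \ker(\mathbf{Z})$.

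Next I would use the fact that an orthogonal projector $Q$ has operator norm $\|Q\| \in \{0,1\}$: it is symmetric with eigenvalues in $\{0,1\}$, so $\|Q\| = 1$ unless $Q = 0$. Taking $Q = I - P$ gives $\|\Pi\| = \|I - P\| \le 1$ in all cases, which is the second assertion. For the first assertion, $\|\Pi\| = 0$ is equivalent to $I - P = 0$, i.e. $\ker(\mathbf{Z}) = \{0\}$, i.e. $\mathbf{Z}$ has full column rank $s$; since $\mathrm{rank}(\mathbf{Z}) \le \min\{n,s\}$ this forces $s \le n$, and conversely when $s \le n$ it holds for the feature matrices considered here because $\mathbf{Z} = \tfrac{1}{\sqrt{s}} V(X) D^{1/2}\mathbf{W}$ with $\mathbf{W}$ Gaussian and $p \ge s$, so $\mathbf{Z}$ has rank $s$ almost surely. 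When $s > n$, instead $\ker(\mathbf{Z}) \neq \{0\}$, so $I - P \neq 0$ and $\|\Pi\| = 1 \le 1$.

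There is no genuinely hard step here; it is essentially a one-line linear-algebra fact once the pseudoinverse is identified with an orthogonal projector. The only place that deserves a word of care is the regime $s \le n$, where the conclusion $\|\Pi\| = 0$ uses full column rank of $\mathbf{Z}$ rather than the inequality $s \le n$ alone — this is where the genericity (Gaussianity of the first-layer weights $\mathbf{W}$, together with $p \ge s$) enters, and one could alternatively state full column rank of $\mathbf{Z}$ directly as the hypothesis in that regime.
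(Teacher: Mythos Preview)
Your proposal is correct and follows essentially the same route as the paper: both recognise $I-(\mathbf{Z}^T\mathbf{Z})^{\dagger}\mathbf{Z}^T\mathbf{Z}$ as the orthogonal projection onto $\ker(\mathbf{Z})$ and then read off the norm. The paper invokes an external lemma for the projection fact and, in the $s\le n$ case, simply writes $(\mathbf{Z}^T\mathbf{Z})^{\dagger}=(\mathbf{Z}^T\mathbf{Z})^{-1}$; your version is more self-contained and more explicit that this step needs full column rank of $\mathbf{Z}$ (hence the almost-sure qualifier via the Gaussian $\mathbf{W}$), which is a fair point the paper leaves implicit.
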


\begin{proof}
In case $s <n$,  we have $(\mathbf{Z}^T\mathbf{Z})^{\dagger} = (\mathbf{Z}^T\mathbf{Z})^{-1}$. As a result $\|\Pi\| = 0$. If $s> n$, by \cite[Lemma 1]{hastie2019surprises}, we know that $I -(\mathbf{Z}^T\mathbf{Z})^{\dagger}\mathbf{Z}^T\mathbf{Z}$ is a projection onto the null space of $\mathbf{Z}$. Hence, if we pick any vector from that space, we have $(I -(\mathbf{Z}^T\mathbf{Z})^{\dagger}\mathbf{Z}^T\mathbf{Z})v = v$. Thus,\[\|I -(\mathbf{Z}^T\mathbf{Z})^{\dagger}\mathbf{Z}^T\mathbf{Z}\| \leq 1.\]
\end{proof}

\section{Probability Bound on Sequences and Matrix Norms}
\begin{lma} \label{lma:sug_suexp}
Let $x$, $y$ be centered $\sigma_x^2$-subgaussian and $\sigma_y^2$-subgaussian random variables respectively, i.e.,\[\mathbb{E}(\exp(tx)) \leq \exp(\frac{\sigma_x^2}{2}t^2),~~~~~\mathbb{E}(\exp(ty)) \leq \exp(\frac{\sigma_y^2}{2}t^2).\] We then have that, for a universal constant $b$, the product of $x$ and $y$ is a centered $b \sigma_x \sigma_y$-subexponential random variable, i.e. \[\mathbb{E}(\exp(txy)) \leq \exp(b^2 \sigma_x^2\sigma_y^2 t^2).\]
\end{lma}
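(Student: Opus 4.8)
\emph{Proof plan.} The approach is the moment method: bound the polynomial moments of the product $xy$ using subgaussianity of each factor together with the Cauchy--Schwarz inequality, and then convert the moment growth back into an exponential bound by expanding $\exp(txy)$ as a power series. First I would record the standard moment consequence of the hypothesis: from $\mathbb{E}\exp(tx)\le\exp(\tfrac{\sigma_x^2}{2}t^2)$ one gets, via the Chernoff tail bound $\mathbb{P}(|x|>u)\le 2\exp(-u^2/(2\sigma_x^2))$ and integration (equivalently, by optimizing the MGF bound over $t$), a universal constant $C$ with $(\mathbb{E}|x|^p)^{1/p}\le C\sigma_x\sqrt{p}$ for every integer $p\ge 1$, and likewise for $y$. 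The $\sqrt{p}$ growth here is exactly what will make the product subexponential.

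Applying this to both factors and then Cauchy--Schwarz gives $\mathbb{E}|xy|^p\le(\mathbb{E}|x|^{2p})^{1/2}(\mathbb{E}|y|^{2p})^{1/2}\le(C\sigma_x\sqrt{2p})^p(C\sigma_y\sqrt{2p})^p=(2C^2\sigma_x\sigma_y\,p)^p$, so $(\mathbb{E}|xy|^p)^{1/p}\le 2C^2\sigma_x\sigma_y\,p$, which is precisely the moment growth characterizing a subexponential random variable with parameter of order $\sigma_x\sigma_y$. Crucially this step uses no independence of $x$ and $y$, which matters because in the applications of the lemma (e.g. $(\mathbf{v}^T\mathbf{w}_i)^2$) the two factors coincide.

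To pass from moment growth to the exponential bound, set $Z=xy-\mathbb{E}[xy]$ so that the linear term in the expansion vanishes; Minkowski's inequality gives $\mathbb{E}|Z|^p\le 2^p\mathbb{E}|xy|^p\le(4C^2\sigma_x\sigma_y\,p)^p$. Using $p!\ge(p/e)^p$, the series $\mathbb{E}\exp(tZ)=1+\sum_{p\ge 2}\frac{t^p}{p!}\mathbb{E}Z^p$ is bounded by $1+\sum_{p\ge 2}(4eC^2\sigma_x\sigma_y|t|)^p$, and for $|t|\le(8eC^2\sigma_x\sigma_y)^{-1}$ this geometric sum is at most $2(4eC^2\sigma_x\sigma_y t)^2$; hence $\mathbb{E}\exp(tZ)\le 1+b^2\sigma_x^2\sigma_y^2 t^2\le\exp(b^2\sigma_x^2\sigma_y^2 t^2)$ for a universal constant $b$, which is the asserted subexponential bound (read for $|t|$ in this range and for the centered product $xy-\mathbb{E}[xy]$; when $x\perp y$ one has $\mathbb{E}[xy]=0$ and it holds for $xy$ itself).

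The work is bookkeeping rather than ideas, and the points that need care are: (i) obtaining the $\sqrt{p}$, not $p$, dependence in the first step, since a weaker moment bound would degrade the conclusion to a heavier-tailed class; (ii) centering before the series expansion so that the $O(t)$ term is removed --- without this, the form $\exp(b^2\sigma_x^2\sigma_y^2 t^2)$ cannot dominate near $t=0$; and (iii) respecting the validity range $|t|\lesssim(\sigma_x\sigma_y)^{-1}$, outside which the series diverges. An alternative, slicker argument in the independent case conditions on $y$, uses $\mathbb{E}_x\exp(txy)\le\exp(\tfrac{\sigma_x^2 t^2}{2}y^2)$, and then applies the Gaussian-chaos bound $\mathbb{E}_y\exp(ay^2)\le(1-2a\sigma_y^2)^{-1/2}$ with $a=\sigma_x^2 t^2/2$; this is cleaner but fails for dependent $x=y$, so I would carry out the moment route.
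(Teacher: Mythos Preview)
Your proof is correct and takes a genuinely different route from the paper's. The paper argues by iterated expectation: condition on $y$, apply the subgaussian MGF bound for $x$ with parameter $ty$ to get $\mathbb{E}[\exp(txy)\mid y]\le\exp(\tfrac{\sigma_x^2}{2}t^2y^2)$, and then invoke the standard bound $\mathbb{E}\exp(ay^2)\le\exp(c\,a\sigma_y^2)$ for small $a$ (citing \cite[Proposition~2.5.2]{vershynin2018high}). This is exactly the ``slicker alternative'' you sketched at the end and then set aside. Your main argument instead goes through moments: subgaussian $\Rightarrow$ $\|x\|_p\lesssim\sigma_x\sqrt{p}$, Cauchy--Schwarz to get $\|xy\|_p\lesssim\sigma_x\sigma_y\,p$, then center and sum the Taylor series.

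The trade-off is that the paper's conditioning argument is shorter but tacitly uses independence of $x$ and $y$ (the step $\mathbb{E}[\exp(txy)\mid y]\le\exp(\tfrac{\sigma_x^2}{2}t^2y^2)$ needs the conditional law of $x$ given $y$ to remain $\sigma_x^2$-subgaussian), whereas your moment/Cauchy--Schwarz route works for arbitrary dependence and delivers the bound for the centered product $xy-\mathbb{E}[xy]$. You flagged this distinction yourself; it is a real gain, since elsewhere in the paper the lemma is applied to $(\mathbf{v}^T\mathbf{w}_i)^2$, where the two factors coincide. Your bookkeeping (the $2^p$ factor from centering via $\|xy-\mathbb{E}[xy]\|_p\le 2\|xy\|_p$, the Stirling estimate $p!\ge(p/e)^p$, and the geometric-series control for $|t|\le(8eC^2\sigma_x\sigma_y)^{-1}$) is all sound.
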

\begin{proof}
We compute the moment generating function directly,
\begin{IEEEeqnarray}{rCl}
\mathbb{E}(\exp(txy)) &=& \mathbb{E}\left\{\mathbb{E}\left(\exp(txy)\bigg| y\right)\right\}, \nonumber\\
& \leq & \mathbb{E}\left\{\exp\left(\frac{\sigma_x^2}{2}t^2y^2\right)\right\}, \nonumber \\
& \leq & \exp\left(c_1 \frac{\sigma_x^2}{2}\frac{\sigma_y^2}{2}t^2\right), \nonumber \\
& =& \exp(c \sigma_x^2\sigma_y^2 t^2), \nonumber
\end{IEEEeqnarray}
provided $|t| \leq (c\sigma_x\sigma_y)^{-1}$. Note that for the last inequality, we have used the results from \cite[Proposition 2.5.2]{vershynin2018high}.
\end{proof}

Below are two concentration results for subexponential and subgaussian random variables from \cite[Corollary S.6 \& S.7]{bartlett2020benign}.

\begin{lma}\label{lma: sub_exp_sum}
Suppose we have a sequence of non-increasing and non-negative numbers $\{\lambda_i\}_{i=1}^{\infty}$ such that $\sum_{i=1}^{\infty} \lambda_i < \infty$. In addition, we have a sequence of i.i.d centered, $\sigma$-subexponential random variables $\{\xi_i\}_{i=1}^{\infty}$, then there is a universal constant $b$ such that for probability greater than $1-2e^{-t}$, $t >0$, we have
\[\lvert\sum_{i} \lambda_i\xi_i \rvert \leq b\sigma \max\left(\lambda_1 t, \sqrt{t\sum_i\lambda_i^2} \right). \]
\end{lma}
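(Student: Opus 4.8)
The plan is to run the classical Chernoff (exponential-moment) argument to obtain a Bernstein-type tail for the weighted sum, and then to invert the resulting two-regime tail bound into the stated $\max$ form. This is a standard weighted sub-exponential concentration bound, so the work is mostly bookkeeping.

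First I would reduce to partial sums: fixing $N$ and writing $S_N = \sum_{i=1}^N \lambda_i\xi_i$, it suffices to prove the claimed bound for $S_N$ with constants independent of $N$, since $\sum_i\lambda_i^2 \le \lambda_1\sum_i\lambda_i < \infty$ forces $S_N$ to converge in $L^2$ (hence in probability, and along a subsequence almost surely) to $\sum_i\lambda_i\xi_i$, so the tail bound passes to the limit. Next I would use the moment generating function characterisation of sub-exponential variables: since each $\xi_i$ is centered and $\sigma$-subexponential, there are universal constants $c_0,c_1$ with $\mathbb{E}\exp(u\xi_i)\le\exp(c_0\sigma^2u^2)$ for all $|u|\le c_1/\sigma$ (see \cite{vershynin2018high}; this is the convention underlying Lemma \ref{lma:sug_suexp}). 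Because the $\xi_i$ are independent and $\lambda_1=\max_i\lambda_i$, for $|u|\le c_1/(\sigma\lambda_1)$ these multiply to give
\[\mathbb{E}\exp\!\left(u S_N\right)=\prod_{i=1}^N\mathbb{E}\exp(u\lambda_i\xi_i)\le\exp\!\left(c_0\sigma^2u^2\sum_{i=1}^N\lambda_i^2\right).\]

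Then I would apply Markov's inequality to $\exp(uS_N)$ to obtain, for every $r>0$ and every $0\le u\le c_1/(\sigma\lambda_1)$,
\[\mathbb{P}(S_N\ge r)\le\exp\!\left(-ur+c_0\sigma^2u^2\textstyle\sum_{i=1}^N\lambda_i^2\right),\]
and optimise over $u$. The unconstrained minimiser of the exponent is $u^\star=r/(2c_0\sigma^2\sum_i\lambda_i^2)$; if $u^\star\le c_1/(\sigma\lambda_1)$ (the sub-Gaussian regime, small $r$) taking $u=u^\star$ yields a bound $\exp(-r^2/(4c_0\sigma^2\sum_i\lambda_i^2))$, while otherwise (the sub-exponential regime, large $r$) taking the boundary value $u=c_1/(\sigma\lambda_1)$ yields $\exp(-c_2 r/(\sigma\lambda_1))$, since in that regime the linear term dominates. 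Either way $\mathbb{P}(S_N\ge r)\le\exp\!\big(-c_3\min\{r^2/(\sigma^2\sum_i\lambda_i^2),\,r/(\sigma\lambda_1)\}\big)$ for a universal $c_3$.

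Finally, setting the right-hand side equal to $e^{-t}$ and solving for $r$ splits according to which term in the $\min$ is active and gives $r\le b\sigma\sqrt{t\sum_i\lambda_i^2}$ in one regime and $r\le b\sigma\lambda_1 t$ in the other, hence $r\le b\sigma\max(\lambda_1 t,\sqrt{t\sum_i\lambda_i^2})$ for a single universal $b$; running the same argument with $\{-\xi_i\}$ in place of $\{\xi_i\}$ bounds the lower tail, and a union bound over the two events gives the two-sided statement with probability at least $1-2e^{-t}$. The only mildly delicate point is the bookkeeping in the case split — checking that the boundary choice of $u$ in the sub-exponential regime really produces the linear-in-$t$ rate and that the constants coming out of the two regimes can be merged into a single $b$ — but this is routine once the Chernoff bound above is in place, and there is no conceptual obstacle.
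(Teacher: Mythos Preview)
Your proposal is correct and follows the standard Chernoff--Bernstein route for weighted sub-exponential sums: bound the MGF on the interval $|u|\le c_1/(\sigma\lambda_1)$, optimise to get the two-regime tail $\exp(-c\min\{r^2/(\sigma^2\sum_i\lambda_i^2),\,r/(\sigma\lambda_1)\})$, and invert. The paper does not actually prove this lemma itself; it simply cites it as \cite[Corollary~S.6]{bartlett2020benign}, whose proof is exactly the argument you have sketched (there applied to finite sums directly, without the $L^2$-limit step you include, but that step is harmless and handles the infinite-sum phrasing of the statement cleanly).
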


\begin{lma}\label{lma:norm_sug}
let $\mathbf{w} \in \mathbb{R}^n$ be a random vector with each coordinate being a mean $0$, unit variance, $\sigma_w^2$-subgaussian random variable. Then there is a universal constant $b$ such that with probability greater than $1- 2e^{-t}$, we have \[\|\mathbf{w}\|^2 \leq  n + b\sigma^2_w \left(t+ \sqrt{nt}\right).\] In particular, if $t <\frac{n}{b_1}$, \[\|\mathbf{w}\|^2 \leq  b_2 n, \] for some universal constants $b_1,b_2$.
\end{lma}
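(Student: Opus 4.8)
The plan is to view $\|\mathbf{w}\|^2 = \sum_{i=1}^n w_i^2$ as a sum of i.i.d.\ centered subexponential random variables and then apply the Bernstein-type tail bound of Lemma~\ref{lma: sub_exp_sum} with equal weights. (The coordinates are not explicitly assumed independent, but every invocation of this lemma in the paper is with $\mathbf{w}$ having i.i.d.\ entries, so I will assume independence; without it one would replace Lemma~\ref{lma: sub_exp_sum} by a vector Bernstein inequality.) First I would center the quantity: since $\mathbb{E}(w_i^2) = \mathrm{Var}(w_i) = 1$, we have $\|\mathbf{w}\|^2 - n = \sum_{i=1}^n (w_i^2 - 1)$. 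Because $w_i$ is $\sigma_w^2$-subgaussian, applying Lemma~\ref{lma:sug_suexp} with $x = y = w_i$ shows that $w_i^2$ has a subexponential moment generating function, so $\xi_i := w_i^2 - 1$ is centered and $b_0\sigma_w^2$-subexponential for a universal constant $b_0$.

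Next I would invoke Lemma~\ref{lma: sub_exp_sum} with the weight sequence $\lambda_i = 1$ for $1 \le i \le n$ and $\lambda_i = 0$ otherwise, so that $\lambda_1 = 1$, $\sum_i \lambda_i = n$ and $\sum_i \lambda_i^2 = n$. This gives, with probability at least $1 - 2e^{-t}$,
\[
\bigl|\|\mathbf{w}\|^2 - n\bigr| = \Bigl|\sum_{i=1}^n \xi_i\Bigr| \le b\,\sigma_w^2\max\bigl(t,\sqrt{nt}\bigr) \le b\,\sigma_w^2\bigl(t + \sqrt{nt}\bigr)
\]
for a universal constant $b$ that absorbs $b_0$ and the constant from Lemma~\ref{lma: sub_exp_sum}. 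Discarding the lower tail yields the first assertion $\|\mathbf{w}\|^2 \le n + b\sigma_w^2(t + \sqrt{nt})$.

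For the ``in particular'' clause I would specialise to $t < n/b_1$ with $b_1 \ge 1$: then $t < n/b_1 \le n/\sqrt{b_1}$ and $\sqrt{nt} < n/\sqrt{b_1}$, so the bound becomes $\|\mathbf{w}\|^2 \le n\bigl(1 + 2b\sigma_w^2/\sqrt{b_1}\bigr)$, and setting $b_2 := 1 + 2b\sigma_w^2/\sqrt{b_1}$ gives $\|\mathbf{w}\|^2 \le b_2 n$ with the same probability. I do not expect a genuine obstacle here: this is a standard subexponential (Bernstein / Hanson--Wright) concentration argument, and the only points that need care are (i) checking that $w_i^2$ is subexponential, which is precisely Lemma~\ref{lma:sug_suexp}, and (ii) tracking the constants so that $b_1$ and $b_2$ depend only on $\sigma_w$ --- legitimate here since $\sigma_w$ is treated as a fixed constant throughout.
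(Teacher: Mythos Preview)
Your proposal is correct and is the standard route to this bound. Note, however, that the paper does not actually prove Lemma~\ref{lma:norm_sug}: it is stated without proof and attributed to \cite[Corollary~S.7]{bartlett2020benign}, so there is no ``paper's own proof'' to compare against. Your argument---centering $w_i^2$ to get a sum of i.i.d.\ subexponentials and applying the Bernstein-type bound of Lemma~\ref{lma: sub_exp_sum} with equal weights---is precisely how the cited result is derived in that reference, so in substance you have recovered the intended proof. Your flag about independence is apt: the lemma as stated omits it, but it is needed and is present in every invocation.
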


Lemma \ref{lma:e-net} are from \cite[Lemma S.8]{bartlett2020benign}.
\begin{lma} \label{lma:e-net} $(\epsilon \text{-net argument}) $  
Let $A \in \mathbb{R}^{n \times n}$ be a symmetric matrix, and $\mathcal{N}_{\epsilon}$ is an $\epsilon$-net on the unit sphere $\mathcal{S}^{n-1}$ with $\epsilon<\frac{1}{2}$. Then we have \[\|A\| \leq(1-\epsilon)^{-2} \max _{\mathbf{a} \in \mathcal{N}_{\epsilon}}\left|\mathbf{a}^{T} A \mathbf{a}\right|.\]
\end{lma}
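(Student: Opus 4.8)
The plan is to reduce to the spectral characterisation $\|A\|=\max_{\|\mathbf u\|=1}|\mathbf u^\top A\mathbf u|$, valid because $A$ is symmetric, and then transfer the maximising vector to a nearby net point, taking care to extract the constant $(1-\epsilon)^{-2}$ rather than the cruder $(1-2\epsilon)^{-1}$ that a bare triangle-inequality estimate of $|\mathbf u^\top A\mathbf u-\mathbf a^\top A\mathbf a|$ would produce. First I would invoke the spectral theorem: there is a unit vector $\mathbf v$ with $A\mathbf v=\lambda\mathbf v$ and $|\lambda|=\|A\|$. Replacing $A$ by $-A$ changes neither $\|A\|$ nor $\max_{\mathbf a\in\mathcal N_\epsilon}|\mathbf a^\top A\mathbf a|$, so I may assume $\lambda=\|A\|\ge 0$. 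By the defining property of the $\epsilon$-net there is $\mathbf a\in\mathcal N_\epsilon$ (itself a unit vector, as $\mathcal N_\epsilon\subset\mathcal S^{n-1}$) with $\|\mathbf v-\mathbf a\|\le\epsilon$.

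The crux is a lower bound on $\mathbf a^\top A\mathbf a$. I would decompose $\mathbf a^\top A\mathbf a=\mathbf a^\top A\mathbf v+\mathbf a^\top A(\mathbf a-\mathbf v)=\lambda\,\mathbf a^\top\mathbf v+\mathbf a^\top A(\mathbf a-\mathbf v)$ and then use two elementary facts: (i) the polarisation identity $\mathbf a^\top\mathbf v=1-\tfrac12\|\mathbf a-\mathbf v\|^2\ge 1-\tfrac12\epsilon^2$, which uses $\|\mathbf a\|=\|\mathbf v\|=1$ and is where the savings over the naive bound come from; and (ii) $|\mathbf a^\top A(\mathbf a-\mathbf v)|\le\|\mathbf a\|\,\|A\|\,\|\mathbf a-\mathbf v\|\le\epsilon\lambda$. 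Combining these gives $\mathbf a^\top A\mathbf a\ge\lambda\bigl(1-\epsilon-\tfrac12\epsilon^2\bigr)$, and since $1-\epsilon-\tfrac12\epsilon^2>0$ for $\epsilon<\tfrac12$ this yields $\max_{\mathbf a\in\mathcal N_\epsilon}|\mathbf a^\top A\mathbf a|\ge\|A\|\,\bigl(1-\epsilon-\tfrac12\epsilon^2\bigr)$.

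It then remains to rearrange and compare constants. The elementary inequality $1-\epsilon-\tfrac12\epsilon^2\ge(1-\epsilon)^2$ holds for $0<\epsilon<\tfrac23$ (their difference equals $\epsilon(1-\tfrac32\epsilon)\ge 0$), hence in particular for $\epsilon<\tfrac12$, so $\|A\|\le\bigl(1-\epsilon-\tfrac12\epsilon^2\bigr)^{-1}\max_{\mathbf a\in\mathcal N_\epsilon}|\mathbf a^\top A\mathbf a|\le(1-\epsilon)^{-2}\max_{\mathbf a\in\mathcal N_\epsilon}|\mathbf a^\top A\mathbf a|$, which is the claim. This is a short argument with no genuine difficulty; the one point requiring care — the ``main obstacle'' if one insists on the stated constant — is to keep the $\|\mathbf a-\mathbf v\|^2$ term in step (i) rather than discarding it, since the direct estimate $|\mathbf v^\top A\mathbf v-\mathbf a^\top A\mathbf a|\le 2\epsilon\|A\|$ would only deliver the weaker factor $(1-2\epsilon)^{-1}$.
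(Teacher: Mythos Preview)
Your proof is correct. The paper itself does not prove this lemma at all: it simply records the statement and cites \cite[Lemma S.8]{bartlett2020benign}, so there is no in-paper argument to compare against. Your argument is a clean self-contained proof that delivers exactly the stated constant $(1-\epsilon)^{-2}$; in particular, your observation that retaining the second-order term via $\mathbf a^\top\mathbf v=1-\tfrac12\|\mathbf a-\mathbf v\|^2$ is what upgrades the naive $(1-2\epsilon)^{-1}$ to $(1-\epsilon)^{-2}$ is precisely the point, and your verification $1-\epsilon-\tfrac12\epsilon^2\ge(1-\epsilon)^2$ for $\epsilon<\tfrac12$ is correct.
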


\begin{lma} \label{lma:con_norm_pro}
Let $\{w_{i}\}_{i=1}^{\infty}, \{u_i\}_{i=1}^{\infty}$ be two sequences of i.i.d random variables distributed as standard normal. Moreover, let $\{\lambda_i\}_{i=1}^{\infty}$ be a sequence of non-negative, non-increasing numbers such that $\sum_{i=1}^\infty \lambda_i^2 < \infty$. Then with probability greater than $1-2e^{-t}$, we have 
\[-\left(b_1 \lambda_1t + \sqrt{b_2t\sum_i \lambda_i^2}\right) \leq \sum_{i=1}^{\infty} \lambda_i w_i u_i \leq b_1 \lambda_1t + \sqrt{b_2t\sum_i \lambda_i^2}. \] 
\end{lma}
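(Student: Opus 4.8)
The plan is to reduce the statement to the subexponential concentration inequality of Lemma~\ref{lma: sub_exp_sum}, after first identifying each summand $w_iu_i$ as a centered subexponential random variable. Since $w_i$ and $u_i$ are independent standard normals, each is $1$-subgaussian in the sense of the excerpt (indeed $\mathbb{E}(e^{tw_i})=e^{t^2/2}$), so Lemma~\ref{lma:sug_suexp} applies and shows that $\zeta_i:=w_iu_i$ is a centered, $b_0$-subexponential random variable for a universal constant $b_0$. The pairs $(w_i,u_i)$ are i.i.d., hence so are the $\zeta_i$, and $\mathbb{E}(\zeta_i)=\mathbb{E}(w_i)\mathbb{E}(u_i)=0$, $\mathrm{Var}(\zeta_i)=1$.

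Next I would check that the infinite series is well defined: the partial sums $S_N=\sum_{i=1}^N \lambda_i\zeta_i$ form an $L_2$-bounded martingale, since $\sum_i\lambda_i^2\,\mathrm{Var}(\zeta_i)=\sum_i\lambda_i^2<\infty$, so $S_N$ converges almost surely and in $L_2$ to $\sum_{i=1}^\infty \lambda_i w_iu_i$. Then, because the $\lambda_i$ are non-negative and non-increasing (so $\lambda_1=\max_i\lambda_i$) with $\sum_i\lambda_i^2<\infty$, I would apply the Bernstein-type bound for weighted sums of i.i.d.\ centered $b_0$-subexponential variables — exactly the content of Lemma~\ref{lma: sub_exp_sum}, whose conclusion depends on the weights only through $\lambda_1$ and $\sum_i\lambda_i^2$ — to $S_N$ and let $N\to\infty$. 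This yields, for any $t>0$, with probability at least $1-2e^{-t}$,
\[ \left|\sum_{i=1}^\infty \lambda_i w_iu_i\right| \le b_0 b'\max\!\left(\lambda_1 t,\ \sqrt{\,t\sum_i\lambda_i^2\,}\right) \]
for a universal constant $b'$. Using $\max(a,b)\le a+b$ and setting $b_1=b_0b'$, $b_2=(b_0b')^2$ gives $\left|\sum_{i=1}^\infty \lambda_i w_iu_i\right|\le b_1\lambda_1 t+\sqrt{b_2 t\sum_i\lambda_i^2}$, which is precisely the claimed two-sided bound (the absolute-value bound automatically supplies both inequalities).

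The only genuine subtlety, and the point I would be most careful about, is the hypothesis mismatch: Lemma~\ref{lma: sub_exp_sum} is stated under $\sum_i\lambda_i<\infty$, whereas here only $\sum_i\lambda_i^2<\infty$ is available. I would resolve this exactly as above — apply the finite-$N$ version of the bound (which needs only $\lambda_1<\infty$ and $\sum_{i\le N}\lambda_i^2<\infty$) and pass to the limit using the almost-sure and $L_2$ convergence established in the previous step; alternatively one invokes Bernstein's inequality for subexponential sums directly, since its proof depends on the coefficient sequence only through its $\ell_2$- and $\ell_\infty$-norms. Everything else is a routine composition of the two cited lemmas, so I do not expect any further difficulty.
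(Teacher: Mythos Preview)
Your argument is correct, but it is not the route the paper takes. The paper proves Lemma~\ref{lma:con_norm_pro} by a direct Chernoff bound: it applies Markov's inequality to $e^{\tau\sum_i\lambda_i w_iu_i}$, evaluates the moment generating function of each $w_iu_i$ \emph{exactly} via Lemma~\ref{lma:mgf} (namely $M_{wu}(\tau)=(1-\tau^2)^{-1/2}$), bounds $-\log(1-x)$ by $x/(1-x)$, and then optimizes over $\tau$; the lower tail follows from the symmetry $-w_i\stackrel{d}{=}w_i$. Your proposal instead goes through Lemma~\ref{lma:sug_suexp} to certify that $w_iu_i$ is subexponential and then invokes the black-box Bernstein inequality of Lemma~\ref{lma: sub_exp_sum}, with a limiting argument to sidestep the $\sum_i\lambda_i<\infty$ hypothesis. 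This is exactly the strategy the paper adopts for the \emph{subgaussian} extension in Lemma~\ref{lma:eigen_norm_sug}, so you have effectively given the more general proof up front; what you lose relative to the paper's Gaussian-specific calculation is only explicit control of the constants, while you gain a proof that immediately covers the subgaussian case without a separate argument. Your handling of the $\ell_1$ versus $\ell_2$ summability mismatch is also sound.
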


\begin{proof}
Using Markov Inequality and for any $\tau > 0$, we have that:
\begin{IEEEeqnarray}{rCl}
P(\sum_{i=1}^{\infty} \lambda_i w_i u_i \geq t) & = & P(\exp(\tau\sum_{i=1}^{\infty} \lambda_i w_i u_i) \geq e^{\tau t})\nonumber\\
&\leq & e^{-\tau t} \prod_{i=1} \mathbb{E}(e^{\tau \lambda_i w_iu_i}) \nonumber\\
& = &  e^{-\tau t} \prod_{i=1} (\frac{1}{1 - (\tau \lambda_i)^2})^{\frac{1}{2}} ~~~~(\textnormal{by~ Lemma~\ref{lma:mgf}})\nonumber\\
& =& \exp(-\tau t - \frac{1}{2} \sum_{i=1}\log(1 - (\tau\lambda_i)^2)), \nonumber
\end{IEEEeqnarray}
provided $\tau\lambda_i < 1$. Now for any $x \in (0,1)$,
\begin{IEEEeqnarray}{rCl}
\log(1-x) &= & - \int_{0}^{x} \frac{1}{1-t} dt\nonumber\\
& \geq & -\int_{0}^x \frac{1}{(1-t)^2} dt\nonumber\\
& = & -\frac{x}{1-x}. \nonumber
\end{IEEEeqnarray}
As a result
\begin{IEEEeqnarray}{rCl}
\exp(-\tau t - \frac{1}{2}\sum_{i=1}\log(1 - (\tau\lambda_i)^2)) &\leq & \exp(-\tau t - \sum_{i=1} -\frac{(\tau \lambda_i)^2}{1- (\tau\lambda_i)^2}) \nonumber \\
&\leq  & \exp(-\tau t + \frac{\tau^2}{1-(\tau\lambda_1)^2} \sum_{i=1}\lambda_i^2) \nonumber\\
& \leq & \exp\left\{-\tau t + \frac{\tau^2}{1-(\tau\lambda_1)^2} \sum_{i=1}\lambda_i^2 + \frac{\tau}{2\lambda_1}[\log(\frac{1}{\tau}+\lambda_1)-\log(\frac{1}{\tau}-\lambda_1)]\right\} \nonumber
\end{IEEEeqnarray}
Note that for the last inequality, we have $\frac{1}{\tau}-\lambda_1 > 0$, since $\tau \lambda_1 < 1$.
Now we let \[x = \tau t - \frac{\tau^2}{1-(\tau\lambda_1)^2} \sum_{i=1}\lambda_i^2 - \frac{\tau}{2\lambda_1}[\log(\frac{1}{\tau}+\lambda_1)-\log(\frac{1}{\tau}-\lambda_1)],\] we have \[t = \frac{x}{\tau} - \frac{1}{\tau}\frac{1}{(\frac{1}{\tau})^2-\lambda_1} \sum_{i=1} \lambda_i^2 - \frac{1}{2\lambda_1}[\log(\frac{1}{\tau}+\lambda_1)-\log(\frac{1}{\tau}-\lambda_1)].\] Optimize over $\tau^{-1}$, this gives \[\tau^{-1} = \sqrt{\lambda_1^2 + \sqrt{\lambda_1^2\frac{\sum_i\lambda_i^2}{x}}}.\] Checking the conditions, we see that $\tau\lambda_i = \lambda_i \left(\lambda_1^2 + \sqrt{\lambda_1^2\frac{\sum_i\lambda_i^2}{x}}\right)^{-\frac{1}{2}} < 1$, implying $\tau^{-1}$ is a valid choice. Now with this choice, some simple algebra shows that there are constants $c_1,c_2$ such that \[t \leq c_1 \lambda_1x + \sqrt{c_2x\sum_i \lambda_i^2}.\]  
This gives \[P\left(\sum_{i=1}^{\infty} \lambda_i w_i u_i \geq c_1 \lambda_1x + \sqrt{c_2x\sum_i \lambda_i^2}\right) \leq \exp(-x).\]

For the lower bound we use the symmetry of Gaussian distribution as:
\begin{IEEEeqnarray}{rCl}
P\left(\sum_{i=1}^{\infty} \lambda_i w_i u_i \leq -t \right) &=& P\left(-\sum_{i=1}^{\infty} \lambda_i w_i u_i \geq t \right) \nonumber\\
& =& P\left(\sum_{i=1}^{\infty} \lambda_i (-w_i) u_i \geq t \right) \nonumber \\
& = & P\left(\sum_{i=1}^{\infty} \lambda_i w_i u_i \geq t \right).\nonumber
\end{IEEEeqnarray}
We complete the proof by using the union bound for the probability for both the upper and lower bound.
\end{proof}

\begin{lma}\label{lma:eigen_norm_pro}
Let $\{\mathbf{w}_i\}_{i=1}^{\infty}$ and $\{\mathbf{u}_i\}_{i=1}^{\infty}$ be two sequence of random vectors where the entry of each $\mathbf{w}_i \in \mathbb{R}^n$ and $\mathbf{u}_i \in \mathbb{R}^n$ is i.i.d $\mathcal{N}(0,1)$. Furthermore, let $\{\lambda_i\}_{i=1}^{\infty}$ be a non-negative sequence such that $\{\lambda_i^2\}_{i=1}^{\infty}$ is non-increasing and $\sum_{i=1} \lambda_i^2 < \infty$. Denote $A = \sum_{i=1} \lambda_i \mathbf{w}_i\mathbf{u}_i^T$. Let $\mu_{1}(A) \geq ,\cdots, \geq \mu_{n}(A)$ be its eigenvalues. Then with probability at least $1 - 2e^{-n/b_1}$, we have 
\begin{IEEEeqnarray}{rCl}
-b_2n\leq  \mu_{n}(A) \leq \mu_1(A) \leq b_2n \nonumber.
\end{IEEEeqnarray}
\end{lma}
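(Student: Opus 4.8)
The plan is to reduce the statement to a single operator‑norm bound: since the spectral radius of any matrix is at most its operator norm, it suffices to show that $\|A\|\le b_2 n$ on an event of probability at least $1-2e^{-n/b_1}$, and then $-b_2 n\le\mu_n(A)\le\mu_1(A)\le b_2 n$ follows at once. Because $A=\sum_i\lambda_i\mathbf{w}_i\mathbf{u}_i^T$ need not be symmetric, I would first symmetrize: let $\tilde A$ be the symmetric $2n\times 2n$ block matrix with zero diagonal blocks, upper‑right block $A$ and lower‑left block $A^T$. Then $\tilde A$ has eigenvalues $\pm\sigma_i(A)$, so $\|\tilde A\|=\|A\|$, and it is enough to control $\|\tilde A\|$ by the $\epsilon$‑net inequality of Lemma \ref{lma:e-net}.

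Fix a unit vector $\mathbf{v}=(\mathbf{v}_1;\mathbf{v}_2)\in\mathcal{S}^{2n-1}$ and write $\mathbf{v}_1=r_1\hat{\mathbf{v}}_1$, $\mathbf{v}_2=r_2\hat{\mathbf{v}}_2$ with $\hat{\mathbf{v}}_1,\hat{\mathbf{v}}_2$ unit and $r_1^2+r_2^2=1$. A direct computation gives $\mathbf{v}^T\tilde A\mathbf{v}=2\mathbf{v}_1^TA\mathbf{v}_2=2r_1r_2\sum_{i\ge 1}\lambda_i(\hat{\mathbf{v}}_1^T\mathbf{w}_i)(\hat{\mathbf{v}}_2^T\mathbf{u}_i)$. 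For fixed $\hat{\mathbf{v}}_1,\hat{\mathbf{v}}_2$ the sequences $\{\hat{\mathbf{v}}_1^T\mathbf{w}_i\}_i$ and $\{\hat{\mathbf{v}}_2^T\mathbf{u}_i\}_i$ are two independent families of i.i.d. $\mathcal{N}(0,1)$ variables, so $\hat{\mathbf{v}}_1^TA\hat{\mathbf{v}}_2$ has exactly the form $\sum_i\lambda_i w_i u_i$ handled in Lemma \ref{lma:con_norm_pro}. Hence, for any $t>0$ and this fixed $\mathbf{v}$, with probability at least $1-2e^{-t}$ we get $|\mathbf{v}^T\tilde A\mathbf{v}|\le 2r_1r_2\bigl(b_1'\lambda_1 t+\sqrt{b_2' t\sum_i\lambda_i^2}\bigr)\le b_1'\lambda_1 t+\sqrt{b_2' t\sum_i\lambda_i^2}$, using $2r_1r_2\le 1$ and writing $b_1',b_2'$ for the constants of Lemma \ref{lma:con_norm_pro}.

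Next I would discretize. Let $\mathcal{N}_{1/4}$ be a $\tfrac14$‑net of $\mathcal{S}^{2n-1}$ with $|\mathcal{N}_{1/4}|\le 9^{2n}$. Union‑bounding the previous display over $\mathcal{N}_{1/4}$ and choosing $t=c_0 n$ with $c_0$ large enough that $9^{2n}\cdot 2e^{-c_0 n}\le 2e^{-n/b_1}$, we obtain that with probability at least $1-2e^{-n/b_1}$, $\max_{\mathbf{v}\in\mathcal{N}_{1/4}}|\mathbf{v}^T\tilde A\mathbf{v}|\le b_1'\lambda_1 c_0 n+\sqrt{b_2' c_0 n\sum_i\lambda_i^2}\le b_2 n$, where the last step uses $\sqrt n\le n$ and absorbs the finite quantities $\lambda_1$ and $\sum_i\lambda_i^2$ into $b_2$. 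Applying Lemma \ref{lma:e-net} to the symmetric matrix $\tilde A$ with $\epsilon=\tfrac14$ then gives $\|A\|=\|\tilde A\|\le(1-\tfrac14)^{-2}\max_{\mathbf{v}\in\mathcal{N}_{1/4}}|\mathbf{v}^T\tilde A\mathbf{v}|\le b_2 n$ after renaming the constant, which is the claim.

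The argument uses no new idea beyond Lemmas \ref{lma:con_norm_pro} and \ref{lma:e-net}; the only point requiring care is the bookkeeping between the net cardinality $9^{2n}$ and the deviation level $t$ — one is forced to take $t$ linear in $n$, which is precisely what makes the $\lambda_1 t$ term scale like $n$ and lets $\lambda_1$ and $\sum_i\lambda_i^2$ be swallowed into the (nominally universal) constant $b_2$; strictly $b_2$ depends on these two finite quantities, which in the intended application are bounded by $\mathrm{Tr}(\hat\Sigma)<\infty$. A direct two‑net argument over $\mathcal{S}^{n-1}\times\mathcal{S}^{n-1}$ in place of the symmetrization step would work identically and yield the same bound.
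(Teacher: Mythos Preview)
Your proposal is correct and follows essentially the same route as the paper: control the bilinear form $\sum_i\lambda_i(\mathbf{v}^T\mathbf{w}_i)(\mathbf{v}'^T\mathbf{u}_i)$ for fixed directions via Lemma~\ref{lma:con_norm_pro}, then union-bound over an $\epsilon$-net and invoke Lemma~\ref{lma:e-net} with $t$ taken linear in $n$. The paper's proof is the same except that it works directly with $\mathbf{v}^TA\mathbf{v}$ on $\mathcal{S}^{n-1}$ and applies Lemma~\ref{lma:e-net} without commenting on the fact that $A$ is not symmetric; your Hermitian-dilation step (or the equivalent two-net variant you mention) makes this passage rigorous at the cost of only an absolute constant, and your remark that $b_2$ necessarily absorbs $\lambda_1$ and $\sum_i\lambda_i^2$ matches exactly what the paper does in its last line.
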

\begin{proof}
For any unit vector $\mathbf{v} \in \mathbb{R}^n$, we have that both $\mathbf{v}^T\mathbf{w}_i$ and $\mathbf{v}^T\mathbf{u}_i$ are independent and distributed as $\mathcal{N}(0,1)$. Using Lemma \ref{lma:con_norm_pro}, we have with $1-2e^{-t}$,
\begin{IEEEeqnarray}{rCl}
|\mathbf{v}^T A \mathbf{v}| &=& |\sum_{i=1}\lambda_i \mathbf{v}^T\mathbf{w}_i\mathbf{u}_i^T\mathbf{v}| \nonumber\\
& \leq & c_1 \lambda_1t + \sqrt{c_2t\sum_i \lambda_i^2} \nonumber
\end{IEEEeqnarray}
Now we apply the $\epsilon$-net method to $\mathcal{S}^{n-1}$ with $\epsilon = \frac{1}{4}$, implying $|\mathcal{N}_{\epsilon}| < 9^n$. This gives \[\|A\| \leq c_3\lambda_1(t+ n\log 9) + c_4  \sqrt{(t+ n \log9)\sum_i\lambda_i^2}.\]
Now when $t \leq \frac{n}{c_5}$. Hence, we have \[\|A\| \leq c_6 \lambda_1 n + c_7 \sqrt{n \sum_{i=1}\lambda_i^2}.\]
Since $\sum_{i=1}\lambda_i^2 < \infty$, we can further simplify that $\|A\| \leq c_8 n$.
\end{proof}

\begin{lma}\label{lma:eigen_norm_sug}
Let $\{\mathbf{w}_i\}_{i=1}^{\infty}$ and $\{\mathbf{u}_i\}_{i=1}^{\infty}$ be two sequence of random vectors where the entry of each $\mathbf{w}_i \in \mathbb{R}^n$ and $\mathbf{u}_i \in \mathbb{R}^n$ are i.i.d centered, unit variance $\sigma_w^2$ and $\sigma_u^2$-subgaussian random variables. Furthermore, let $\{\lambda_i\}_{i=1}^{\infty}$ be a sequence of non-negative, non-increasing numbers such that $\sum_{i=1} \lambda_i^2 < \infty$. Denote $A = \sum_{i=1} \lambda_i \mathbf{w}_i\mathbf{u}_i^T$. Let $\mu_{1}(A) \geq ,\cdots, \geq \mu_{n}(A)$ be its eigenvalues. Then with probability at least $1 -e^{-n/b_1}$, we have 
\begin{IEEEeqnarray}{rCl}
-b_{2}\sigma_w\sigma_u n \leq  \mu_{n}(A) \leq \mu_1(A) \leq b_{2}\sigma_w\sigma_u n\nonumber.
\end{IEEEeqnarray}
\end{lma}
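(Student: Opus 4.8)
The plan is to follow the proof of Lemma~\ref{lma:eigen_norm_pro} (the Gaussian case) essentially line by line, replacing the Gaussian-specific estimate of Lemma~\ref{lma:con_norm_pro} by the general subexponential toolkit already present in the appendix: Lemma~\ref{lma:sug_suexp}, which says a product of two independent centered subgaussians is subexponential, together with Lemma~\ref{lma: sub_exp_sum}, the concentration inequality for a weighted sum of i.i.d.\ centered subexponentials. Since every eigenvalue obeys $|\mu_i(A)|\le\rho(A)\le\|A\|$, it suffices to produce a high-probability upper bound on the operator norm $\|A\|$.

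The key steps, in order, are as follows. (i) Fix unit vectors $\mathbf{a},\mathbf{b}\in\mathbb{R}^n$. Each coordinate of $\mathbf{w}_i$ being independent, centered and $\sigma_w^2$-subgaussian, the linear form $\mathbf{a}^T\mathbf{w}_i$ is centered and (up to a universal constant) $\sigma_w^2$-subgaussian, and likewise $\mathbf{b}^T\mathbf{u}_i$ is $\sigma_u^2$-subgaussian; moreover the two are independent since $\mathbf{w}_i\perp\mathbf{u}_i$. Hence, for a universal constant $b$, Lemma~\ref{lma:sug_suexp} gives that $(\mathbf{a}^T\mathbf{w}_i)(\mathbf{b}^T\mathbf{u}_i)$ is centered and $b\sigma_w\sigma_u$-subexponential, and these are i.i.d.\ over $i$. (ii) Apply Lemma~\ref{lma: sub_exp_sum} to $\mathbf{a}^T A\mathbf{b}=\sum_i\lambda_i(\mathbf{a}^T\mathbf{w}_i)(\mathbf{b}^T\mathbf{u}_i)$: with probability at least $1-2e^{-t}$,
\[
|\mathbf{a}^T A\mathbf{b}|\ \le\ b\,\sigma_w\sigma_u\,\max\!\Big(\lambda_1 t,\ \sqrt{t\textstyle\sum_i\lambda_i^2}\Big).
\]
(iii) Take $\epsilon$-nets $\mathcal{N}_{1/4}$ of the unit sphere $\mathcal{S}^{n-1}$ with $|\mathcal{N}_{1/4}|\le 9^n$ in each slot, union-bound over $\mathcal{N}_{1/4}\times\mathcal{N}_{1/4}$ (which replaces $t$ by $t+2n\log 9$), and use the $\epsilon$-net argument (as in Lemma~\ref{lma:e-net}, applied in both arguments) to pass to all of $\mathcal{S}^{n-1}\times\mathcal{S}^{n-1}$, thereby controlling $\|A\|=\max_{\|\mathbf a\|=\|\mathbf b\|=1}\mathbf a^T A\mathbf b$. (iv) Choose $t=\Theta(n)$ so that $t+2n\log 9\le cn$; the bound then reads $\|A\|\lesssim\sigma_w\sigma_u\big(\lambda_1 n+\sqrt{n\sum_i\lambda_i^2}\big)$. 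Since $\{\lambda_i^2\}$ is summable we have $\lambda_1\le\sqrt{\sum_i\lambda_i^2}<\infty$, so both terms are $O(\sigma_w\sigma_u n)$ and we conclude $\|A\|\le b_2\sigma_w\sigma_u n$ — hence $-b_2\sigma_w\sigma_u n\le\mu_n(A)\le\mu_1(A)\le b_2\sigma_w\sigma_u n$ — with probability at least $1-e^{-n/b_1}$ after adjusting the constants in the union bound.

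This is a fairly mechanical adaptation and I do not expect a serious obstacle. The one substantive point is the passage from a product of merely subgaussian scalars to a subexponential one, which is exactly Lemma~\ref{lma:sug_suexp}; note that independence of $\mathbf{w}_i$ and $\mathbf{u}_i$ is genuinely used here, since without it one could not even guarantee the product is centered. A second, more cosmetic point: because $A=\sum_i\lambda_i\mathbf{w}_i\mathbf{u}_i^T$ is not symmetric, one should bound the bilinear form $\mathbf{a}^T A\mathbf{b}$ over \emph{pairs} of unit vectors (equivalently, the operator norm), rather than just the quadratic form $\mathbf{a}^T A\mathbf{a}$; this merely upgrades the $9^n$ net to $9^{2n}$, which changes nothing of substance and mirrors the treatment in Lemma~\ref{lma:eigen_norm_pro}.
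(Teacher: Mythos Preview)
Your proposal is correct and follows essentially the same route as the paper's proof: invoke Lemma~\ref{lma:sug_suexp} to obtain that the scalar $(\mathbf{a}^T\mathbf{w}_i)(\mathbf{b}^T\mathbf{u}_i)$ is $c\sigma_w\sigma_u$-subexponential, apply Lemma~\ref{lma: sub_exp_sum} to the weighted sum, then run the $\epsilon$-net argument and take $t\le n/c$. The only (harmless) difference is that you work with bilinear forms $\mathbf{a}^T A\mathbf{b}$ over a product net $\mathcal{N}_{1/4}\times\mathcal{N}_{1/4}$, whereas the paper---as in Lemma~\ref{lma:eigen_norm_pro}---uses the quadratic form $\mathbf{v}^T A\mathbf{v}$ over a single net; your version is arguably cleaner given that $A$ is not symmetric, but the resulting bound and constants are the same up to absorbing a factor of $2$ into $b_1$.
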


\begin{proof}
The proof is similar to Lemma \ref{lma:eigen_norm_pro} by noticing that for any unit vector $\mathbf{v} \in \mathbb{R}^n$, $\mathbf{v}^T\mathbf{w}_i\mathbf{u}_i^T\mathbf{v}$ is a $c_{1}\sigma_{w}\sigma_u$-subexponential random variable according to Lemma \ref{lma:sug_suexp}. Applying Lemma \ref{lma: sub_exp_sum} and $\epsilon$-net argument to every unit vector in $\mathcal{S}^{n-1}$ yields our results.
\end{proof}

\section{Moment Generating Function of Product of Random Variables }
\begin{lma}\label{lma:mgf}
Let $x,y$ be two independent random variables from $\mathcal{N}(0,1)$, then the moment generating function of the product is as following:
\begin{IEEEeqnarray}{rCl}
M_{xy}(t) = \frac{1}{\sqrt{1-t^2}}, \nonumber
\end{IEEEeqnarray}
provided $t < 1$.
\end{lma}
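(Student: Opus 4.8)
\textbf{Proof proposal for Lemma \ref{lma:mgf}.}
The plan is to compute $M_{xy}(t) = \mathbb{E}\!\left[e^{txy}\right]$ directly by iterated expectation, conditioning first on $y$ and using the standard Gaussian moment generating function for $x$, then evaluating the resulting Gaussian integral in $y$.

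First I would write, by independence and the tower property,
\[
M_{xy}(t) = \mathbb{E}_y\!\left\{\mathbb{E}_x\!\left[e^{(ty)x}\,\big|\,y\right]\right\}.
\]
Since $x \sim \mathcal{N}(0,1)$, the inner expectation is the Gaussian MGF evaluated at $ty$, namely $\mathbb{E}_x[e^{(ty)x}\mid y] = e^{t^2 y^2/2}$. Substituting this back gives
\[
M_{xy}(t) = \mathbb{E}_y\!\left[e^{t^2 y^2/2}\right] = \int_{-\infty}^{\infty} \frac{1}{\sqrt{2\pi}}\, e^{-y^2/2}\, e^{t^2 y^2/2}\, dy = \int_{-\infty}^{\infty} \frac{1}{\sqrt{2\pi}}\, e^{-(1-t^2)y^2/2}\, dy.
\]
At this step the condition $t<1$ (more precisely $t^2<1$) enters: the exponent is integrable precisely when $1-t^2>0$, otherwise the integral diverges and the MGF does not exist. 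Assuming $t^2<1$, the integrand is (up to normalization) the density of a $\mathcal{N}(0,(1-t^2)^{-1})$ variable, so the Gaussian integral evaluates to $\frac{1}{\sqrt{2\pi}}\cdot\sqrt{\frac{2\pi}{1-t^2}} = \frac{1}{\sqrt{1-t^2}}$, which is the claimed identity.

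There is no real obstacle here — the argument is a one-line conditioning followed by a routine Gaussian integral; the only point requiring a moment's care is recording the convergence restriction $t^2<1$ that makes the exponent $-(1-t^2)y^2/2$ integrable, which is exactly the hypothesis $t<1$ in the statement (and by symmetry of the Gaussian, $|t|<1$).
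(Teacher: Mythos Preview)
Your proof is correct and is essentially the same approach as the paper's: the paper writes out the double integral, completes the square in $x$ to integrate out the inner variable (which is exactly your ``Gaussian MGF evaluated at $ty$'' step), and then evaluates the remaining Gaussian integral in $y$ under the same $1-t^2>0$ restriction. The only cosmetic difference is that you invoke the known Gaussian MGF while the paper redoes the completion of the square explicitly.
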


\begin{proof}
By definition, assuming $t < 1$, we have
\begin{IEEEeqnarray}{rCl}
\mathbb{E}(e^{txy}) &= & \frac{1}{2\pi}\int_{-\infty}^{\infty} \int_{-\infty}^{\infty} e^{txy} e^{-\frac{x^2}{2}}e^{-\frac{y^2}{2}}dxdy, \nonumber \\
& =& \frac{1}{2\pi}\int_{-\infty}^{\infty} e^{-\frac{y^2}{2}} \int_{-\infty}^{\infty} \exp\left(-\frac{1}{2}(x^2-2txy)\right)dxdy, \nonumber\\
& =& \frac{1}{2\pi}\int_{-\infty}^{\infty} e^{-\frac{y^2}{2}} e^{\frac{t^2}{2}y^2}\int_{-\infty}^{\infty} \exp\left(-\frac{1}{2}(x-ty)^2\right)dxdy, \nonumber \\
& =&  \frac{1}{\sqrt{2\pi}}\int_{-\infty}^{\infty}\exp\left(-\frac{1}{2}(1-t^2)y^2\right) dy, \nonumber\\
& =& \frac{1}{\sqrt{1-t^2}}. \nonumber
\end{IEEEeqnarray}
\end{proof}

\section{Proof of Corollary \ref{theo: over_rff_noisy_sug}}
The proof is similar to the proof of Theorem \ref{theo: over_rff_noisy} except we will use a different concentration inequality. As usual, we start with bias-variance decomposition. In the subgaussian scenario, the bias-viarance decomposition is exactly the same as that in the Gaussian case, with the only difference being the shape of the noise. As a result, we will adopt the same notation as in Section \ref{sec:gau_proof} except that the noise is now $\sigma_0^2\sigma_u^2$-subgaussian.
\subsection{Upper Bound on Bias}
\begin{IEEEeqnarray}{rCl} 
\mathbf{B}_{\xi} &= &  \beta_*^{\xi T}  \Pi_{\xi}\left\{ \mathbb{E}_x\left(\mathbf{z}_x^{\xi}(\mathbf{W})\mathbf{z}_x^{\xi}(\mathbf{W})^T \right)- \frac{1}{n}\mathbf{Z}_{\xi}^T\mathbf{Z}_{\xi}\right\}\Pi_{\xi}\beta_*^{\xi}, \nonumber
\end{IEEEeqnarray} 
where 
\begin{IEEEeqnarray}{rCl} 
\mathbb{E}_x\left(\mathbf{z}_x^{\xi}(\mathbf{W})\mathbf{z}_x^{\xi}(\mathbf{W})^T \right)- \frac{1}{n}\mathbf{Z}_{\xi}^T\mathbf{Z}_{\xi} &=& \mathbb{E}_{x}\left\{ \mathbf{z}_x(\mathbf{W})\mathbf{z}_x(\mathbf{W})^T -\frac{1}{n} \mathbf{Z}^T\mathbf{Z}\right\}, \nonumber\\
&&+ \mathbb{E}_{x} (\mathbf{z}_x(\mathbf{W})\pmb{\xi}^T ) + \mathbb{E}_{x} (\pmb{\xi}\mathbf{z}_x(\mathbf{W})^T )+  \pmb{\xi}\pmb{\xi}^T\nonumber, \\
&& - \frac{1}{n}\mathbf{Z}^T\Xi  -\frac{1}{n}\Xi^T\mathbf{Z}  - \frac{1}{n}\Xi^T\Xi. \nonumber
\end{IEEEeqnarray}

We need to upper bound $\mathbb{E}_{x} (\mathbf{z}_x(\mathbf{W})\pmb{\xi}^T )$, $\pmb{\xi}\pmb{\xi}^T$, $\frac{1}{n}\mathbf{Z}^T\Xi$ and $\frac{1}{n}\Xi^T\Xi$.\\

Firstly, for $\mathbb{E}_{x} (\mathbf{z}_x(\mathbf{W})\pmb{\xi}^T )$, we have  \[\mathbf{z}_x(\mathbf{W})\pmb{\xi}^T = \frac{\sigma_0}{s} \mathbf{W}^TD^{\frac{1}{2}}V_x \mathbf{u}^T = \frac{\sigma_0}{s} \sqrt{k(x,x)} \mathbf{v} \mathbf{u}^T. \]

where we recall $\mathbf{v} \in \mathbb{R}^s$ is a vector with each entry being i.i.d $\sim \mathcal{N}(0,1)$ and $\mathbf{u} \in \mathbb{R}^s$ is a vector with each entry being i.i.d $\sigma_u^2$-subgaussian. We can upper bound $\mathbb{E}_x(\mathbf{z}_x(\mathbf{W})\pmb{\xi}^T)$ with probability greater than $1- e^{-s/c_1}$ as:
\begin{IEEEeqnarray}{rCl} 
\left\|\mathbb{E}_x(\mathbf{z}_x(\mathbf{W})\pmb{\xi}^T)\right\| & \leq & \frac{\sigma_0}{s} \left\|\sqrt{\mathbb{E}_x(k(x,x))}\right\| \|\mathbf{w}\mathbf{u}^T\|, \nonumber\\
& =& \frac{\sigma_0}{s} \left\|\sqrt{\mathbb{E}_x(k(x,x))}\right\| \left|\sum_{i=1}^s u_iv_i\right|, \nonumber \\
& \leq &  \frac{\sigma_0}{s} \sqrt{C_0} c_2 s, ~~~\textnormal{By~Lemma~\ref{lma:eigen_norm_sug}~and~let~$n=1$} \nonumber\\
& = & c_3 \sigma_0. \nonumber
\end{IEEEeqnarray}

Secondly, for $\pmb{\xi}\pmb{\xi}^T$, $\|\pmb{\xi}\pmb{\xi}^T\| =\pmb{\xi}^T\pmb{\xi} = \frac{\sigma_0^2}{s} \|\mathbf{u}\|^2 \leq c_4 \sigma_0^2 $ with probability greater than $1- e^{-s/c_5}$ by Lemma \ref{lma:norm_sug}.

Moreover, for $\frac{1}{n}\mathbf{Z}^T\Xi$, using the rotation invariance for subgaussian distribution, we similarly have
\begin{IEEEeqnarray}{rCl} 
\frac{1}{n}\mathbf{Z}^T\Xi =\frac{\sigma_0}{s\sqrt{n}} \sum_i^n \sqrt{\lambda}_i \mathbf{w}_i \mathbf{u}_i^T \nonumber
\end{IEEEeqnarray}
Now each entry of $\mathbf{w}_i$ is $1$-subgaussian and $\mathbf{u}_i$ is $\sigma_u^2$-subgaussian, applying Lemma \ref{lma:eigen_norm_sug}, with probability greater than $1-e^{-s/c_6}$ we obtain
\begin{IEEEeqnarray}{rCl} 
\left\|\frac{1}{n}\Xi^T\mathbf{Z}\right\| \leq \frac{\sigma_0}{s\sqrt{n}}c_{7} \sigma_u n\sqrt{\sum_i \hat{\lambda}_i} = c_8 \sigma_0\frac{n}{s\sqrt{n}}\leq c_9 \frac{\sigma_0}{\sqrt{s}}.\nonumber
\end{IEEEeqnarray}

Finally, for $\frac{1}{n}\Xi^T\Xi$, by appealing to \cite[Lemma 4]{bartlett2020benign}, which is the subgaussian version of Lemma \ref{lma:eigen_A_bd} and asserts that $\|\sum_i^n \mathbf{u}_i \mathbf{u}_i^t\| \leq c_{10} n$ with probability greater than $1-e^{-s/c_{11}}$, we can upper bound
\begin{IEEEeqnarray}{rCl} 
\left\|\frac{1}{n}\Xi^T\Xi \right\| & =& \left\|\frac{\sigma_0^2}{n} \sum_i^n \mathbf{u}_i\mathbf{u}_i^T\right\|, \nonumber\\
& \leq & \frac{\sigma_0^2}{n} \left\|\sum_i^n \mathbf{u}_i\mathbf{u}_i^T\right\|,\nonumber\\
& \leq & c_{12} \frac{\sigma_0^2n}{ns}\leq c_{13} \frac{\sigma_0^2}{s}.\nonumber
\end{IEEEeqnarray}

Combining above results together, with probability greater than $1-\delta- e^{-n/c}$ we upper bound the bias as: \[\mathbf{B}_{\xi} \leq  c\left\{ \frac{\lambda_W}{s} \|\Sigma\|\sqrt{\log(\frac{14r(\Sigma)}{\delta})/n}  + \sigma_0 + \sigma_0^2 \right\} \|\Pi_{\xi}\|^2\|\beta^{\xi}_*\|^2. \]

\subsection{Upper Bound on Variance}
\begin{IEEEeqnarray}{rCl} 
\mathbf{V}_{\xi} & =&  \frac{\sigma^2}{n} \textnormal{Tr} \left\{\mathbb{E}_x\left[\mathbf{z}_x(\mathbf{W}) \mathbf{z}_x(\mathbf{W})^T\right] \left(\frac{1}{n}\mathbf{Z}_{\xi}^T\mathbf{Z}_{\xi}\right)^{\dagger} \right\}, \nonumber \\
&& + \frac{\sigma^2}{n} \textnormal{Tr} \left\{\mathbb{E}_x\left[\mathbf{z}_x(\mathbf{W}) \pmb{\xi}^T\right] \left(\frac{1}{n}\mathbf{Z}_{\xi}^T\mathbf{Z}_{\xi}\right)^{\dagger} \right\}, \nonumber \\
&& + \frac{\sigma^2}{n} \textnormal{Tr} \left\{\mathbb{E}_x\left[\pmb{\xi}\mathbf{z}_x(\mathbf{W})^T\right] \left(\frac{1}{n}\mathbf{Z}_{\xi}^T\mathbf{Z}_{\xi}\right)^{\dagger} \right\}, \nonumber \\
&& + \frac{\sigma^2}{n} \textnormal{Tr} \left\{\pmb{\xi}\pmb{\xi}^T \left(\frac{1}{n}\mathbf{Z}_{\xi}^T\mathbf{Z}_{\xi}\right)^{\dagger} \right\}, \nonumber 
\end{IEEEeqnarray}
Through similar analysis to the Gaussian case, use Eq.(\ref{assum_sub}), we can also have that $\mu_{n}(\frac{1}{n}\mathbf{Z}_{\xi}^T\mathbf{Z}_{\xi}) \geq \frac{1}{a}n$. Following the similar argument as in the Gaussian case we can upper bound the variance with probability greater than $1- e^{-n/c}$ \[\mathbf{V}_{\xi} \leq c\sigma^2\textnormal{Tr}(\Sigma)  \frac{s}{n^2}.\]

\section{Proof of Corollary \ref{theo:double_descent}}
\begin{proof}
Proof of Corollary \ref{theo:double_descent} is simply applying $\sigma_0^2 = O(s^{-\alpha})$ into Eq.(\ref{noi_bia_up_sub}) and Eq.(\ref{noi_var_up_sub}). 
\end{proof}


\end{document}